\documentclass{article} 
\usepackage{iclr2026_conference,times}

\usepackage{hyperref}
\usepackage{url}

\title{New Hybrid Fine-Tuning Paradigm for LLMs: \\ Algorithm Design and Convergence Analysis Framework}

\author{Shaocong Ma\\
Department of Computer Science\\
University of Maryland\\
College Park, MD 20742, USA \\
\texttt{scma0908@umd.edu} 
\And
Peiran Yu\\
Department of Computer Science and Engineering\\
University of Texas Arlington\\
Arlington, TX 76019, USA \\
\texttt{peiran.yu@uta.edu} 
\AND
Heng Huang\thanks{This work was partially supported by NSF IIS 2347592, 2348169, DBI 2405416, CCF 2348306, CNS 2347617, RISE 2536663.}\\
Department of Computer Science\\
University of Maryland\\
College Park, MD 20742, USA \\
\texttt{heng@umd.edu} 
}

\iclrfinalcopy 

\usepackage{booktabs}       
\usepackage{amsfonts}       
\usepackage{xcolor}         

\usepackage{times}
\usepackage{soul}
\usepackage{url} 
\usepackage{graphicx}
\usepackage{amsmath}
\usepackage{amsthm} 
\usepackage{algorithm}
\usepackage{algorithmic}
\usepackage[ruled,vlined, algo2e]{algorithm2e} 

\SetCommentSty{mycommfont} 
 
\usepackage{amsfonts}
\usepackage{amssymb}
\usepackage{mathrsfs}   
\usepackage{xcolor} 
\usepackage{graphicx}
\usepackage{float}    
\usepackage{array}
\usepackage{tabularx}
\usepackage{multirow}
\usepackage{enumerate} 
\usepackage{subcaption}
\PassOptionsToPackage{table,xcdraw}{xcolor}  
\usepackage[capitalize,noabbrev,nameinlink]{cleveref}
\usepackage{amsthm}

\newtheorem{theorem}{Theorem}
\newtheorem{definition}{Definition}

\newtheorem{assumption}{Assumption}
\newtheorem{lemma}{Lemma}

\newcommand\myshade{85}  
\colorlet{mylinkcolor}{blue!\myshade!black}
\colorlet{mycitecolor}{blue!\myshade!black}
\colorlet{myurlcolor}{blue!\myshade!black}
\hypersetup{
  linkcolor  = mylinkcolor,
  citecolor  = mycitecolor,
  urlcolor   = myurlcolor,
  colorlinks = true,
}

\Crefname{equation}{Eq.}{Eqs.}

\usepackage{colortbl}  

\newcommand{\EE}{\mathbf{E}}

\newcommand{\calE}{\mathcal{E}}

\newcommand{\calO}{\mathcal{O}}

\newcommand{\tf}[1]{\cellcolor[HTML]{ADD8E6}\textbf{#1}}

\newcommand{\PP}{\mathbb{P}}

\newcommand{\RR}{\mathbb{R}}

\makeatletter
\newcommand*\dotp{\mathpalette\dotp@{.5}}
\newcommand*\dotp@[2]{\mathbin{\vcenter{\hbox{\scalebox{#2}{$\m@th#1\bullet$}}}}}
\makeatother

\crefname{algocf}{alg.}{algs.}
\Crefname{algocf}{Algorithm}{Algorithms} 
\usepackage{makecell}

\usepackage[most]{tcolorbox}
\usepackage[toc,page,header]{appendix}
\usepackage{minitoc}

\newtcolorbox{bluebox}{
  colback=blue!3!white,
  colframe=blue!60!black,
}
\begin{document}


\maketitle

\begin{abstract}
Fine-tuning Large Language Models (LLMs) typically involves either full fine-tuning, which updates all model parameters, or Parameter-Efficient Fine-Tuning (PEFT), which adjusts a small subset of parameters. However, both approaches have inherent limitations: full fine-tuning is computationally expensive, while PEFT often struggles to learn new knowledge and exhibits suboptimal performance.  To overcome these issues, we propose a novel \textit{hybrid fine-tuning} approach that jointly updates both  LLMs and PEFT modules  using a combination of zeroth-order and first-order optimization methods. To analyze our new algorithm, we develop a theoretical framework centered on the concept of \textit{hybrid smoothness condition}, which accounts for the heterogeneous nature of the optimization landscape in joint LLM and PEFT training. We derive a rigorous convergence analysis for the convergence of reshuffling-type SGD algorithm under multiple learning rates and demonstrate its effectiveness through extensive empirical studies across various downstream tasks and model architectures. On the practical side, our results demonstrate consistent performance improvement, making the approach a viable solution for large-scale language model fine-tuning.
\end{abstract}

\section{Introduction}
Large Language Models (LLMs) have emerged as an important paradigm in natural language processing (NLP), demonstrating remarkable capabilities across a wide range of tasks. To adapt these models for specific domains or to modify their core behaviors, researchers commonly employ  full fine-tuning for downstream tasks \citep{malladi2023fine,zhang2024revisiting,vm2024fine,minaee2024large}, which updates all parameters of an LLM. However, this method is extremely computationally expensive, requiring the calculation of gradients for the entire model. To address this limitation, two common approaches have introduced: (1)~\textit{Zeroth-order full fine-tuning} \citep{malladi2023fine, zhang2024revisiting, gautam2024variance, tang2024private, wang2024simultaneous, wang2025zo2}: This type of methods approximates gradients without directly computing them, reducing computational overhead while still  updating all model parameters. (2)~\textit{Parameter-Efficient Fine-Tuning (PEFT) methods} \citep{lester2021power,hu2021lora,li2021prefix}: These techniques aim to adapt LLMs by tuning only a small portion of parameters while keeping the base model frozen. 

However, directly applying either of these methods has been shown to be insufficient: As pointed out by \citep{gudibande2023false} and \citep{ghoshcloser2024}, the PEFT method (\emph{e.g.} LoRA) does not learn new knowledge, while the zeroth-order full fine-tuning suffers from  slow convergence due to the lack of gradient information \citep{nesterov2017random}. These limitations highlight a critical gap in current approaches, leading to the following question: 
\begin{center}
\begin{minipage}{0.86\linewidth}
\begin{bluebox} 
\textit{\textbf{Q1:}  How can we achieve both benefits of full fine-tuning  and PEFT methods while maintaining the efficiency?} 
\end{bluebox}
\end{minipage}
\end{center}

\begin{figure}[t]
\centering
\begin{subfigure}[t]{.4\textwidth}
  \centering
  \includegraphics[width=\linewidth]{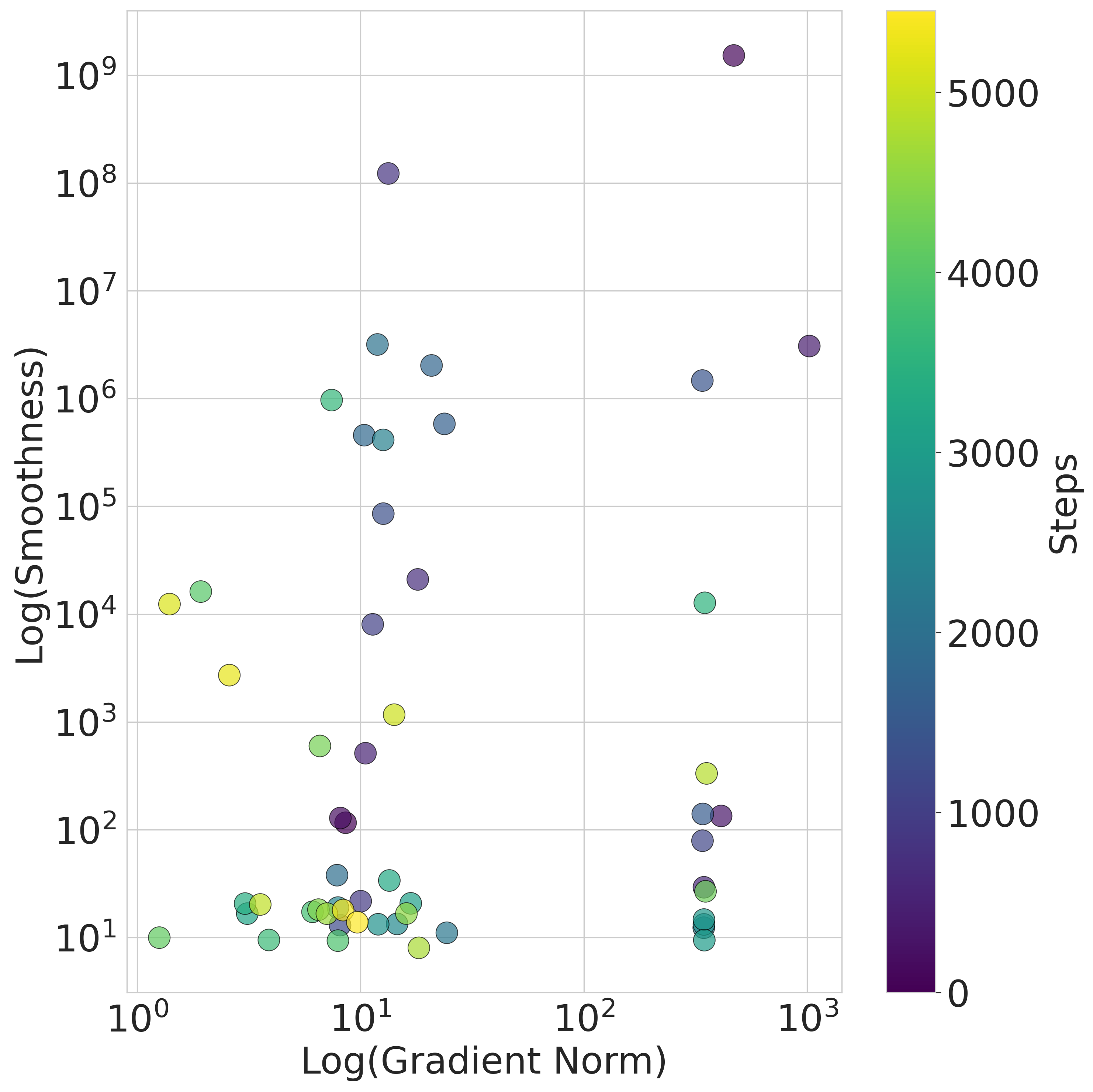}
  \caption{Log-scale comparison of gradient norm to local gradient Lipschitz constant in OPT-125M \citep{zhang2022opt} training on SST2 dataset \citep{socher2013recursive}. The colorbar indicates the number of gradient updates.}
\label{fig:smoothness-local}
\end{subfigure} \hspace{1cm} 
\begin{subfigure}[t]{.4\textwidth}
  \centering
  \includegraphics[width=\linewidth]{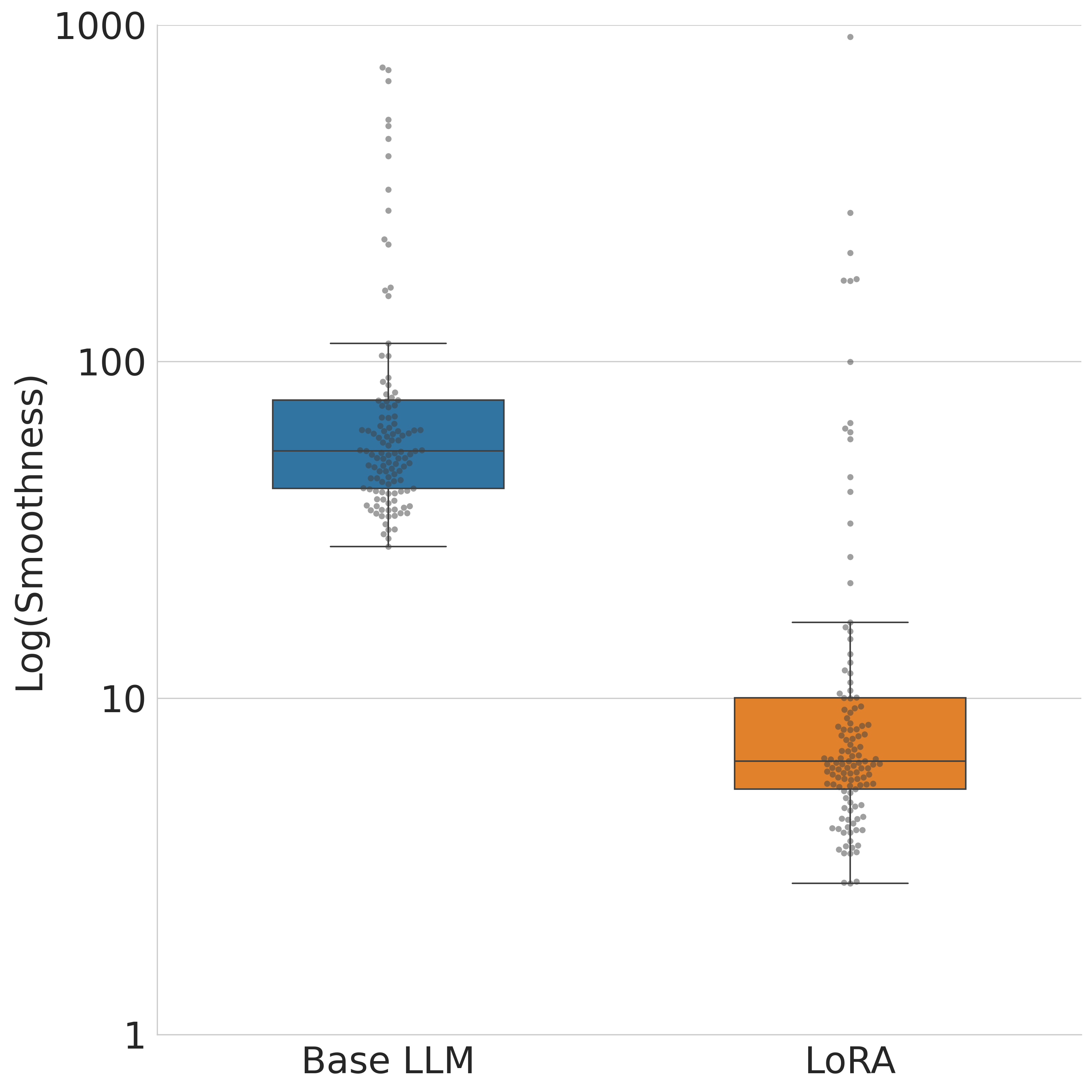}
  \caption{Comparison of gradient Lipschitz constant $L$ for different modules (OPT-125M and LoRA \citep{hu2021lora}). The base LLM exhibits a significantly larger, necessitating a smaller learning rate in  gradient updating.} 
  \label{fig:module-smoothness}
\end{subfigure} 
\caption{Visualization of smoothness structures in hybrid fine-tuning a large language model. These complex characteristics pose new challenges for the convergence analysis of traditional optimization algorithms, motivating us to consider a relaxed smoothness condition, \textit{hybrid smoothness condition} (\Cref{def:smooth}), for the hybrid fine-tuning method.}
\end{figure}

To address this question, we propose a novel approach, \textit{hybrid fine-tuning}, which jointly updates both the PEFT module and the LLM. 
We integrate both first-order (FO) and zeroth-order (ZO) optimization techniques for conducting PEFT and updating the base model simultaneously. 
By leveraging ZO methods, we can perform fine-tuning on the base LLM without calculating the full gradient, thereby effectively learning new knowledge. Meanwhile, our method updates PEFT modules using the FO gradient information, speeding up  traditional ZO full fine-tuning.

To assess efficiency, we analyze the convergence of our proposed approach, which presents new theoretical challenges in the analysis. As demonstrated in existing literature \citep{zhang2019gradient,carmon2020lower}, the optimal learning rate is closely tied to the local smoothness of the loss landscape. 
This dependence is especially critical in our setting, where the complex architecture of modern LLMs and the heterogeneous nature of hybrid fine-tuning introduce \textbf{two key challenges}:
\begin{enumerate}[a)]  
    \item \textbf{A dynamic changing gradient Lipschitz constant.} The local smoothness structure of LLMs evolves dynamically during training. This phenomenon, first observed by \citep{zhang2019gradient} for LSTM-based language models, extends to transformer-based architectures, underscoring the complexity of LLM fine-tuning. The \Cref{fig:smoothness-local} illustrates this dynamic behavior in OPT-125M \citep{zhang2022opt}, a transformer-based language model.

    \item \textbf{Heterogeneous smoothness across parameters.} The base LLM and PEFT modules exhibit distinct smoothness characteristics. Due to differences in architecture and scale, components in our proposed hybrid fine-tuning approach naturally possess diverse smoothness properties. This heterogeneity is demonstrated in the \Cref{fig:module-smoothness}, which compares the gradient Lipschitz constants between the base LLM and the LoRA module.
\end{enumerate}


These challenges highlight a significant gap between existing theoretical frameworks and the practical implementation of hybrid fine-tuning methods: Traditional convergence analysis of optimization algorithms cannot be applicable for such complicated loss surface,  which also leads to the following central question:

\begin{center}
\begin{minipage}{0.86\linewidth}
\begin{bluebox} 
\textit{\textbf{Q2:}  How can we develop a unified theoretical framework that accurately characterizes the convergence of SGD for hybrid fine-tuning while accounting for their distinct characteristics and behaviors?} 
\end{bluebox}
\end{minipage}
\end{center}

To answer this question, we develop a novel theoretical framework centered around the concept of \textit{hybrid smoothness condition}.  This framework provides a more accurate characterization of the optimization landscape in joint LLM and PEFT training, enabling rigorous analysis of convergence properties and optimization dynamics. 
Our main contributions are summarized as follows:
\begin{enumerate}[(1)]
    \item We propose the hybrid fine-tuning paradigm, a novel approach that addresses the limitations of both full fine-tuning and traditional PEFT methods. By combining zeroth-order optimization for LLMs with first-order methods for PEFT modules, we achieve a balance between adaptation power and computational efficiency. This innovative strategy further reveals the  \textit{hybrid smoothness condition} (\Cref{def:smooth}) of the hybrid structure, highlighting  a new theoretical challenge  arising from the heterogeneous structure of joint LLM and PEFT optimization.

    \item To address the challenge posed by \textit{hybrid smoothness condition}, we introduce a unified theoretical framework for analyzing hybrid optimization problems arising in hybrid fine-tuning. Within this framework, we establish the convergence of SGD with Random Reshuffling (\Cref{thm:main}), addressing a previously unresolved gap in optimization theory. Notably, our analysis extends the optimal sample complexity guarantees from the standard smooth loss class to the more general hybrid smooth loss function class.
    
    

    \item We conduct extensive empirical studies to evaluate the effectiveness of our hybrid fine-tuning approach across a diverse set of downstream tasks and model architectures. As shown in \Cref{fig:comparision-small-size} and \Cref{results}, hybrid fine-tuning consistently outperforms existing methods across $18$ model-task combinations (spanning three architectures and six tasks), achieving the highest accuracy in $94.5\%$ of the cases ($17$ out of $18$). Empirical evidence of faster convergence is further validated in \Cref{fig:sst2-opt-prompt}. Notably, these improvements incur no additional memory overhead compared to the FO counterpart, as demonstrated in \Cref{tab:memory_comparison}.
\end{enumerate}

\section{Hybrid Fine-Tuning and Hybrid Smoothness Condition}


\subsection{Our Proposed Method: The Hybrid Fine-Tuning}\label{sec:hybrid-fine-tuning}
To balance the adaptation power of full fine-tuning with the efficiency of PEFT, we introduce \emph{hybrid fine-tuning}, where both the base LLM and a lightweight PEFT module are updated jointly. 

\paragraph{Methodology.} Our \emph{hybrid fine-tuning} approach jointly updates both the PEFT module parameters  and the base LLM parameters. 
The parameters updating tasks can be formulated as a class of optimization problems where the parameter space is partitioned into two distinct subspaces:  Let $x \in \mathbb{R}^{d_x}$ denote the LLM parameters and $y \in \mathbb{R}^{d_y}$ the PEFT module parameters, $d=d_x+d_y$.
For a dataset $\mathcal{D}=\{\xi_i\}_{i=1}^n$ we minimize the empirical loss:
\begin{align}\label{eq:opt-hybrid}
    \min_{(x,y)\in \RR^d} \ f(x, y):= \frac{1}{n}\sum_{i=1}^n f(x, y;i)\,.
\end{align}
In hybrid fine-tuning, we leverage ZO optimization for the $x$ parameters, which avoids computing the full gradient and thus significantly reduces memory requirements. Simultaneously, we update the much smaller PEFT module parameters (the $y$ parameter) using the FO gradient information, which leads to faster convergence and better performance compared to solely ZO methods.   Our algorithm is described in \Cref{alg:sgd}:


\begin{algorithm2e}[h]
    \SetAlgoLined
    \KwIn{Learning rate $\eta=\begin{bmatrix}
        \eta_x & \eta_y
    \end{bmatrix}$ , number of epochs $T$, dataset $\mathcal{D} = \{\xi_i\}_{i=1}^n$}
    Initialize the parameter at $(x_0,y_0)$\; 
    \For{$t = 1$ \KwTo $T$}{
        Shuffle the dataset $\mathcal{D}$ to obtain $\mathcal{D}_t$\;
        $x_{t,0}, y_{t,0} \leftarrow x_{t-1}, y_{t-1} $\; 
        \For{$i = 1$ \KwTo $n$}{ 
            $\begin{bmatrix}
            x_{t,i} \\ 
            y_{t,i}
        \end{bmatrix} \leftarrow \begin{bmatrix}
            x_{t,i-1} \\ 
            y_{t,i-1}
        \end{bmatrix} - \begin{bmatrix}
            \eta_x  & 0 \\ 
            0 & \eta_y \\ 
        \end{bmatrix}\begin{bmatrix}
            \hat{\nabla}_x  f(x_{t,i}, y_{t,i}; \xi_{t,i}) \\
            {\nabla_y } f(x_{t,i}, y_{t,i}; \xi_{t,i})
        \end{bmatrix}$\;
        }
    }
    $x_t \leftarrow x_{t,n}$\;
    \KwOut{Final parameters $x_T$}
    \caption{SGD with Random Reshuffling for Hybrid Fine-Tuning}\label{alg:sgd}
\end{algorithm2e}


The optimization strategy is implemented using SGD with \textit{random reshuffling}, a common practice in deep learning \citep{paszke2019pytorch, abadi2016tensorflow} demonstrated improved efficiency in existing theoretical literature \citep{ma2020understanding,safran2020good,mishchenko2020random,gurbuzbalaban2021random,liu2024last}. Here, $\eta_x$ and $\eta_y$ are the learning rates for $x$ and $y$ parameters, respectively, $T$ is the total number of epochs, $\mathcal{D} = \{\xi_i\}_{i=1}^n$ is the dataset with $n$ samples, and  $x_{t,i}$ and $y_{t,i}$ are the parameter values after the $i$-th iteration of the $t$-th epoch.
$\hat{\nabla}_x f$ and $\nabla_y f$ are the stochastic gradients with respect to $x$ and $y$. Here, we use $\hat{\nabla}_x f$ to represent the gradient estimator of ${\nabla}_x f$. It is commonly estimated using the two-point gradient estimator defined as follows:
\begin{align}
\hat{\nabla}_x f(x,y;\xi):= \frac{f(x+\mu v,y;\xi) - f(x,y;\xi)}{\mu} v,\label{two_side}
\end{align}
where $v$ is a random Gaussian vector with identity covariance matrix  (that is, $v \sim N(0, I_d)$) and $\mu$ is the perturbation stepsize. 

\subsection{Challenges in Hybrid Fine-Tuning: The Hybrid Smoothness Condition}\label{sec:challenges}

Besides intuitively designing the hybrid fine-tuning strategy, we conduct rigorous theoretical analysis on  the convergence of \Cref{alg:sgd}. However, the convergence analysis reveals \textbf{theoretical challenges} stemming from the complex optimization landscape of hybrid fine-tuning. 
Traditional analysis often relies on the $L$-smoothness assumption, which states that the gradient is Lipschitz continuous with a constant $L$, or equivalently, $\nabla^2 f(w) \preceq L I_d$. While $L$-smoothness has been demonstrated to hold for all smooth functions over a compact domain \citep{hewitt2012real}, this assumption is often too restrictive for deep learning models and particularly for our proposed hybrid setting. We recap these limitations we have introduced:
\begin{enumerate}[a)]
\item \textbf{The gradient Lipschitz constant $L$ is dynamically changing during training.} The constant $L$ usually fails to maintain uniformity over the entire parameter space. In many practical scenarios, different regions of the parameter space may exhibit vastly different smoothness properties. For instance, \citep{zhang2019gradient} has demonstrated that the local smoothness constant $L$ is linear in the gradient norm. We also have illustrated this non-uniformity for transformer-based language models in \Cref{fig:smoothness-local}.
\item \textbf{The gradient Lipschitz constant $L$ can be different for different parameters.} The base LLM (the $x$ parameter) and the PEFT module (the $y$ parameter) inherently possess different structural properties and scales. For example, small randomly-initialized modules often have smaller $L$ compared to large pre-trained neural networks. This consideration becomes particularly crucial in hybrid systems where we deal with fundamentally different types of parameters. We have illustrated this point in the \Cref{fig:module-smoothness}: The LoRA module   demonstrates a substantially lower $L$ value compared to the base LLM. 
\end{enumerate} 

To rigorously characterize the phenomenon observed in hybrid fine-tuning, we adapt the concept of generalized smoothness \citep{zhang2019gradient, li2024convex} to our hybrid setting, leading to the \textit{hybrid smoothness} condition:

\begin{definition}[Hybrid smoothness]\label{def:smooth}
A function $f: \RR^{d_x} \times \RR^{d_y} \to \RR$ has the hybrid generalized smoothness property if there exist two non-negative non-decreasing sub-quadratic functions $\ell_x: \RR_{\ge 0} \to \RR_{\ge 0}$ and $\ell_y: \RR_{\ge 0} \to \RR_{\ge 0}$ such that for all $(x,y)$:
    $$ \begin{bmatrix}
        \ell_x(\|\nabla f(x,y)\|) I_{d_x} &  0\\
        0 & \ell_y(\|\nabla f(x,y)\|) I_{d_y}\\
    \end{bmatrix}  \succeq  \nabla^2 f(x,y) .$$
\end{definition}

This definition allows the smoothness bound to depend on the current gradient norm and differ between the $x$ and $y$ parameter blocks. It naturally captures the dynamic and heterogeneous nature of the optimization landscape. It is easy to see that the standard $L$-smoothness is a special case where $\ell_x(t) = \ell_y(t) = L$ for all $t$. As demonstrated by \citep{zhang2019gradient,li2024convex}, many neural network loss landscapes are empirically observed to be generalized smooth but not $L$-smooth.

\paragraph{The Impact of Hybrid Smoothness Condition.} The generalized smoothness condition presents a significant challenge in training our proposed hybrid system, particularly motivating the use of two distinct learning rates. In the following example of our proposed hybrid LLM fine-tuning structure,  we have two sets of parameters: (1) the original LLM parameters $x$, and (2) the PEFT module parameters $y$. we jointly train the LLM with a Prompt Encoder \citep{lester2021power} on the SST-2 dataset \citep{socher2013recursive}. 
We observe that the base LLM merely takes much smaller learning rate; if we choose the learning rate to ensure the base LLM's convergence, the training loss decreases in an unacceptably slow rate (\Cref{fig:loss-curves-small-lr}). However, if we choose the learning rate larger than the base LLM's tolerance, the training loss explodes and quickly diverges (\Cref{fig:loss-curves-large-lr}). The best practice is choosing a smaller learning rate for the base LLM and a larger learning rate for the PEFT module (\Cref{fig:loss-curves-hybrid-lr}).  



\begin{figure}[H]
\centering
\begin{subfigure}[t]{.3\textwidth}
  \centering
  \includegraphics[width=\linewidth]{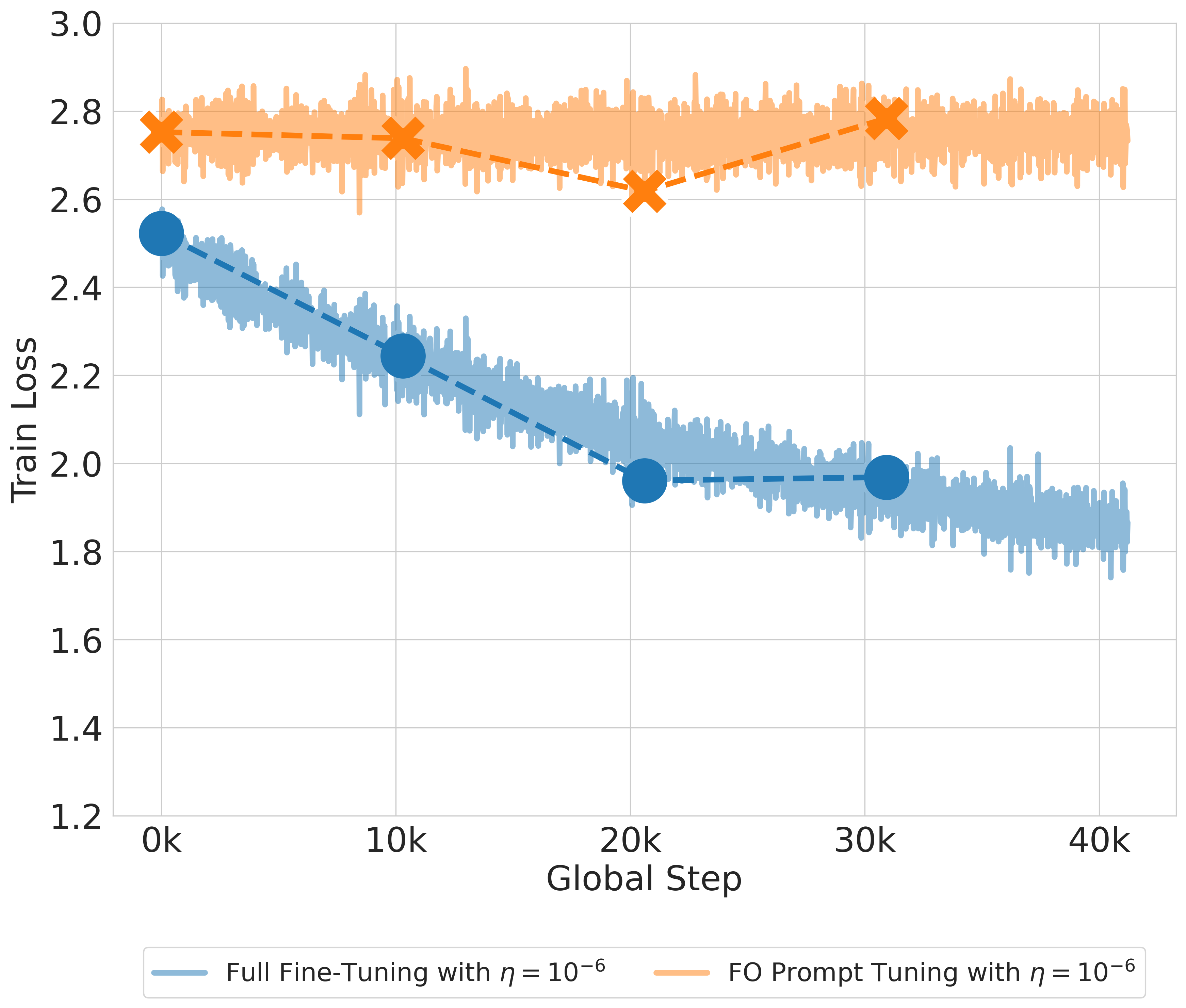}
  \caption{Under the small learning rate $\eta=10^{-6}$, the full fine-tuning decrease as expected. However, prompt tuning converges slowly and stagnates at a higher loss.} 
\label{fig:loss-curves-small-lr}
\end{subfigure}\hfill
\begin{subfigure}[t]{.3\textwidth}
  \centering
  \includegraphics[width=\linewidth]{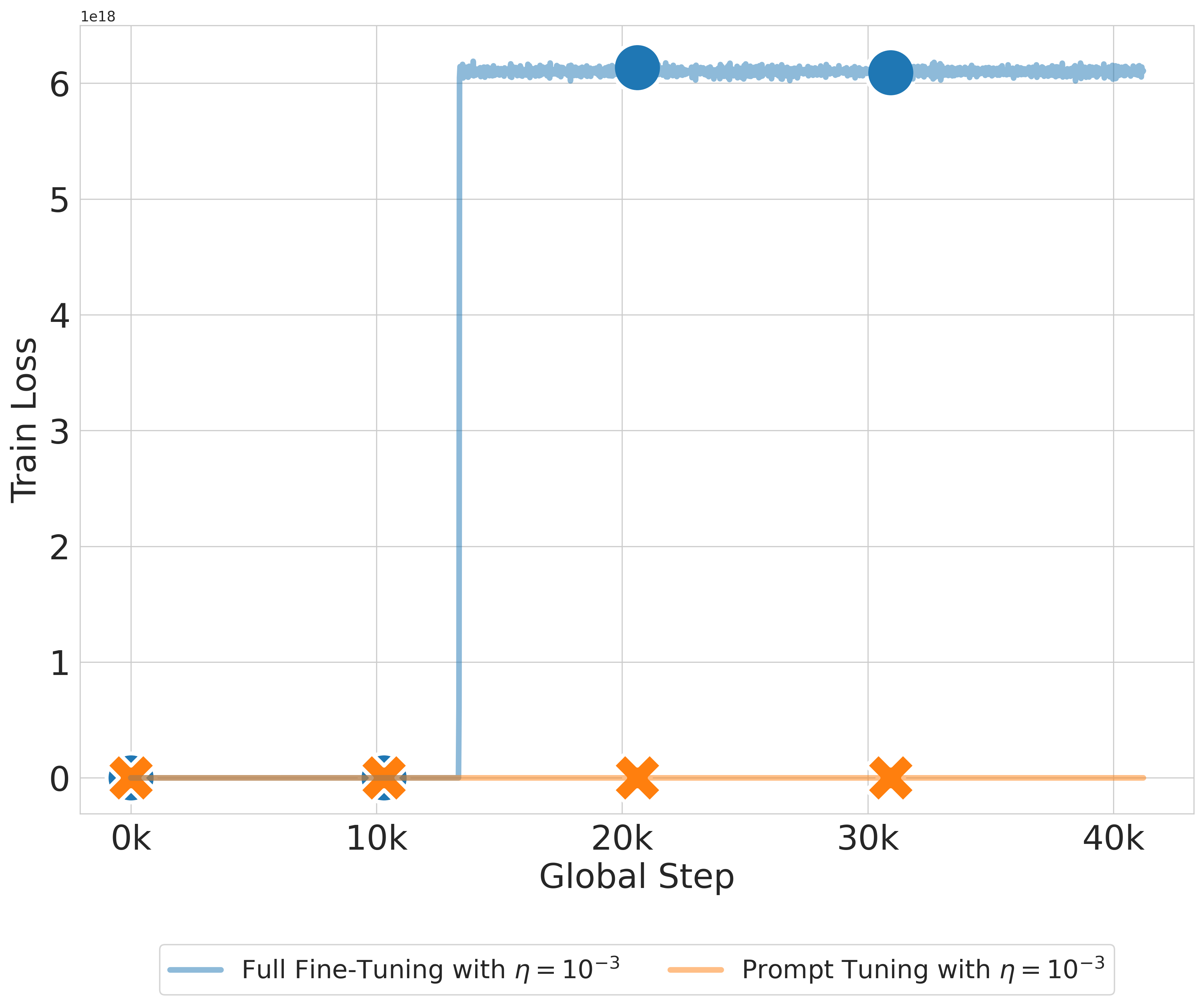}
  \caption{With a large learning rate $\eta=10^{-3}$, the prompt tuning decreases as expected, while full fine-tuning exhibits unstable behavior, resulting in loss explosion.} 
\label{fig:loss-curves-large-lr}
\end{subfigure}\hfill
\begin{subfigure}[t]{.3\textwidth}
  \centering
  \includegraphics[width=\linewidth]{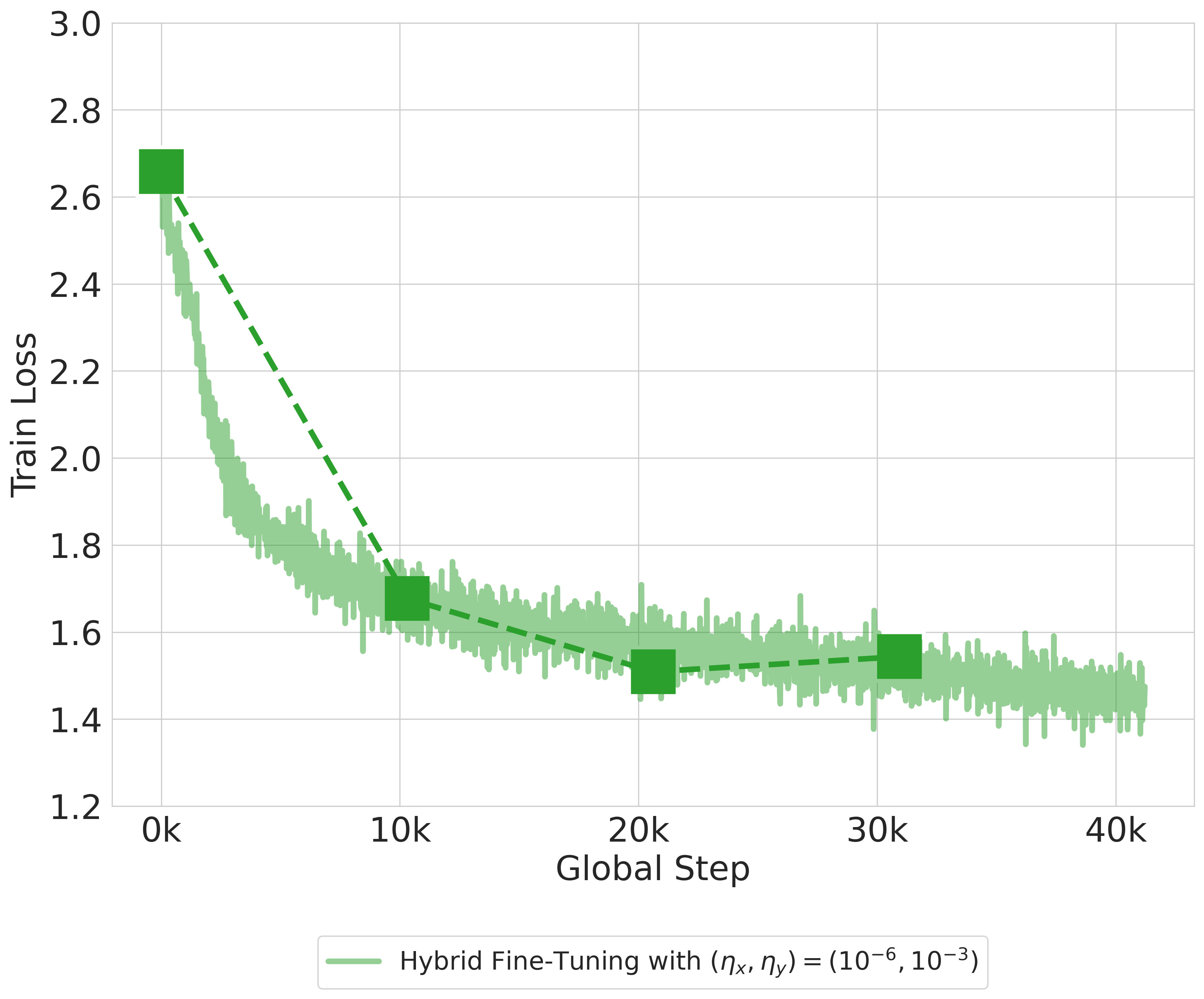}
  \caption{Hybrid fine-tuning with distinct learning rates ($\eta_x=10^{-6}$ for the base model, $\eta_y=10^{-3}$ for prompt tuning) provides both stable and faster convergence. } 
\label{fig:loss-curves-hybrid-lr}
\end{subfigure}
\caption{Comparison of training loss curves under different learning rate configurations for full fine-tuning and prompt tuning on the SST-2 dataset~\protect\citep{socher2013recursive} with the base model OPT-1.3b~\protect\citep{zhang2022opt}. This example illustrates the necessaity of using different learning rates in hybrid-tuning structure.}
\label{fig:loss-curves-under-diff-lr}
\end{figure}

This example illustrates the practical benefits of considering hybrid smoothness condition and the resulting use of different learning rates in hybrid fine-tuning. It is naturally to ask if this observation can be rigorously supported by the convergence analysis. We address this question in the next subsection.

\subsection{Theoretical Analysis}\label{sec:convergence}



Recall that our objective is to solve the optimization problem presented in  \Cref{eq:opt-hybrid}. 
To handle the generalized smooth structure, we introduce the following definition:

\begin{definition}[Coercive]
    A continuous function $f:\RR^d \to \RR$ is \textit{coercive}   if the sub-level set $\{ x \in \RR^d \mid f(x) \leq a \}$ is compact  for all $a \in \RR$.
\end{definition}
In the existing literature of  generalized smoothness \citep{li2024convex}, this assumption is usually replaced with an equivalent statement: the objective function $f(x,y)$ tends to positive infinity when $(x,y)$ approaches the boundary of its domain. We make the following standard assumptions to regularize the function class and subsequently provide the non-asymptotic convergence analysis. 

\begin{assumption}[Regularity Conditions]\label{ass:coercive}
The objective function $f(x, y):= \frac{1}{n}\sum_{i=1}^n f(x, y;i)$ defined in \Cref{eq:opt-hybrid} satisfies the following conditions:  
\begin{enumerate}[(1)]
    \item  $f(\cdot)$ is coercive. 
    \item  $f(\cdot)$ is bounded below by  $f^*:=\inf_{(x,y)\in \RR^d} f(x,y)>-\infty.$
    \item $f(\cdot)$ and each individual loss function $f(\cdot; i)$ are twice continuously differentiable.
\end{enumerate} 
\end{assumption}

These regularity conditions are essential for several reasons: Coercivity prevents the optimization process from diverging too far. The lower bound guarantees that the optimization problem is well-posed. Twice continuous differentiability allows for the application of various optimization techniques and facilitates theoretical analysis. All of them are standard and widely used in the optimization literature \citep{li2024convex}. 

\begin{assumption}[Bounded Variance]\label{ass:bounded-variance}
    There exists $\sigma$ such that for all $x\in \RR^d$,
    $$\frac{1}{n} \sum_{i=1}^n\left\|\nabla f(x,y;i)-\nabla f(x,y)\right\|^2 \leq  \sigma^2.$$
\end{assumption}
This bounded variance assumption is standard in the analysis of reshuffling-type SGD. We note that this assumption could be further weakened to the expected smoothness \citep{mishchenko2020random,khaled2020better}. We maintain the current version for the simplicity.  


 With both assumptions in place, we analyze the complexity of \Cref{alg:sgd} under the hybrid smoothness condition (\Cref{def:smooth}).  Our main theoretical result is summarized in the following theorem:
\begin{theorem}\label{thm:main}
    Suppose that \Cref{ass:coercive} and \Cref{ass:bounded-variance} hold for the objective function $f(x,y):= \frac{1}{n}\sum_{i=1}^n f(x,y;i)$, with satisfying the hybrid smoothness condition (\Cref{def:smooth}). Let $\{ (x_t,y_t)\}_{t=1}^T$ be the SGD dynamic generated by \Cref{alg:sgd} for solving the optimization problem \Cref{eq:opt-hybrid}. Let learning rates $\eta_x$, $\eta_y$ be chosen as 
        \begin{align*} 
        \eta_x \leq \min\left\{  \calO(\frac{1}{L_{x} n d_x}), \calO(\frac{1}{\sqrt{T} n L_{x,\max}}) \right\}  , \qquad
        \eta_y \leq \min\left\{\calO(\frac{1}{L_{y} n })  ,   \calO(\frac{1}{\sqrt{T} n L_{y,\max}})  \right\} ,  
        \end{align*}  
        and the perturbation stepsize $\mu$ and the smoothness characteristics of the $x$ and $y$ parameters $L_x,L_y,L_{x,\max},L_{y,\max}$ are specified in the appendix. Let $\delta\in (0,1)$. If the maximum number of epoch $T$ is chosen as 
        $ T \geq \calO( \frac{\epsilon^{-2} }{\delta} + \frac{\epsilon^{-4}}{n})$, 
        then with the probability at least $1-\delta$, 
    $$ \frac{1}{T} \sum_{t<T}\EE \left\|\nabla f\left(x_t, y_t\right)\right\|^2 \leq \epsilon^2 .$$
\end{theorem}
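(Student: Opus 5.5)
The proof combines the generalized-smoothness technique of \citep{li2024convex} with the standard epoch-level analysis of random reshuffling \citep{mishchenko2020random}. The first step localizes the problem. By \Cref{ass:coercive}, the sublevel set $\mathcal{L}_0=\{(x,y): f(x,y)\le f(x_0,y_0)+c\}$ is compact for some slack $c>0$. On a slightly enlarged neighbourhood of $\mathcal{L}_0$, the gradient norms of $f$ and of every $f(\cdot;i)$ are bounded by some $G$, by continuity of second derivatives. \Cref{def:smooth} then gives block-wise Hessian bounds $\nabla^2 f\preceq \mathrm{diag}(L_x I_{d_x},L_y I_{d_y})$ with $L_x=\ell_x(G)$ and $L_y=\ell_y(G)$. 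We define $L_{x,\max},L_{y,\max}$ as the analogous block bounds for the individual losses $f(\cdot;i)$ on that neighbourhood. From this we get a block descent lemma, valid whenever the segment between two points stays in the neighbourhood:
\[ f(x',y')\le f(x,y)+\langle\nabla f,(x'-x,y'-y)\rangle+\tfrac{L_x}{2}\|x'-x\|^2+\tfrac{L_y}{2}\|y'-y\|^2. \]

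The second step is a one-epoch descent inequality. Write the epoch update as
\[ x_{t+1}=x_t-\eta_x\sum_i \hat\nabla_x f(x_{t,i-1},y_{t,i-1};\xi_{t,i}),\qquad y_{t+1}=y_t-\eta_y\sum_i \nabla_y f(x_{t,i-1},y_{t,i-1};\xi_{t,i}). \]
Decompose the ZO estimator into $\nabla_x f_\mu$ plus a zero-mean Gaussian noise term, where $f_\mu$ is the Gaussian smoothing in $x$. Use the standard facts $\|\nabla_x f_\mu-\nabla_x f\|\le \mu L_{x,\max} d_x^{3/2}/2$ and $\mathbf{E}\|\hat\nabla_x f\|^2\le 2(d_x+4)\|\nabla_x f\|^2+\mu^2L_{x,\max}^2 (d_x+6)^3/2$. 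Compare the aggregated direction with $n\nabla f(x_t,y_t)$. The drift $\|x_{t,i}-x_t\|^2$, $\|y_{t,i}-y_t\|^2$ inside an epoch is controlled by the usual recursive bound. This uses \Cref{ass:bounded-variance} and sampling without replacement, and yields terms $\eta_x^2 n^2(\|\nabla f\|^2+\sigma^2)$, with an extra factor $d_x$ on the $x$ block from the ZO second moment, and similarly for the $y$ block. The conditions $\eta_x\lesssim 1/(L_x n d_x)$ and $\eta_y\lesssim 1/(L_y n)$ let the quadratic terms be absorbed into half of the first-order decrease. We aim for
\[ \mathbf{E} f(x_{t+1},y_{t+1})\le f(x_t,y_t)-\tfrac{n}{4}\big(\eta_x\|\nabla_x f\|^2+\eta_y\|\nabla_y f\|^2\big)+C n^3(\eta_x^3L_{x,\max}^2+\eta_y^3L_{y,\max}^2)\sigma^2+C n\eta_x\mu^2(\cdot). \]
Telescoping, dividing by $T n\min(\eta_x,\eta_y)$, and using $\eta\lesssim 1/(\sqrt T nL_{\max})$ gives an average squared gradient of order $(f(x_0,y_0)-f^*)/(Tn\eta)+n^2\eta^2L_{\max}^2\sigma^2$. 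Choosing $\mu$ small makes the smoothing bias negligible. This produces the $\epsilon^{-4}/n$ part of the required $T$.

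The main obstacle is that the descent lemma holds only while iterates stay in the neighbourhood of $\mathcal{L}_0$, and the ZO noise is Gaussian and therefore unbounded. I would handle this with a stopping-time argument. Let $\tau$ be the first epoch at which an iterate leaves the sublevel region. Run the descent inequality on the stopped process, which is a supermartingale up to the additive error. Then bound $\mathbf{P}(\tau\le T)$ by Markov's inequality on $\sum_t (f(x_{t\wedge\tau},y_{t\wedge\tau})-f^*)$ together with the fact that leaving requires a function increase of at least $c$. The accumulated error must be at most $c\delta$, which forces the dependence $T\gtrsim \epsilon^{-2}/\delta$ through the step-size choice. Within an epoch, the intra-epoch displacement is kept inside the enlarged neighbourhood, of radius set by $G$ and $\ell$, by the step-size bounds $\eta\lesssim 1/(nL)$. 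On the event $\{\tau>T\}$, which has probability at least $1-\delta$, the telescoped bound gives $\frac1T\sum_{t<T}\mathbf{E}\|\nabla f(x_t,y_t)\|^2\le\epsilon^2$. The delicate part is the conditioning: the expectations must be taken with respect to the stopped process so that the high-probability statement and the expectation bound are consistent.
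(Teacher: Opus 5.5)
Your skeleton matches the paper's: localize via coercivity and \Cref{def:smooth}, derive a one-epoch descent inequality from the reshuffling drift bounds plus Gaussian-smoothing bounds on the ZO block, then use a stopping time. The stopping-time step, which you rightly call the crux, does not go through as you set it up. With a \emph{fixed} slack $c$, Markov's inequality cannot give $\PP(\tau\le T)\le\delta$. The variable $f(x_{\tau\wedge T},y_{\tau\wedge T})-f_0$ is not nonnegative, so Markov does not apply to it. Applied to $f-f^*$ (equivalently, Ville's inequality for the stopped supermartingale), it only gives $\PP(\tau\le T)\le (f_0-f^*+\mathrm{err})/(f_0-f^*+c)$. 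However small the accumulated error is, this stays bounded away from $0$ unless $c\gtrsim (f_0-f^*)/\delta$. This is exactly why the paper localizes at the $\delta$-dependent level $F=\frac{8}{\delta}[f_0-f^*+\sigma']$. Only afterwards does it define $G,L_x,L_y,L_{x,\max},L_{y,\max}$ on $\mathcal{G}_F$ (\Cref{lemma:0}), so every step-size constant carries this $\delta$-dependence. Keeping $c$ fixed would instead require a second-moment maximal inequality for the zero-mean ZO fluctuation, which forces $\delta$-dependent step sizes; that is not what you wrote.

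Second, $\eta\lesssim 1/(nL)$ does not keep the intra-epoch displacement inside the neighbourhood. Each ZO step is roughly $\eta_x\langle\nabla_x f,v\rangle v$ with Gaussian $v$, so in the exiting epoch the iterate can land arbitrarily far out. There neither the descent lemma nor any bound on $f$ is available, and the stopped process is no longer a supermartingale up to the error term. The paper handles this with a second stopping time $\tau_2$ on the epoch-aggregated approximation error, $\|\epsilon_t\|>H$ with $H=\Theta(\sqrt{T/\delta})$. It bounds $\PP(\tau_2<T)\le\delta/4$ by Chebyshev and a union bound over epochs, using \Cref{lemma:gradient-estimation} and the reshuffling drift lemma. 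It then bounds $\PP(\tau_1<T,\tau_2\ge T)\le\delta/4$ by Markov on $f(x_\tau,y_\tau)-f^*$.

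Finally, ``on $\{\tau>T\}$ the telescoped bound gives the claim'' is not an implication. Telescoping controls $\EE\sum_{t<\tau\wedge T}\|\nabla f(x_t,y_t)\|^2$ averaged over all paths. You still need a concentration step for $A=\{\frac{1}{T}\sum_{t<T}\|\nabla f(x_t,y_t)\|^2\le\epsilon^2\}$ on $B=\{\tau\ge T\}$. The paper writes $\PP(A)\ge(1-\PP(A^c\mid B))(1-\PP(B^c))$ and bounds $\PP(A^c\mid B)$ by an exponential-moment Markov argument. That step, not the accumulated-error budget, is the source of its $\epsilon^{-2}/\delta$ term. A plain Markov bound on the stopped sum is the clean rigorous version of this step, at the cost of an extra $1/\delta$ factor in $T$.
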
 
Given that each epoch processes $n$ data points, the total gradient complexity is $nT\geq \mathcal{O}( \frac{\epsilon^{-2} n}{\delta} + \epsilon^{-4})$. This result is optimal when $\epsilon$ is sufficiently small, aligning with the best-known upper bounds established in previous convergence analyses for both generalized smooth non-convex objectives \citep{li2024convex,zhang2019gradient} and $L$-smooth non-convex objectives \citep{mishchenko2020random,khaled2020better}. Importantly, it also matches the known lower bound for the SGD algorithm \citep{arjevani2023lower}, further confirming its optimality.  

\paragraph{Remark.} On the theoretical side, our analysis highlights the asymmetry between the learning rates $\eta_x$ and $\eta_y$, which arises from the distinct smoothness properties of each variable. This result emphasizes the necessity of adopting tailored learning rate schedules when optimizing modules with hybrid smoothness, an aspect not addressed in standard SGD analysis, validating our empirical observation in \Cref{sec:challenges}.
Furthermore, to the best of our knowledge, there is no prior work in the optimization literature that investigates optimization methods under \textbf{generalized smoothness} while accounting for \textbf{random reshuffling}. Our results constitute the first convergence analysis in this setting.

\section{Experiments}   
Following a similar setting of ZO-Bench \citep{zhang2024revisiting}, we evaluate the performance of our proposed method on six representative datasets using three different LLMs. Experimental results show that our proposed method consistently achieves superior performance and faster convergence. 

\paragraph{Tasks \& Datasets.} We assessed our approach on $6$ representative NLP tasks including the sentiment classification task on the SST2 dataset \cite{socher2013recursive}, the sentence differing task on the WSC dataset \cite{levesque2012winograd}, contextualized word and sense representation and word sense disambiguation task on the WiC dataset \cite{pilehvar2018wic}, the question answering task on the COPA dataset \cite{roemmele2011choice}, and the common sense reasoning task on the WinoGrande dataset \cite{sakaguchi2021winogrande}.

\paragraph{Experimental  Setting.} Following the methodology of \cite{malladi2023fine,zhang2024revisiting}, we assessed our approach on $6$ representative NLP tasks including the sentiment classification task on the SST2 dataset \cite{socher2013recursive}, the sentence differing task on the WSC dataset \cite{levesque2012winograd}, contextualized word and sense representation and word sense disambiguation task on the WiC dataset \cite{pilehvar2018wic}, the question answering task on the COPA dataset \cite{roemmele2011choice}, and the common sense reasoning task on the WinoGrande dataset \cite{sakaguchi2021winogrande}. The models we use in our experiments include OPT-1.3b \cite{zhang2022opt}, Vicuna-7b \cite{chiang2023vicuna}, and LLaMA-7b \cite{zhang2023llama}. We compare the performance of our approach against standard PEFT methods including first-order prompt tuning \cite{lester2021power}, LoRA tuning \cite{hu2021lora}, and prefix tuning \cite{li2021prefix}. For each dataset, we randomly sample 1,000 examples for training, 1,000 examples for evaluation, and 100 examples for development. Performance is evaluated using accuracy or F1 score, as appropriate for each task. All experiments utilize SGD as the optimizer. In the case of prompt tuning and prefix tuning, the prompts are initialized according to the predefined settings in Table E.2 of \cite{malladi2023fine}, while for LoRA tuning, we initialize with zeros. We perform hyperparameter tuning for all methods and report the best configurations. To ensure a fair comparison, we keep the cardinality of the hyperparameter search spaces identical. We set the maximum number of training steps to 20,000, with early stopping applied when applicable.  The detailed hyperparameter setting, overviews of the tasks and PEFT methods, hyper-parameter setting, and the full results are reported in the supplementary materials.

\begin{figure}[t]
    \centering
    \includegraphics[width=\linewidth]{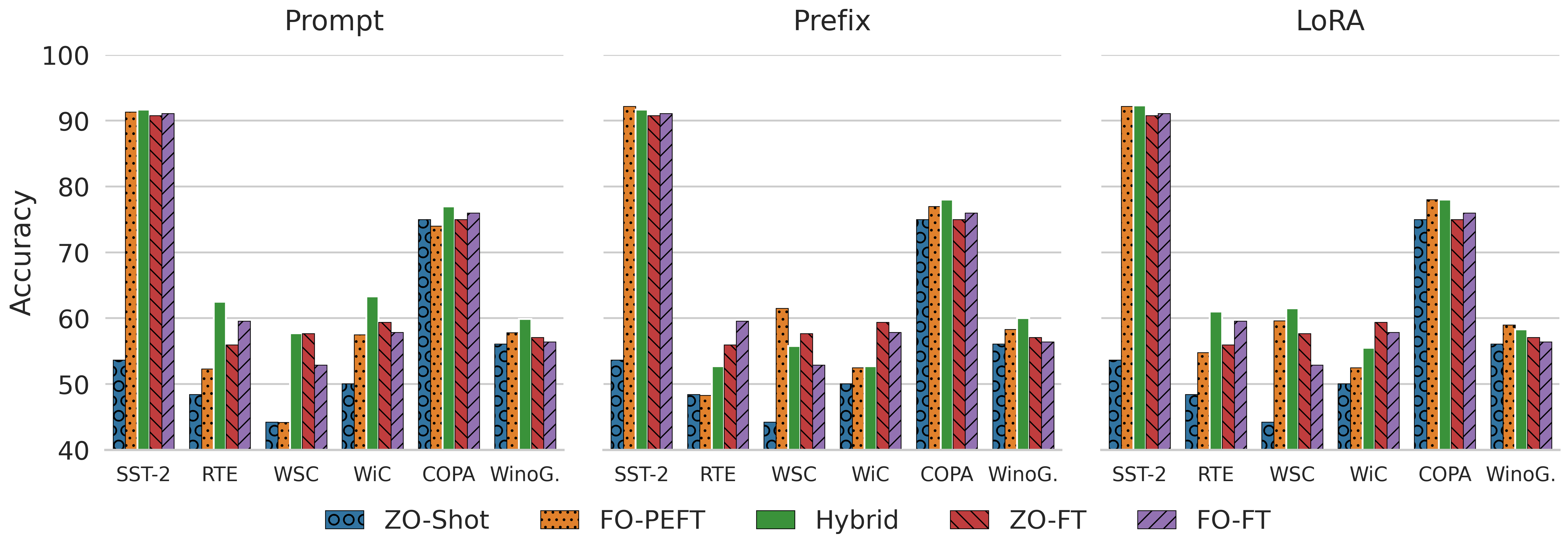}
    \caption{Comparison among Hybrid Fine-Tuning (Hybrid), FO PEFT methods (FO-PEFT), FO full fine-tuning (FO-FT), and ZO full fine-tuning (ZO-FT). In $13/18\approx 72.2\%$ combinations, Hybrid Fine-Tuning outperforms both ZO and FO full fine-tuning.}
    \label{fig:comparision-small-size}
\end{figure}

\subsection{Comparison with FO and ZO Full Fine-Tuning}
We begin by evaluating our method on the medium-sized OPT-1.3b model to assess the performance gains over both FO and ZO full-parameter fine-tuning.

\paragraph{Results.} 
We apply our proposed hybrid fine-tuning method to six benchmark tasks using the OPT-1.3b model. As shown in \Cref{fig:comparision-small-size}, hybrid fine-tuning outperforms its corresponding FO-PEFT counterpart, as well as both FO and ZO full fine-tuning in most scenarios. For example, when using the Prompt Encoder as the PEFT module, hybrid fine-tuning consistently achieves the highest performance across all six tasks, demonstrating robust improvements over all baseline approaches.

\subsection{Performance on Large Language Models Fine-Tuning} Here we conduct extensive experiments to evaluate the effectiveness of our proposed hybrid fine-tuning approach.

\paragraph{Results.} 
In \Cref{results}, we present the comparison between the proposed hybrid fine-tuning and its corresponding FO PEFT fine-tuning. In the aggregate view (right panel), hybrid tuning outperforms FO-based PEFT in 17 out of 18 cases ($94.5\%$), demonstrating its consistent advantage. In the pairwise comparison setting (left panel), where each FO PEFT method is compared with its hybrid variant across three models and six tasks, the hybrid fine-tuning achieves better performance in 41 out of 54 combinations ($\approx 76\%$). These results underscore the effectiveness of hybrid fine-tuning, highlighting its potential as a more robust strategy for adapting LLMs to diverse downstream tasks. 


\begin{table*}[t]
\centering
\setlength{\tabcolsep}{1mm}
\caption{Experiment results for various fine-tuning methods applied to three large language models across six NLP tasks. Highlighted cells denote the best score for each comparision pair. The left panel (Pairwise Comparison) presents side-by-side comparisons of each Hybrid variant with its corresponding first-order (FO) method (\emph{e.g.} FO-Prompt \emph{vs.} Hybrid-Prompt), enabling direct performance comparisons. 
The Hybrid method outperforms its FO counterpart in 41 out of 54 cases ($\approx 76\%$). The right panel (First-Order PEFT vs. Hybrid) groups all FO-based methods (Prompt, Prefix, LoRA) separately from their Hybrid counterparts, emphasizing the overall gains from hybrid fine-tuning. In 17 out of 18 comparisons ($\approx 94.5\%$), Hybrid variants yield superior performance.
}
\label{results} 

\resizebox{\columnwidth}{!}{ 
\begin{tabular}{clccccccccccccccc} 
\toprule
&  & \multicolumn{6}{c}{\textbf{Pairwise Comparison}}  &  & \multicolumn{6}{c}{\textbf{First-Order PEFT vs. Hybrid}} \\ 
\midrule
 \multirow{2}{*}{\textbf{Model}} & Task & \textbf{SST-2} & \textbf{RTE} & \textbf{WSC} & \textbf{WiC} & \textbf{COPA} & \textbf{WinoG.} & Task & \textbf{SST-2} & \textbf{RTE} & \textbf{WSC} & \textbf{WiC} & \textbf{COPA} & \textbf{WinoG.} \\ 
 & \small{\textit{Task Type}}  & \multicolumn{4}{c}{--------- \small{\textit{Classification}} --------} & \multicolumn{2}{c}{--- \small{\textit{Reasoning}} ---} & \small{\textit{Task Type}}  & \multicolumn{4}{c}{--------- \small{\textit{Classification}} --------} & \multicolumn{2}{c}{--- \small{\textit{Reasoning}} ---}\\
\midrule 
 
\multirow{7}{*}{\rotatebox{90}{\textbf{Llama-2-7b}}}
& FO-Prompt & 95.6 & {59.9} & 36.5 & 58.5 & {88.0} & 67.2 & FO-Prompt & 95.6 & 59.9 & 36.5 & 58.5 & {88.0} & 67.2 \\
& Hybrid-Prompt & \tf{95.9} & \tf{59.9} & \tf{61.5} & \tf{64.4} & \tf{88.0} & \tf{68.9} & FO-Prefix & 91.1 & 60.6 & 51.9 & 51.7 & 83.0 & 66.2 \\
\cline{2-8}\noalign{\vspace{0.25ex}} 
& FO-Prefix & 91.1 & {60.6} & \tf{51.9} & \tf{51.7} & 83.0 & \tf{66.2} & FO-LoRA & 94.6 & 62.1 & {60.6} & 61.6 & 84.0 & 68.5 \\
\cline{9-15}\noalign{\vspace{0.25ex}} 
& Hybrid-Prefix & \tf{91.6} & \tf{60.6} & 42.3 & 51.5 & \tf{85.0} & 64.3 & Hybrid-Prompt & \tf{95.9} & 59.9 & 61.5 & \tf{64.4} & \tf{88.0} & \tf{68.9} \\
\cline{2-8}\noalign{\vspace{0.25ex}} 
& FO-LoRA & \tf{94.6} & 62.1 & {60.6} & 61.6 & 84.0 & \tf{68.5} & Hybrid-Prefix & 91.6 & 60.6 & 42.3 & 51.5 & {85.0} & 64.3 \\
& Hybrid-LoRA & 93.4 & \tf{62.5} & \tf{60.6} & \tf{61.7} & \tf{88.0} & 66.3 & Hybrid-LoRA & 93.4 & \tf{62.5} & \tf{60.6} & 61.7 & \tf{88.0} & 66.3 \\   
\midrule

\multirow{7}{*}{\rotatebox{90}{\textbf{Vicuna-7b-v1.5}}}
& FO-Prompt & 94.4 & \tf{82.3} & \tf{64.4} & 61.0 & {84.0} & 65.8 & FO-Prompt & 94.4 & 82.3 & 64.4 & 61.0 & 84.0 & 65.8 \\
& Hybrid-Prompt & \tf{95.0} & 70.1 & 55.8 & \tf{64.7} & \tf{84.0} & \tf{66.3} & FO-Prefix & 90.0 & 70.4 & 61.5 & 56.6 & 80.0 & 64.1 \\
\cline{2-8}\noalign{\vspace{0.25ex}} 
& FO-Prefix & 90.0 & 70.4 & 61.5 & \tf{56.6} & 80.0 & 64.1 & FO-LoRA & 94.6 & 80.1 & 53.8 & 58.5 & \tf{85.0} & 66.7 \\
\cline{9-15}\noalign{\vspace{0.25ex}} 
& Hybrid-Prefix & \tf{90.7} & \tf{80.9} & \tf{66.3} & 52.4 & \tf{83.0} & \tf{74.0} & Hybrid-Prompt & \tf{95.0} & 70.1 & 55.8 & 64.7 & 84.0 & 66.3 \\
\cline{2-8}\noalign{\vspace{0.25ex}} 
& FO-LoRA & \tf{94.6} & 80.1 & 53.8 & 58.5 & \tf{85.0} & {66.7} & Hybrid-Prefix & 90.7 & 80.9 & 66.3 & 52.4 & 83.0 & \tf{74.0} \\
& Hybrid-LoRA & 92.2 & \tf{82.0} & \tf{72.1} & \tf{66.8} & 84.0 & \tf{66.7} & Hybrid-LoRA & 92.2 & \tf{82.0} & \tf{72.1} & \tf{66.8} & 84.0 & 66.7 \\  
\midrule

\multirow{6}{*}{\rotatebox{90}{\textbf{OPT-1.3b}}} 
& FO-Prompt & 91.3 & 52.3 & 44.2 & 57.5 & 74.0 & 57.8 & FO-Prompt & 91.3 & 52.3 & 44.2 & 57.5 & 74.0 & 57.8 \\
& Hybrid-Prompt & \tf{91.7} & \tf{62.5} & \tf{57.7} & \tf{63.3} & \tf{77.0} & \tf{59.9} & FO-Prefix & 92.2 & 48.3 & {61.5} & 52.5 & 77.0 & 58.3 \\
\cline{2-8}\noalign{\vspace{0.25ex}} 
& FO-Prefix & \tf{92.2} & 48.3 & \tf{61.5} & 52.5 & 77.0 & 58.3 & FO-LoRA & 92.2 & 54.8 & 59.6 & 52.5 & {78.0} & 59.0 \\
\cline{9-15}\noalign{\vspace{0.25ex}} 
& Hybrid-Prefix & 91.7 & \tf{52.7} & 55.8 & \tf{52.7} & \tf{78.0} & \tf{60.0} & Hybrid-Prompt & 91.7 & \tf{62.5} & 57.7 & \tf{63.3} & 77.0 & 59.9 \\
\cline{2-8}\noalign{\vspace{0.25ex}} 
& FO-LoRA & 92.2 & 54.8 & 59.6 & 52.5 & {78.0} & \tf{59.0} & Hybrid-Prefix & 91.7 & 52.7 & 55.8 & 52.7 & \tf{78.0} & \tf{60.0} \\
& Hybrid-LoRA & \tf{92.3} & \tf{61.0} & \tf{61.5} & \tf{55.5} & \tf{78.0} & 58.3 & Hybrid-LoRA & \tf{92.3} & 61.0 & \tf{61.5} & 55.5 & \tf{78.0} & 58.3 \\ 
\bottomrule
\end{tabular}
}
\end{table*}

\subsection{Visualization of the Improved Convergence Rate} 

To verify that hybrid fine-tuning converges faster than other methods, we present the training curves (including the training loss, validation accuracy, and the test accuracy) for OPT-1.3B \cite{zhang2022opt} model on SST-2 \cite{socher2013recursive} dataset in \Cref{fig:sst2-opt-prompt}. 

\paragraph{Results.}  We observe that a significant efficiency gain in terms of training steps. For example, as shown in \Cref{fig:sst2-opt-prompt}, the hybrid fine-tuning takes around 2,500 steps to achieve 90\% accuracy, while other methods require at least 12,500 steps to reach the same accuracy. This trend is observed across different tasks, PEFT methods, and model architectures, suggesting that the efficiency of hybrid tuning scales well (\emph{e.g.} for Vicuna-7b-v1.5 model on the WinoGrande dataset provided in the supplementary materials).

\begin{figure}[h]
    \centering 
        \includegraphics[width=\linewidth]{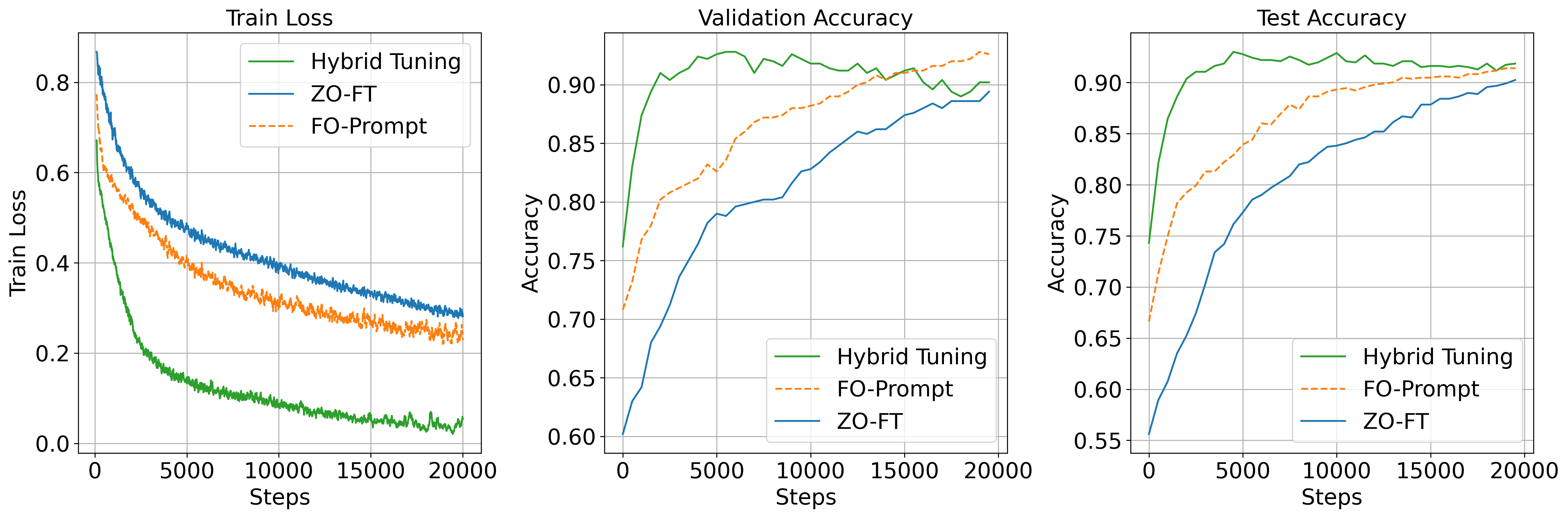}
        \caption{Training curves for OPT-1.3B model with the prompt tuning on the SST2 dataset. The Hybrid Fine-Tuning method achieves significantly faster convergence than the other two baselines.
        } \label{fig:sst2-opt-prompt} 
\end{figure}

\begin{table*}[h]
\centering
\setlength{\tabcolsep}{1mm}

\caption{Comparison of theoretical, asymptotical, and empirical memory usage for different fine-tuning optimizers. $|x|$ and $|y|$ denote the parameter counts of the base LLM and the PEFT module, respectively, with $|y|/|x|\!\to 0$. $|a|$ and $|b|$ are the per-layer gradient states kept during optimization.}
\label{tab:memory_comparison}

\resizebox{\columnwidth}{!}{ 
\begin{tabular}{@{}ccccc@{}}
\toprule
\textbf{Optimizer}                &
\textbf{Theoretical Memory}                                         &
\textbf{Asymptotical Memory}                       &
\textbf{Empirical Memory} &
\textbf{Consumed GPU} \\ \midrule
\makecell{FO-SGD \\ (LLM)}             &
$\displaystyle\sum_\ell \max\!\bigl\{|a_\ell|, |x_\ell|\bigr\} + |x|$           &
$\displaystyle\sum_\ell \max\!\bigl\{|a_\ell|, |x_\ell|\bigr\}$ &
54 GB & $2\times $A6000 \\ \midrule
\makecell{ZO-SGD \\ (LLM)}     &
$\max_\ell |x_\ell |$                                                  &
$\max_\ell |x_\ell |$                                     &
32 GB & $1\times $A6000 \\ \midrule
\makecell{FO-SGD \\ (Prompt)}            &
$\displaystyle\sum_\ell \max\!\bigl\{|b_\ell|, |y_\ell|\bigr\} + |x|$ &
$|x|$                                     &
46 GB & $1\times $A6000 \\ \midrule
\makecell{Hybrid ZO-SGD \\ (LLM+Prompt)} &
$\displaystyle\sum_\ell \max\!\bigl\{|b_\ell|, |y_\ell|\bigr\} + \max_\ell |x_\ell| + |x|$                                         &
$|x|$                                     &
46 GB & $1\times $A6000 \\ \bottomrule
\end{tabular}
}
\end{table*}

\subsection{Memory Usage Analysis}
In this section, we consider the memory usage of the hybrid fine-tuning approach. Let $|x_\ell|$ and $|y_\ell|$ denote the parameter sizes of the  $\ell$-th layer in the base LLM and the PEFT module, respectively, with $|x|:=\sum_\ell |x_\ell| \gg |y|:=\sum_\ell |y_\ell|$ in most practical scenarios. During first-order optimization, each computational graph node stores local gradient states, represented as $a_\ell$ for the LLM and $b_\ell$ for the PEFT module. 
A key observation is that despite updating additional parameters with the inclusion of both the base LLM and the PEFT module, the hybrid fine-tuning approach does not increase the  asymptotical memory usage (i.e. as $\frac{|y|}{|x|}\to 0$). While the theoretical memory footprint of Hybrid ZO-SGD (LLM+PEFT) is $|x| + |y|$, it remains dominated by $|x|$ in practice. Thus, the hybrid fine-tuning method enables updating more parameters without significantly increasing memory consumption, ensuring scalability even for large-scale LLMs.

\paragraph{Results.} Furthermore, our empirical results confirm this observation. \Cref{tab:memory_comparison} reports both the theoretical memory requirements of several fine-tuning strategies and their actual peak GPU memory usage when fine-tuning Llama-2-7B on the SST-2 dataset. The hybrid approach not only significantly reduces memory overhead compared to FO full fine-tuning ($\approx 15\%$ reduction), but also matches the memory footprint of FO prompt tuning \citep{lester2021power}.

\subsection{Extended comparison of gradient Lipschitz
constant}

\begin{figure}[h]
    \centering
    \includegraphics[width=0.35\linewidth]{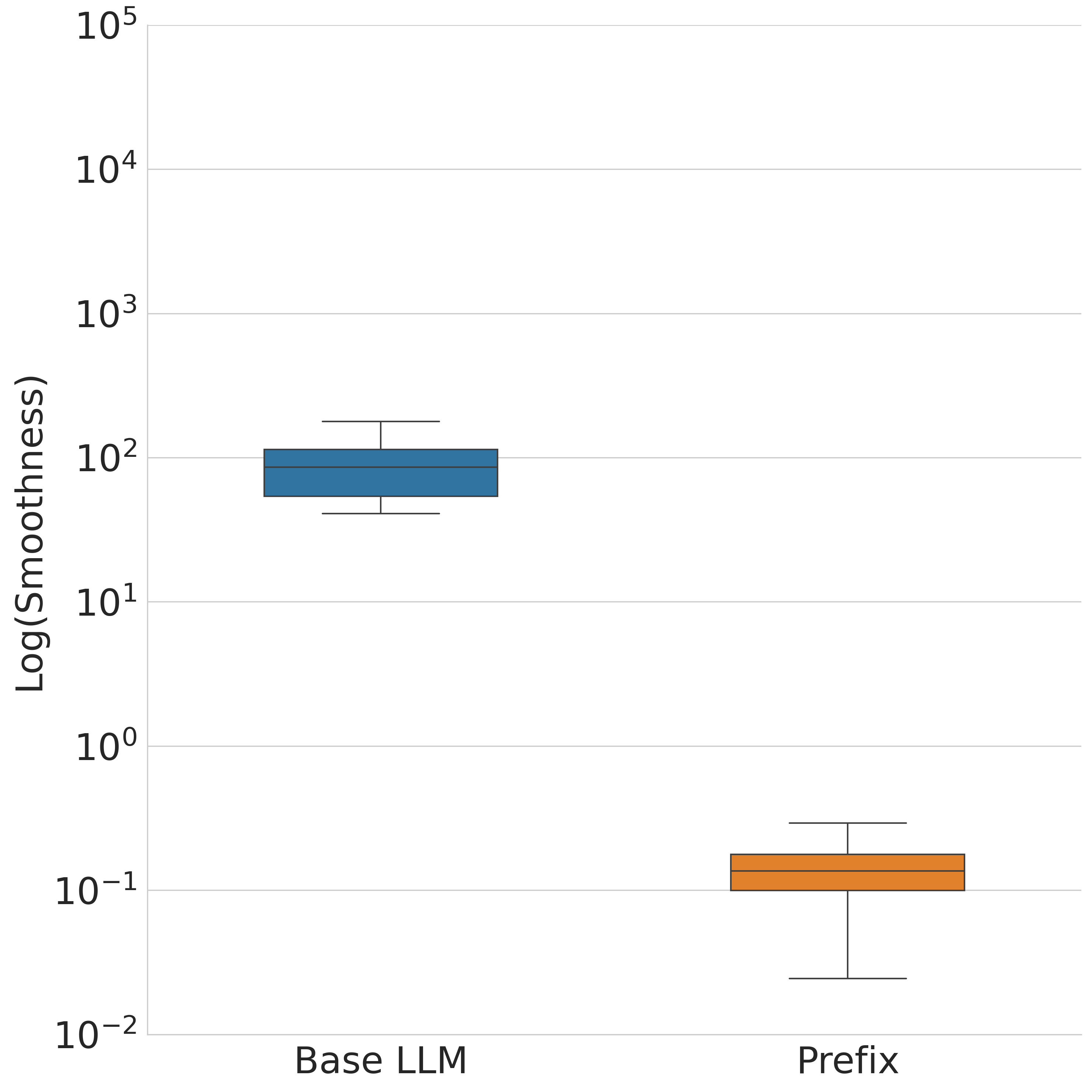}\hspace{1cm}\quad
    \includegraphics[width=0.35\linewidth]{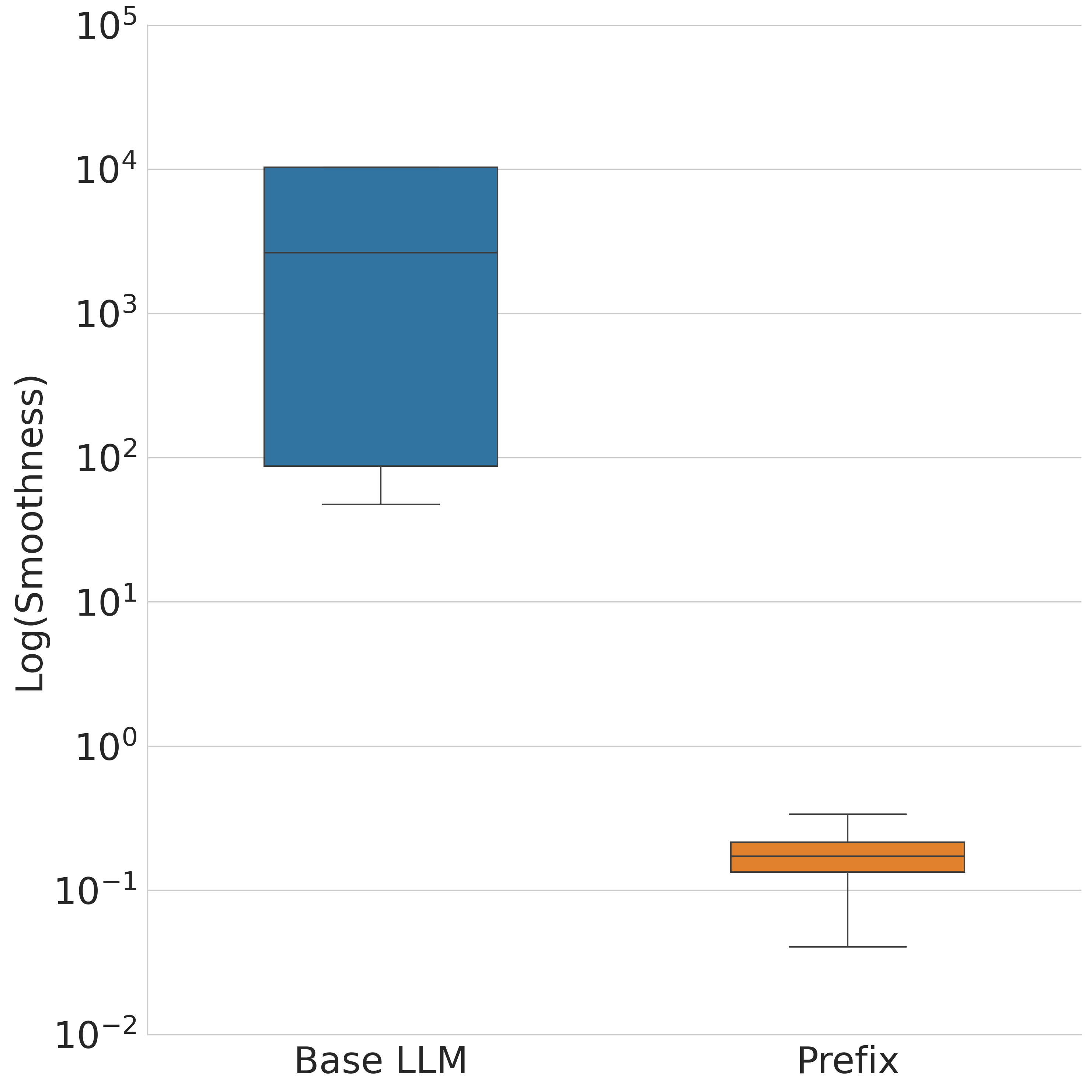}
    \caption{Extended comparison of gradient Lipschitz
constant $L$  for OPT-1.3b (Left) and LLaMa-2-7b (Right) in different modules (Base LLM and Prefix Tuning). The base LLM exhibits a significantly
larger Lipschitz constant, further confirming our observation in \Cref{fig:module-smoothness}. }
    \label{fig:extended-comparison}
\end{figure}

In this subsection, we present extended experiments to further examine the local geometry of the optimization landscape. Specifically, we directly estimate and compare the gradient Lipschitz constants associated with the base model parameters ($x$)  and  the PEFT parameters ($y$) variables across multiple models (OPT-1.3b and LLaMa-2-7b). As shown in \Cref{fig:extended-comparison}, these additional results consistently support our hybrid smoothness assumption by showing that the $x$-coordinates exhibit noticeably larger Lipschitz constants, indicating the necessity of applying a different learning rate scale.

\subsection{Extended comparison with the Adam+LoRA Baseline}

In our previous comparison, using SGD across all methods was driven by our theoretical focus; the core of our contribution is the convergence analysis for our hybrid method under the novel Hybrid Smoothness condition , which we developed specifically for SGD with Random Reshuffling. 

To further validate the effectiveness of our approach, we conduct an additional set of experiments comparing Hybrid-LoRA with the Adam+LoRA baseline. This extended evaluation examines whether the performance gains observed in our main results persist under the widely applied optimization settings. The results reported in \Cref{tab:results_opt_only} demonstrate that Hybrid-LoRA still maintains its advantage while using smaller memory (as indicated by \Cref{tab:memory_comparison_simplified}).  
We also emphasize that our framework is extensible, and one could construct a ``Hybrid-Adam'' variant by replacing the SGD update for the PEFT module with Adam. We view this as an interesting direction for future work.


\begin{table*}[t]
\centering
\begin{minipage}{0.62\textwidth}
\centering
\caption{Pairwise comparison between FO and Hybrid variants on OPT-1.3b across six NLP tasks. Additionally compared to \Cref{results}, we include the Adam optimizer as the baseline. Notably, our proposed method still achieves advanced performance without adopting the Adam optimizer. In principle, our approach can be further enhanced by replacing the SGD update for the PEFT module with Adam to further accelerate the training.}
\label{tab:results_opt_only}
\resizebox{\textwidth}{!}{
\begin{tabular}{cccccccc}
\toprule
\textbf{Model} & \textbf{Task} 
& \textbf{SST-2} & \textbf{RTE} & \textbf{WSC} & \textbf{WiC} & \textbf{COPA} & \textbf{WinoG.} \\
\midrule
\multirow{5}{*}{\rotatebox{90}{\textbf{OPT-1.3b}}}  
& \makecell{FO-LoRA\\(Adam)}  & 91.7 & 58.6 & 58.7 & \tf{64.1} & 66.0  & \tf{60.1} \\ 
&\makecell{FO-LoRA\\(SGD)}             & 92.2 & 54.8 & 59.6 & 52.5 & 78.0 & 59.0 \\
&\makecell{Hybrid-LoRA \\(SGD)}         & \tf{92.3} & \tf{61.0} & \tf{61.5} & 55.5 & \tf{78.0} & 58.3 \\
\bottomrule
\end{tabular}
}
\end{minipage}
\hfill
\begin{minipage}{0.34\textwidth}
\centering
\caption{Empirical memory usage of different optimizers for the OPT-1.3b model. We note that the Adam optimizer takes three-time memory of the SGD optimizer due to the need to store two additional state variables (the first and second moments) for each model parameter.}
\label{tab:memory_comparison_simplified}
\resizebox{0.85\textwidth}{!}{
\begin{tabular}{@{}cc@{}}
\toprule
\textbf{Optimizer}  &
\textbf{Emp. Mem.}  \\ 
\midrule
\makecell{FO-SGD \\ (LLM)}            
& 11.2 GB   \\ 
\midrule
\makecell{ZO-SGD \\ (LLM)}     
& 6.8 GB   \\ 
\midrule
\makecell{FO-Adam \\ (LoRA)}           
& 11.0 GB  \\ 
\midrule
\makecell{Hybrid ZO-SGD \\ (LLM+LoRA)}
& 10.7 GB   \\ 
\bottomrule
\end{tabular}
}
\end{minipage}
\end{table*}

\bibliography{iclr2026_conference}
\bibliographystyle{iclr2026_conference}

\newpage
\appendix


\section{Related Work} 
\paragraph{Zeroth-Order Optimization in Fine-Tuning LLMs}  Recent work has explored ZO optimization methods for fine-tuning LLMs, which aligns with our approach of using ZO methods for the LLM component in hybrid fine-tuning. \citet{malladi2023fine} demonstrated the compatibility of zeroth-order methods with both full fine-tuning and PEFTs. This laid the groundwork for our hybrid approach that combines zeroth-order LLM updates with first-order PEFT updates. \citet{zhang2024revisiting}  provided a comprehensive benchmark for ZO optimization in LLM fine-tuning, offering valuable insights that informed our experimental design. \citet{ling2024convergence} combines the ZO fine-tuning of LLMs with the federated learning. Several studies have incorporated variance reduction techniques \citep{gautam2024variance} into ZO methods or second-order method \citep{zhao2024second}  to enhance stability and convergence in fine-tuning LLMs. While we focus on a different aspect, these stability improvements could easily be integrated into our hybrid framework. Existing literature \citep{liu2024sparse,guo2024zeroth,zhang2024revisiting} also discusses the sparsity of pre-trained LLMs, which further enhances the performance of ZO optimization approach.  

\paragraph{Generalized Smoothness of Large Machine Learning Models} The concept of generalized smoothness has emerged as a crucial theoretical framework for understanding the optimization landscape of large machine learning models, including LLMs.  Recent studies have shown that traditional smoothness assumptions often fail to capture the complex optimization landscape of deep neural networks \citep{zhang2019gradient,li2024convex}. More explicitly, \citet{zhang2019gradient} demonstrated that the local smoothness constant in neural networks is often proportional to the gradient norm, challenging the conventional assumption of uniform smoothness. This insight aligns with our observations in hybrid fine-tuning, where different components of the model (LLM and PEFT modules) exhibit distinct smoothness properties.  \citet{li2024convex} introduced a generalized smoothness condition that allows for non-uniform smoothness across the parameter space, which is more representative of the behavior observed in practice for large models. This work provides a foundation for our hybrid generalized smoothness framework, which extends these ideas to account for the heterogeneous nature of joint LLM and PEFT optimization.

\section{Notations} 
In this paper, the optimization problem is formulated as minimizing $f(x, y)$, where $x \in \mathbb{R}^{d_x}$ represents the parameters of the base language model and $y \in \mathbb{R}^{d_y}$ represents the parameters of the PEFT module. The function $f$ is assumed to have hybrid generalized smoothness, characterized by non-negative, non-decreasing sub-quadratic functions $\ell_x$ and $\ell_y$ (\Cref{def:smooth}). In the SGD,  we consider epoch-wise optimization algorithm described in \Cref{alg:sgd}. This approach ensures us to access each data point exactly once over an entire epoch, which is particularly common is the data loader provided by existing modern machine learning frameworks such as PyTorch and TensorFlow. Here, $\eta_x$ and $\eta_y$ denote the learning rates for $x$ and $y$ respectively, $T$ is the number of epochs, and $n$ is the dataset size. We $\hat{\nabla}_x f$ to denote the zeroth-order gradient estimator for $x$, while $\nabla_y f$ represents the standard gradient for $y$. With these given, for each epoch $t$, we define the following notations:
\begin{align*}
    g_t &= \sum_{i=1}^n \nabla_x f(x_{t,i}, y_{t,i}; \xi_{t,i}) , \quad \hat{g}_t = \sum_{i=1}^n \hat{\nabla}_x f(x_{t,i}, y_{t,i}; \xi_{t,i}) ,\\ 
    h_t &= \sum_{i=1}^n \nabla_y f(x_{t,i}, y_{t,i}; \xi_{t,i}).
\end{align*} 
Here, $g_t$ represents the true gradient with respect to $x$ accumulated over an entire epoch. It captures the overall direction of stochastic gradient descent for the $x$ parameters across all samples in the epoch.  $\hat{g}_t$ is an estimate of this gradient. In practice, we often don't have access to the true gradient and must rely on estimates. The difference between $g_t$ and $\hat{g}_t$ quantifies the estimation error in our gradient calculations.  $h_t$ is the true gradient with respect to $y$ accumulated over the epoch.

\section{Supporting Lemmas} 
In this section, we present several lemmas used to build our convergence analysis. \Cref{lemma:3.3}, \Cref{lemma:3.5}, \Cref{lemma:coro3.6}, and \Cref{lemma:0} are fundamental properties of generalized smoothness provided by \cite{li2024convex}. We adapt them  to the setting of hybrid system fine-tuning. 
\begin{lemma}[The generalized version of Lemma 3.3 from \cite{li2024convex}]\label{lemma:3.3}
    Let $f: \mathbb{R}^d  = \mathbb{R}^{d_x} \times \mathbb{R}^{d_y} \to \mathbb{R}$ be a twice continuously differentiable function satisfying the hybrid generalized smoothness properties. Suppose that $(x,y)\in \mathbb{R}^d$ satisfies  $\|\nabla f(x,y)\|\leq G$. Then there exist non-negative constant $L_x=\ell_x(G)$ and $L_y=\ell_y(G)$ such that for all $(x_1,y_1), (x_2,y_2) \in \mathcal{B}(x, \frac{G}{L_x}) \times \mathcal{B}(y, \frac{G}{L_y})$: 
    \begin{enumerate}
        \item $\| \nabla_x f(x_1, y') -  \nabla_x f(x_2, y') \| \leq L_x \| x_1 - x_2 \| ,$ for all $y'\in  \mathbb{R}^{d_y}$.
        \item $\| \nabla_y f(x', y_1) -  \nabla_y f(x', y_2) \| \leq L_y \| y_1 - y_2 \|   ,$ for all $x'\in \mathbb{R}^{d_x}$.
        \item Let $I_d$ represent the identity matrix with the size $d\times d$. 
        \begin{align*}
            f(x_1, y_1) \leq &f(x_2, y_2) + \left\langle \nabla f(x_2,y_2), \begin{bmatrix}
        x_1 - x_2 \\ 
        y_1 - y_2
    \end{bmatrix}   \right\rangle \\ 
    &+ \frac{1}{2} \begin{bmatrix}
        x_1 - x_2 & 
        y_1 - y_2
    \end{bmatrix} \begin{bmatrix}
        L_x I_{d_x} &  0 \\ 
        0 & L_y I_{d_y}\\ 
    \end{bmatrix} \begin{bmatrix}
        x_1 - x_2 \\ 
        y_1 - y_2
    \end{bmatrix} .
        \end{align*}
    \end{enumerate} 
\end{lemma}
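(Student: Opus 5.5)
The plan is to prove the three items from the block-diagonal Hessian bound in \Cref{def:smooth}, by integrating along line segments, with constants $L_x=\ell_x(G)$ and $L_y=\ell_y(G)$. This mimics Lemma 3.3 of \cite{li2024convex}, separately in each block. The hypothesis gives the matrix inequality $\mathrm{diag}(\ell_x(\|\nabla f\|)I_{d_x},\ell_y(\|\nabla f\|)I_{d_y})\succeq \nabla^2 f$. Testing it against vectors supported only in the $x$-block (resp. $y$-block) gives $\nabla^2_{xx} f(u,v)\preceq \ell_x(\|\nabla f(u,v)\|)I_{d_x}$ and $\nabla^2_{yy} f(u,v)\preceq \ell_y(\|\nabla f(u,v)\|)I_{d_y}$ at every point $(u,v)$. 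Since $\ell_x,\ell_y$ are non-decreasing, these curvature bounds reduce to $L_x$ and $L_y$ at any point where $\|\nabla f\|\le G$. So the whole proof comes down to controlling the gradient norm on the segments we integrate over.

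\textbf{Step 1 (gradient control).} I would show that $\|\nabla f\|\le G$ along every segment used below, namely segments inside $\mathcal{B}(x,\frac{G}{L_x})\times\mathcal{B}(y,\frac{G}{L_y})$, and for item~1 the segments $\{(x_1+s(x_2-x_1),y') : s\in[0,1]\}$ with $y'$ fixed. I would argue by contradiction on the first exit time, as in \cite{li2024convex}. Take the first parameter $s^\ast$ at which $\|\nabla f\|$ reaches the threshold. Before $s^\ast$, differentiate $s\mapsto\nabla f(\gamma(s))$ and bound the derivative block by block using the curvature bounds. Combined with the radii $G/L_x$ and $G/L_y$, this bounds the growth of the gradient by a Gr\"onwall-type integral inequality. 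That inequality is closed using that $\ell_x,\ell_y$ are sub-quadratic, so the growth cannot reach the threshold within the allowed radius. The case of an arbitrary $y'\in\mathbb{R}^{d_y}$ (resp. $x'\in\mathbb{R}^{d_x}$) is handled by taking, in the monotone function $\ell_x$, the supremum of the relevant gradient norm over the segment. I expect this step to be the main obstacle. The radii are stated with $\ell(G)$ rather than $\ell(2G)$, and items 1--2 quantify over all $y'$ and all $x'$. I would therefore first check whether the exit-time argument closes with constant exactly $G$ using only the diagonal blocks, and otherwise track which bound on $\|\nabla f\|$ along the path is actually used.

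\textbf{Step 2 (items 1 and 2).} Write
\[
\nabla_x f(x_1,y')-\nabla_x f(x_2,y')=\int_0^1\nabla^2_{xx}f\big(x_2+s(x_1-x_2),y'\big)(x_1-x_2)\,ds .
\]
Bound the integrand using Step~1 and $\nabla^2_{xx}f\preceq L_xI_{d_x}$, applied as an operator-norm bound on the block. This yields item~1, and item~2 is symmetric. The Hessian condition is one-sided, so a matching lower bound would be needed to control the operator norm. For that I would follow \cite{li2024convex}, which treats the condition in the two-sided norm sense, and state this explicitly.

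\textbf{Step 3 (item 3).} Use the second-order Taylor formula with integral remainder along the segment from $(x_2,y_2)$ to $(x_1,y_1)$, which lies in the product ball by convexity:
\[
f(x_1,y_1)=f(x_2,y_2)+\langle\nabla f(x_2,y_2),w\rangle+\int_0^1(1-s)\,w^\top\nabla^2f\big((x_2,y_2)+sw\big)w\,ds ,
\]
where $w=(x_1-x_2,\,y_1-y_2)$. Apply the full block-diagonal Loewner bound from \Cref{def:smooth}, with $\|\nabla f\|\le G$ on the segment by Step~1 and monotonicity of $\ell_x,\ell_y$. This gives $w^\top\nabla^2 f\,w\le w^\top\mathrm{diag}(L_xI_{d_x},L_yI_{d_y})\,w$. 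Since $\int_0^1(1-s)\,ds=\tfrac12$, the stated quadratic upper bound follows directly; no off-diagonal estimate is needed here.
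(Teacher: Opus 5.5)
Your Step~1 is where the argument fails, and it cannot be completed. The bound $\|\nabla f\|\le G$ along the segments is false, and so is the lemma as stated.

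Take $d_x=d_y=1$, $f(x,y)=e^x+\tfrac12y^2$, $\ell_x(u)=u$, $\ell_y\equiv1$. Since $\|\nabla f\|\ge e^x$, this $f$ is hybrid smooth, even under a two-sided bound. At the center $(0,0)$ we have $G=1$ and $L_x=L_y=1$. For $(x_1,y_1)=(0.9,0)$ and $(x_2,y_2)=(0,0)$ we get $e^{0.9}-1\approx1.46>0.9$, contradicting item~1. We also get $f(0.9,0)=e^{0.9}\approx2.46>1+0.9+0.405$, contradicting item~3.

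A first-exit argument only keeps $\|\nabla f\|\le2G$ on a ball of radius $G/\ell(2G)$, which is why Lemma~3.3 of \cite{li2024convex} uses $\ell(2G)$. Sub-quadraticity plays no role there; the argument closes only through the slack between $G$ and $2G$.

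Two further defects survive the switch to $\ell(2G)$.
\begin{itemize}
\item Items~1--2 quantify over all $y'$ (resp.\ $x'$). For $f=e^{x+y}$ with $\ell_x(u)=\ell_y(u)=\sqrt2\,u$, the Lipschitz constant of $\nabla_x f(\cdot,y')$ near $x=0$ is about $e^{y'}$. So inserting the supremum of $\|\nabla f\|$ over the segment into $\ell_x$ gives a constant depending on $y'$, not $\ell_x(G)$.
\item The product ball with $y$-radius $G/L_y$ is too large when $L_y\ll L_x$. Moving in $y$ changes $\nabla_x f$ through $\nabla^2_{xy}f$, which even a two-sided bound only controls by $\sqrt{L_xL_y}$. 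Over a $y$-distance $G/L_y$ the gradient can therefore change by $G\sqrt{L_x/L_y}\gg G$. For instance, $f=e^{x+\alpha y}$ is hybrid smooth with $\ell_x(u)=2u/\sqrt{1+\alpha^2}$ and $\ell_y(u)=2\alpha^2u/\sqrt{1+\alpha^2}$. Take $\alpha=0.01$ and the constants $\ell_x(2G)=4$, $\ell_y(2G)=4\cdot10^{-4}$. The point $(0,2400)$ lies in the product ball, yet $f(0,2400)=e^{24}$ exceeds the item~3 bound $1+24+1152$.
\end{itemize}

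The paper's proof takes the same route as your Steps~2--3 and simply omits Step~1. It asserts $\|\nabla^2_{xx}f\|\le\ell_x(\|\nabla f\|)\le L_x$ ``by the monotonicity of $\ell_x$ and the bounded gradient condition.'' That is, it uses the gradient bound at the center as if it held along the whole segment, and it reads the one-sided Loewner bound as an operator-norm bound. You have located the real hole.

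Your Step~3, Taylor with integral remainder against the full block-diagonal bound, makes explicit what the paper's one-line estimate of $\int_0^1\langle\nabla f(z(t))-\nabla f(x_2,y_2),w\rangle\,dt$ leaves implicit. You are also right that items~1--2 need a two-sided bound.

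What your plan can actually prove is a corrected statement:
\begin{itemize}
\item Assume $-D\preceq\nabla^2f\preceq D$ with $D=\mathrm{diag}(\ell_x(\|\nabla f\|)I_{d_x},\ell_y(\|\nabla f\|)I_{d_y})$.
\item Set $L_x=\ell_x(2G)$, $L_y=\ell_y(2G)$ and $\bar L=\max\{L_x,L_y\}$.
\item Then $\|\nabla^2f\|\le\bar L$ wherever $\|\nabla f\|\le2G$, and the first-exit argument keeps $\|\nabla f\|<2G$ on the open Euclidean ball of radius $G/\bar L$ about $(x,y)$.
\item Hence items~1--3 hold with constants $L_x,L_y$ for points of that ball, with $y'$ (resp.\ $x'$) restricted so that the segment stays inside it.
\end{itemize}
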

\begin{proof}
    Let $(x,y) \in \mathbb{R}^d  = \mathbb{R}^{d_x} \times \mathbb{R}^{d_y}$ be arbitrary. By the assumption of twice continuous differentiability and the mean value theorem, we have $$\nabla_x f(x_2, y) - \nabla_x f(x_1, y) = \int_0^1 \nabla_{xx}^2 f(x_1 + t(x_2-x_1), y ) (x_2-x_1) dt.$$ 
    Taking the norm of both sides and applying the generalized smoothness of $f$ (\Cref{def:smooth}), we obtain 
    $$\|\nabla_{xx}^2 f(x,y)\| \leq \ell_x(\|\nabla f(x,y)\|) \leq L_x,$$ where the last inequality is by the monotonicity of $\ell_x$ and the bounded gradient condition. We  apply this inequality to the integral yields the first inequality. The second inequality for the y-gradient is obtained similarly. For the third inequality, we still consider the mean value theorem:
    \begin{align*}
        f(x_1,y_1) - f(x_2,y_2) &= \int^1_0 \left\langle \nabla f(z(t), \begin{bmatrix}
        x_1 - x_2 \\ 
        y_1 - y_2 \\ 
    \end{bmatrix}
 \right\rangle  dt\\ 
 &= \int^1_0 \left[\left\langle \nabla f(x_2, y_2), \begin{bmatrix}
        x_1 - x_2 \\ 
        y_1 - y_2 \\ 
    \end{bmatrix}
 \right\rangle  + \left\langle \nabla f(z(t))  - \nabla f(x_2,y_2) , \begin{bmatrix}
        x_1 - x_2 \\ 
        y_1 - y_2 \\ 
    \end{bmatrix}\right\rangle\right] d t \\ 
 &= \left\langle \nabla f(x_2, y_2), \begin{bmatrix}
        x_1 - x_2 \\ 
        y_1 - y_2 \\ 
    \end{bmatrix}
 \right\rangle  + \int_0^1  \left\langle \nabla f(z(t))  - \nabla f(x_2,y_2) , \begin{bmatrix}
        x_1 - x_2 \\ 
        y_1 - y_2 \\ 
    \end{bmatrix}\right\rangle    dt \\ 
 &\leq  \left\langle \nabla f(x_2, y_2), \begin{bmatrix}
        x_1 - x_2 \\ 
        y_1 - y_2 \\ 
    \end{bmatrix}
 \right\rangle  +  L_y \|y_1 - y_2\|^2 \int t dt  + L_x \|x_1 - x_2\|^2 \int t dt,
    \end{align*}
    where $z(t):=(1-t) \begin{bmatrix}
        x_2 \\ 
        y_2 \\ 
    \end{bmatrix} + t \begin{bmatrix}
        x_1 \\ 
        y_1
    \end{bmatrix}$ for $0\leq t\leq 1$.  
    Then the proof is completed by re-arranging this inequality. 
\end{proof}

\begin{lemma}[The generalized version of Lemma 3.5 from \cite{li2024convex}]\label{lemma:3.5}
    Let $f: \mathbb{R}^{d_x} \times \mathbb{R}^{d_y} \to \mathbb{R}$ be a twice continuously differentiable function satisfying the hybrid generalized smoothness properties. Let $f^* = \inf_{x,y} f(x,y)$ be the global minimum of $f$. Then, for all $(x,y) \in \mathbb{R}^{d_x} \times \mathbb{R}^{d_y}$, the following inequalities hold:
    \begin{enumerate}
        \item $\| \nabla_x f(x,y) \|^2 \leq 2\ell_x(2\|\nabla f(x,y)\|) \cdot (f(x,y) - f^*)$
        \item $\| \nabla_y f(x,y) \|^2 \leq 2\ell_y(2\|\nabla f(x,y)\|) \cdot (f(x,y) - f^*)$
        \item  
    $  \frac{1}{2} [\nabla f(x,y)]^\top \begin{bmatrix}
        \frac{I_{d_x}}{\ell_x(2\|\nabla f(x,y)\|)} & 0 \\ 
        0 & \frac{I_{d_y}}{\ell_y(2\|\nabla f(x,y)\|)} \\ 
    \end{bmatrix}\nabla f(x,y)\leq f(x,y) - f^*  .$
    \end{enumerate} 
\end{lemma}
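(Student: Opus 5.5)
The plan is to adapt the descent-from-a-good-point argument of \cite{li2024convex}, Lemma 3.5, one step at a time. Fix $(x,y)$ and write $G:=\|\nabla f(x,y)\|$. If $G=0$ all three claims are trivial, so assume $G>0$. Set $L_x:=\ell_x(2G)$ and $L_y:=\ell_y(2G)$. We will take one block-preconditioned gradient step
\[
x^+ = x-\tfrac{1}{L_x}\nabla_x f(x,y),\qquad y^+ = y-\tfrac{1}{L_y}\nabla_y f(x,y),
\]
and show that it decreases $f$ by at least $\frac12\bigl(\|\nabla_x f\|^2/L_x+\|\nabla_y f\|^2/L_y\bigr)$. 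Combined with $f(x^+,y^+)\ge f^*$, this gives item 3 directly. Items 1 and 2 then follow by dropping the nonnegative other block, e.g.\ $\|\nabla_x f\|^2/(2L_x)\le f-f^*$.

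The core step is a local descent inequality along the segment $z(s)=(x,y)+s\,(x^+-x,\,y^+-y)$ for $s\in[0,1]$. We want the Hessian along this segment to stay dominated by $\mathrm{diag}(L_xI,L_yI)$. By monotonicity of $\ell_x,\ell_y$, it suffices to keep $\|\nabla f(z(s))\|\le 2G$ on the whole segment.

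To do this, let $\tau:=\sup\{s\le 1:\|\nabla f(z(r))\|\le 2G \text{ for all } r\le s\}$; by continuity $\tau>0$. On $[0,\tau]$ the Hessian bound holds. Integrating $\nabla^2 f$ along the path, in the manner of \Cref{lemma:3.3}, we want to show that $\|\nabla f(z(s))-\nabla f(x,y)\|<G$. The step has blockwise lengths $\|\nabla_x f\|/L_x$ and $\|\nabla_y f\|/L_y$.

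This is the main obstacle. The Hessian is only upper-bounded by $\mathrm{diag}(L_x,L_y)$, not bounded in absolute value, and its off-diagonal blocks are not controlled by \Cref{def:smooth} as stated. We will handle it the way \cite{li2024convex} does: read the hybrid condition as a bound on $\|\nabla^2_{xx}f\|$ and $\|\nabla^2_{yy}f\|$, as \Cref{lemma:3.3} already does, and estimate the change in each gradient block separately. The $x$-gradient moves by at most $L_x\cdot\|\nabla_x f\|/L_x\le G$ times $s$, and similarly for $y$. Strictly we need the combined change to be below $G$, which may require halving the step or absorbing the cross terms under the interpretation that the block matrix bounds the full Hessian in operator sense. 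If necessary we accept a constant-factor loss, e.g.\ replacing $2G$ by a slightly larger multiple, which is harmless since the statement's constants follow the pattern of \cite{li2024convex}. Given this control, $\|\nabla f(z(s))\|<2G$ on $[0,\tau]$, which contradicts $\tau<1$, so $\tau=1$.

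With the Hessian bound valid on the whole segment, the quadratic upper bound of \Cref{lemma:3.3}, item 3, applies with $(x_1,y_1)=(x^+,y^+)$ and $(x_2,y_2)=(x,y)$. Substituting the step gives
\[
f(x^+,y^+)\le f(x,y)-\frac{\|\nabla_x f\|^2}{L_x}-\frac{\|\nabla_y f\|^2}{L_y}+\frac12\Bigl(\frac{\|\nabla_x f\|^2}{L_x}+\frac{\|\nabla_y f\|^2}{L_y}\Bigr).
\]
Using $f^*\le f(x^+,y^+)$ yields item 3, and hence items 1 and 2.
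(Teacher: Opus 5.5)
Your plan is the paper's argument for item 3: a single block-preconditioned step with $L_x=\ell_x(2G)$ and $L_y=\ell_y(2G)$, a quadratic upper bound along the segment, and then $f^*\le f(x^+,y^+)$. Deriving items 1 and 2 from item 3 by discarding one block is correct. It is also cleaner than the paper's restriction to a coordinate subspace, where the curvature bound would still depend on the full gradient.

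\textbf{The gap.} The problem is the step you flag and then wave through: the bootstrap $\|\nabla f(z(s))\|\le 2G$ on the segment. Along the joint segment, $\tfrac{d}{ds}\nabla_y f(z(s))=\nabla^2_{yx}f\,\Delta_x+\nabla^2_{yy}f\,\Delta_y$, and symmetrically for $x$. Your blockwise estimate is therefore valid only when the off-diagonal block vanishes.
\begin{itemize}
\item \Cref{def:smooth} is a one-sided bound $\nabla^2 f\preceq\mathrm{diag}(\ell_xI_{d_x},\ell_yI_{d_y})$. It gives no lower bound on the diagonal blocks, hence no bound at all on $\nabla^2_{xy}f$.
\item Reading it as bounds on $\|\nabla^2_{xx}f\|$ and $\|\nabla^2_{yy}f\|$, as \Cref{lemma:3.3} does, still leaves $\nabla^2_{xy}f$ free.
\item Even the strongest natural reading, $-D\preceq\nabla^2 f\preceq D$ with $D=\mathrm{diag}(L_xI,L_yI)$ wherever $\|\nabla f\|\le 2G$, only yields $\|\nabla^2_{xy}f\|\le\sqrt{L_xL_y}$. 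This bound is sharp.
\end{itemize}

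\textbf{A concrete obstruction.} Take $d_x=d_y=1$, with $\ell_x\equiv L_x$ and $\ell_y\equiv L_y$ on $[0,2G]$. Let $\nabla f(x,y)=(G,0)$ and let the Hessian along the segment be $\begin{bmatrix}0&\sqrt{L_xL_y}\\ \sqrt{L_xL_y}&0\end{bmatrix}$, which every reading permits. Your step moves only $x$, yet $\|\nabla f(z(s))\|=G\sqrt{1+s^2L_y/L_x}$. This reaches $2G$ at $s=\sqrt{3L_x/L_y}<1$ as soon as $L_y>3L_x$. Beyond that point the hypotheses no longer force $\nabla^2 f\preceq D$. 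Swapping the roles of $x$ and $y$ gives the same failure in the regime $L_x\gg L_y$ that motivates the paper.

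Because $L_y/L_x$ is unconstrained, neither halving the step nor replacing $2G$ by a fixed multiple $cG$ closes the bootstrap. Changing the constant would in any case prove a different inequality from the stated one, since the factor $2$ is exactly what \Cref{lemma:coro3.6} uses.

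\textbf{Comparison with the paper.} The paper never runs this continuity argument. It applies \Cref{lemma:3.3} at $(x,y)$ with $G=\|\nabla f(x,y)\|$ and plugs in the same step, so citing that lemma would essentially reproduce the paper's proof. It would not supply the missing ingredient, though. The proof of \Cref{lemma:3.3} bounds $\nabla^2_{xx}f$ on the whole ball using the gradient bound only at the center. Its item 3 handles the joint path $z(t)$ using the two blockwise Lipschitz bounds, so the cross-block issue you identified is never addressed there either.

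Your bootstrap does close in two special cases:
\begin{itemize}
\item a two-sided bound with $\ell_x=\ell_y$, which is the setting of \cite{li2024convex};
\item two-sided block bounds with $\nabla^2_{xy}f\equiv0$, where the blockwise changes combine to at most $sG$.
\end{itemize}
For the hybrid statement as written, you need an additional argument or hypothesis that controls $\nabla^2_{xy}f$ relative to the gradient-dependent scales $\ell_x,\ell_y$.
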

\begin{proof}
    The first and the second inequalities are directly implied by Lemma 3.5 from \cite{li2024convex} by projecting the objective function $f$ to a subspace of the domain. Here, we provide the proof for the third inequality. By \Cref{lemma:3.3} where we choose $G= \|\nabla f(x,y)\|$, we have that for any $(x_1,y_1), (x_2,y_2) \in \mathcal{B}(x, \frac{G}{L_x}) \times \mathcal{B}(y, \frac{G}{L_y})$,
    $$f(x_1, y_1) \leq f(x_2, y_2) + \left\langle \nabla f(x_2,y_2), \begin{bmatrix}
        x_1 - x_2 \\ 
        y_1 - y_2
    \end{bmatrix}   \right\rangle + \frac{1}{2} \begin{bmatrix}
        x_1 - x_2 & 
        y_1 - y_2
    \end{bmatrix} \begin{bmatrix}
        L_x I_{d_x} &  0 \\ 
        0 & L_y I_{d_y} \\ 
    \end{bmatrix} \begin{bmatrix}
        x_1 - x_2 \\ 
        y_1 - y_2
    \end{bmatrix} .$$
    Choosing $(x_2, y_2)=(x,y)$, $x_1 =x-\frac{\nabla_x f(x,y)}{\ell_x(2\|\nabla f(x,y)\|)}$, and $y_1=y-\frac{\nabla_y f(x,y)}{\ell_y(2 \|\nabla f(x,y)\|)}$, we obtain 
    \begin{align*}
        f^* &\leq f(x-\frac{\nabla_x f(x,y)}{\ell_x(2\| \nabla f(x,y) \| )}, y-\frac{\nabla_y f(x,y)}{\ell_y(2\|\nabla f(x,y)\|)}) \\
        &\leq f(x,y) - \frac{1}{2} [\nabla f(x,y)]^\top \begin{bmatrix}
        \frac{I_{d_x}}{\ell_x(2\|\nabla f(x,y)\|)} & 0 \\ 
        0 & \frac{I_{d_y}}{\ell_y(2\|\nabla f(x,y)\|)} \\ 
    \end{bmatrix}\nabla f(x,y).
    \end{align*} 
    Then the proof is completed. 
\end{proof}

\begin{lemma}[The generalized version of Corollary 3.6 from \cite{li2024convex}]\label{lemma:coro3.6}
        Let $f: \mathbb{R}^{d_x} \times \mathbb{R}^{d_y} \to \mathbb{R}$ be a twice continuously differentiable function satisfying the hybrid generalized smoothness properties. Suppose that $f(x,y)-f^* \leq F$ for some $(x,y)\in \RR^d$ and $F\geq 0$. Denoting $G:=\sup \{ u\geq 0 \mid u^2 \leq 2 \max(\ell_x, \ell_y)(u) \cdot F \}$, then $\|\nabla f(x,y)\| \leq G < \infty$.
\end{lemma}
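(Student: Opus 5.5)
The claim follows from the third part of \Cref{lemma:3.5} together with a sub-quadratic growth argument. Write $\ell:=\max(\ell_x,\ell_y)$. Each of $\ell_x,\ell_y$ is non-negative, non-decreasing and sub-quadratic, so $\ell$ inherits all three properties. Here sub-quadratic means $\lim_{u\to\infty}\ell(u)/u^2=0$, and the limit of a maximum of two functions tending to zero is zero.

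The plan has two steps: first show that $G$ is finite, then show that $u_0:=\|\nabla f(x,y)\|$ lies in the defining set of $G$.

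\textbf{Finiteness of $G$.} Suppose $F>0$; if $F=0$ the defining inequality forces $u=0$, so $G=0$. Sub-quadraticity gives some $u_1$ with $\ell(u)/u^2<\tfrac{1}{2F}$ for every $u\geq u_1$. For such $u$ we have $u^2>2\ell(u)F$. Hence the set $\{u\geq 0 \mid u^2\leq 2\ell(u)F\}$ is contained in $[0,u_1)$, so its supremum is finite. The set is nonempty because $u=0$ belongs to it.

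\textbf{Bounding $\|\nabla f(x,y)\|$.} Apply part 3 of \Cref{lemma:3.5}. Since $\ell_x(2u_0)\leq \ell(2u_0)$ and $\ell_y(2u_0)\leq \ell(2u_0)$, the block-diagonal weights satisfy $1/\ell_x(2u_0),\,1/\ell_y(2u_0)\geq 1/\ell(2u_0)$. Therefore
\[
\frac{u_0^2}{2\,\ell(2u_0)}\leq f(x,y)-f^*\leq F,
\qquad\text{i.e.}\qquad u_0^2\leq 2\,\ell(2u_0)\,F .
\]
If instead $\ell(2u_0)=0$, the corresponding block of the Hessian bound is $0$, so the inequality holds trivially with the same conclusion.

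\textbf{Main obstacle: the factor $2$.} The bound above involves $\ell(2u_0)$, whereas the defining condition of $G$ involves $\ell(u)$ evaluated at $u$ itself. I would close this gap in one of two ways.

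\emph{First option.} Use parts 1 and 2 of \Cref{lemma:3.5}, inherited from Lemma 3.5 of \cite{li2024convex}. Li et al.\ prove the sharper form $\|\nabla f\|^2\leq 2\ell(\|\nabla f\|)(f-f^*)$ via a continuity argument: move along the descent direction within the ball $\mathcal{B}(\cdot,\,G/L)$ of \Cref{lemma:3.3}, on which the gradient norm is at most $2G$, and re-run the descent-lemma step there. Repeating their argument on the combined space with $\ell=\max(\ell_x,\ell_y)$ yields $u_0^2\leq 2\ell(u_0)F$ directly. This places $u_0$ in the defining set, so $u_0\leq G$.

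\emph{Fallback.} If only the factor-$2$ version is available, apply the same finiteness argument to $u\mapsto\ell(2u)$, which is still sub-quadratic. This still gives a finite bound, with $G$ defined through $\ell(2\cdot)$ in place of $\ell$. The qualitative claim $\|\nabla f\|\leq G<\infty$ survives, and I would note that this changes the constant only by an absolute factor.
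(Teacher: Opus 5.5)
Your finiteness argument and your derivation of $u_0^2\le 2\ell(2u_0)F$ from part 3 of \Cref{lemma:3.5} are exactly the paper's proof. You are also right that this inequality does not place $u_0$ in the set defining $G$. The paper stops at the same inequality and simply asserts the conclusion, so in effect it only reproves the corollary of \cite{li2024convex}, whose $G$ is defined through $\ell(2u)$.

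\textbf{The gap is that neither of your two closures works.}

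\emph{Option 1 rests on a misreading.} Parts 1 and 2 of \Cref{lemma:3.5} are stated with $\ell_x(2\|\nabla f\|)$ and $\ell_y(2\|\nabla f\|)$, as is the lemma of \cite{li2024convex} they come from. The mechanism you describe cannot remove the factor. On a ball where the gradient norm is only known to be at most $2u_0$, the best available Hessian bound is $\ell(2u_0)$, so re-running the descent step there reproduces $\ell(2u_0)$.

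\emph{Option 2 proves a different and weaker statement.} Since $\ell(2u)\ge\ell(u)$, the quantity $G':=\sup\{u\ge0\mid u^2\le 2\ell(2u)F\}$ satisfies $G'\ge G$. The discrepancy is not an absolute factor: $G'=\tfrac12\sup\{v\ge0\mid v^2\le 8\ell(v)F\}$. For $\ell(u)=u^\rho$ with $\rho\in[0,2)$ one gets $G'/G=2^{\rho/(2-\rho)}$, which is unbounded as $\rho\to2$.

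Your aside on the case $\ell(2u_0)=0$ is also not ``trivial'': it requires showing $\nabla f(x,y)=0$.

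\textbf{The missing idea.} Use the Hessian bound only at points where the gradient norm is at most $u_0$. A fixed ball cannot guarantee this, but a descent trajectory can.

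Set $\ell:=\max(\ell_x,\ell_y)$. Then \Cref{def:smooth} gives $\nabla^2 f\preceq\ell(\|\nabla f\|)I_d$ everywhere. Assume $u_0>0$ and run the normalized gradient flow $\dot z=-\nabla f(z)/\|\nabla f(z)\|$ from $z(0)=(x,y)$ on its maximal interval $[0,S)$. The field is $C^1$ where $\nabla f\neq0$. Since the speed is $1$, $S<\infty$ forces $h(s):=\|\nabla f(z(s))\|\to0$ as $s\to S$. Along the flow,
\[
\frac{d}{ds}f(z(s))=-h(s),\qquad h'(s)=-\frac{\nabla f^\top\nabla^2 f\,\nabla f}{h(s)^2}\ge-\ell(h(s)).
\]
On any stretch where $h\le u_0$, monotonicity of $\ell$ gives $h'\ge-\ell(u_0)$. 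Starting from the last time before $s$ at which $h\ge u_0$, this yields $\min\{h(s),u_0\}\ge u_0-\ell(u_0)s$ for all $s<S$. Hence $S\ge u_0/\ell(u_0)$, since otherwise $h\to0$ would contradict this lower bound. It follows that
\[
F\ge f(x,y)-f^*\ge\int_0^{u_0/\ell(u_0)}\bigl(u_0-\ell(u_0)s\bigr)\,ds=\frac{u_0^2}{2\ell(u_0)},
\]
so $u_0^2\le2\ell(u_0)F$ and therefore $u_0\le G$.

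If $\ell(u_0)=0$, the same bound gives $h\ge u_0$ for all time. Then $f(z(s))\le f(x,y)-u_0s\to-\infty$, contradicting $f^*>-\infty$, so $u_0=0$.

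Combined with your finiteness step, this proves the statement as written. It uses only \Cref{def:smooth} and twice differentiability, not \Cref{lemma:3.3} and \Cref{lemma:3.5}.
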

\begin{proof} Let $\max(\ell_x, \ell_y)(u):= \max\{ \ell_x(u), \ell_y(u) \}$. Since both $\ell_x$ and $\ell_y$ are sub-quadratic, it concludes $G$ is finite (by Corollary 3.6 from \cite{li2024convex}). From \Cref{lemma:3.5}, we have
\begin{align*}
    &\frac{1}{2} [\nabla f(x,y)]^\top \begin{bmatrix}
        \frac{I_{d_x}}{\max(\ell_x, \ell_y)(2 \| \nabla f(x,y) \| )} & 0 \\ 
        0 & \frac{I_{d_y}}{\max(\ell_x, \ell_y)(2 \|\nabla f(x,y) \| )} \\ 
    \end{bmatrix}\nabla f(x,y) \\
    \leq  &\frac{1}{2} [\nabla f(x,y)]^\top \begin{bmatrix}
        \frac{I_{d_x}}{\ell_x(2 \| \nabla f(x,y) \| ) } & 0 \\ 
        0 & \frac{I_{d_y}}{\ell_y(2 \| \nabla f(x,y)\| )} \\ 
    \end{bmatrix}\nabla f(x,y)\\ 
    \leq &f(x,y) - f^*  .
\end{align*}
Therefore, we obtain
$$\| \nabla f(x,y)\|^2 \leq 2 \max(\ell_x, \ell_y)(2\nabla f(x,y)) \cdot F.$$
It concludes that if the function value is bounded, then the gradient is also bounded. 
\end{proof}

Here, we summarize the previous results in the following lemma. The constant $G$ (determined by the function value upper bound $F$) is defined in \Cref{lemma:coro3.6} and the constant $L_x$ and $L_y$ (determined by the gradient norm upper bound $G$) is defined in \Cref{lemma:3.3}. 
\begin{lemma}\label{lemma:0}   Suppose that \Cref{ass:coercive} holds for the objective function $f(x,y):= \frac{1}{n}\sum_{i=1}^n f(x,y;i)$, with all individual loss functions $f(\cdot;i)$ are twice continuously differentiable and satisfy the hybrid generalized smoothness properties.  Let $\mathcal{G}_F:=\{ (x,y) \in \RR^d \mid f(x,y) - f^* \leq F\}$. Then the following statements hold:
    \begin{enumerate} 
        \item The objective function $f(\cdot)$ has $G$-bounded gradient over $\mathcal{G}_F$; that is,
        $\| \nabla f(x,y)\| \leq G$
        for all $(x,y) \in \mathcal{G}_F$.
        \item The objective function $f(\cdot)$ has $(L_x, L_y)$-Lipschitz gradient over $\mathcal{G}_F$; that is,
        $\| \nabla_x f(x,y) - \nabla_x f(x', y)\| \leq L_x  \| x - x' \|$ and $\| \nabla_y f(x,y) - \nabla_y f(x, y')\| \leq L_y  \| y - y' \|$
        for all $(x,y), (x',y') \in \mathcal{G}_F$.
        \item The individual loss function $f(\cdot;i)$ has $(G_{x,\max}, G_{y,\max})$-bounded gradient over $\mathcal{G}_F$; that is,
        $\| \nabla_x f(x, y;\xi)\| \leq G_{x,\max}$ and $\| \nabla_y f(x, y;\xi)\| \leq G_{y,\max}$
        for all $(x,y) \in \mathcal{G}_F$ and all $\xi \in \{1,2,\dots,n\}$.
        \item The individual loss function $f(\cdot;i)$ has $(L_{x,\max}, L_{y,\max})$-Lipschitz gradient over $\mathcal{G}_F$; that is,
        $\| \nabla_x f(x,y;\xi) - \nabla_x f(x',y;\xi)\| \leq  L_{x,\max}  \| x - x' \|$ and $\| \nabla_y f(x,y;\xi) - \nabla_y f(x,y';\xi)\| \leq  L_{y,\max}  \| y-y' \|$
        for all $(x,y) \in \mathcal{G}_F$ and all $\xi \in \{1,2,\dots,n\}$.
    \end{enumerate}
\end{lemma}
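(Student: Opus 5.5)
The plan is to reduce all four statements to the compactness of the sublevel set $\mathcal{G}_F$ together with continuity of first and second derivatives. By \Cref{ass:coercive}(1), $\mathcal{G}_F=\{f\le f^*+F\}$ is compact. Its convex hull $\mathcal{K}_F:=\mathrm{conv}(\mathcal{G}_F)$ is then also compact, since the convex hull of a compact set in $\RR^d$ is compact (Carath\'eodory). Every line segment joining two points of $\mathcal{G}_F$ lies in $\mathcal{K}_F$, and this is what lets us integrate Hessians along segments even though $\mathcal{G}_F$ itself need not be convex.

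For item 1, I will invoke \Cref{lemma:coro3.6} directly. Any $(x,y)\in\mathcal{G}_F$ satisfies $f(x,y)-f^*\le F$, so $\|\nabla f(x,y)\|\le G$, with $G$ finite because $\ell_x,\ell_y$ are sub-quadratic.

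For item 2, I will first try to follow \Cref{lemma:3.3}. Write $\nabla_x f(x,y)-\nabla_x f(x',y)=\int_0^1\nabla^2_{xx}f(x'+s(x-x'),y)(x-x')\,ds$ and bound the Hessian block by $\ell_x(\|\nabla f\|)$ along the segment. The difficulty is that points on the segment lie in $\mathcal{K}_F$ but not necessarily in $\mathcal{G}_F$, so the bound $\|\nabla f\|\le G$ is not directly available there. I plan to handle this by setting $G':=\max_{\mathcal{K}_F}\|\nabla f\|$, which is finite by continuity of $\nabla f$ on a compact set. The hybrid smoothness condition (\Cref{def:smooth}) and monotonicity of $\ell_x$ then give $\nabla^2_{xx}f\preceq\ell_x(G')I_{d_x}$ on $\mathcal{K}_F$. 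For a two-sided bound on the operator norm, I will fall back on $L_x:=\max_{\mathcal{K}_F}\|\nabla^2_{xx}f\|$, which is finite since $f$ is $C^2$, and define $L_y$ the same way. This is the step I expect to be the main obstacle: \Cref{def:smooth} only controls the Hessian from above, and $\mathcal{G}_F$ is not convex. My resolution is to define the constants as maxima over the compact hull, which costs nothing in the final rate since only finiteness is used later.

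Items 3 and 4 follow the same compactness route applied to each $f(\cdot;i)$, which is $C^2$ by \Cref{ass:coercive}(3). I set $G_{x,\max}:=\max_i\max_{\mathcal{K}_F}\|\nabla_x f(\cdot;i)\|$ and $G_{y,\max}$ analogously; these are finite as maxima of finitely many continuous functions over a compact set. Likewise $L_{x,\max}:=\max_i\max_{\mathcal{K}_F}\|\nabla^2_{xx}f(\cdot;i)\|$ and $L_{y,\max}$ are finite. The Lipschitz bounds in item 4 then follow by the same segment-integration argument as in item 2, since the segments stay in $\mathcal{K}_F$. If desired, one can additionally note that when each $f(\cdot;i)$ is itself hybrid smooth, $L_{x,\max}$ can be taken to be $\ell_x$ evaluated at a per-sample gradient bound, matching the form of $L_x=\ell_x(G)$ in \Cref{lemma:3.3}.
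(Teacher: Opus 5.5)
Your route is the paper's route. The paper's proof is just ``$\mathcal{G}_F$ is compact by coercivity, and all four bounds follow from continuity of the first and second derivatives.'' You are right to be more careful than its pointer to \Cref{lemma:3.3}, since \Cref{def:smooth} bounds the Hessian blocks only from above and $\mathcal{G}_F$ need not be convex. However, the set you introduce to handle non-convexity is the wrong one.

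In item 2 you integrate along $\{(x'+s(x-x'),y): s\in[0,1]\}$. This segment joins $(x,y)$ to the \emph{mixed} point $(x',y)$, whereas the statement only places $(x,y)$ and $(x',y')$ in $\mathcal{G}_F$. So your premise ``every segment joining two points of $\mathcal{G}_F$ lies in $\mathcal{K}_F$'' does not apply, and $(x',y)$ need not lie in $\mathcal{K}_F=\mathrm{conv}(\mathcal{G}_F)$ at all. For instance, take $d_x=d_y=1$ and the smooth coercive $f(z)=\|z\|^2\|z-(1,1)\|^2$. For small $F$, $\mathcal{G}_F$ consists of two small neighbourhoods of $(0,0)$ and $(1,1)$, so $\mathrm{conv}(\mathcal{G}_F)$ stays close to the diagonal. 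Yet $(x,y)=(0,0)$ and $(x',y')=(1,1)$ give $(x',y)=(1,0)$, at distance $1/\sqrt{2}$ from the diagonal. Your $L_x=\max_{\mathcal{K}_F}\|\nabla^2_{xx}f\|$ therefore says nothing about the Hessian along the segment you actually need.

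The fix is one line: work on $\mathcal{K}'_F:=\mathrm{conv}(\pi_x\mathcal{G}_F)\times\mathrm{conv}(\pi_y\mathcal{G}_F)$, with $\pi_x,\pi_y$ the coordinate projections. This set has the properties you need:
\begin{itemize}
\item it is compact, since continuous images, convex hulls, and finite products of compact sets are compact;
\item it is convex and contains $\mathrm{conv}(\mathcal{G}_F)$;
\item it contains every segment from $(x,y)$ to $(x',y)$, or from $(x,y)$ to $(x,y')$, with $(x,y),(x',y')\in\mathcal{G}_F$.
\end{itemize}
Define $L_x,L_y,L_{x,\max},L_{y,\max}$ as maxima of the corresponding Hessian-block norms over $\mathcal{K}'_F$, and the rest of your argument goes through.

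The same repair is needed in item 4. There $x'$ is unquantified and must be read as coming from some $(x',y')\in\mathcal{G}_F$; for arbitrary $x'\in\RR^{d_x}$ the bound is false, e.g.\ for $x^4+y^2$.

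Two smaller points:
\begin{itemize}
\item Item 1 follows directly from continuity of $\nabla f$ on the compact $\mathcal{G}_F$. \Cref{lemma:coro3.6} instead requires the averaged $f$ to be hybrid smooth, which this lemma assumes only for each $f(\cdot;i)$.
\item Drop your closing remark that $L_{x,\max}$ can be taken as $\ell_x$ at a per-sample gradient bound. By your own correct observation, \Cref{def:smooth} bounds $\nabla^2_{xx}f(\cdot;i)$ only from above, so it does not control its norm.
\end{itemize}
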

\begin{proof}
    By \Cref{ass:coercive}, $\mathcal{G}_F$ is a compact set. By the twice continuous differentiability of the objective function $f(\cdot)$ (and all individual loss functions $f(\cdot;i)$), all statements holds by its continuity. More precise evaluation is given in \Cref{lemma:3.3} for $L_x$ and $L_y$, and in \Cref{lemma:coro3.6} for $G$. 
\end{proof}

The following lemma characterizes the accuracy of zeroth-order gradient estimation. We note that the choice of zeroth-order gradient estimator is not the crucial part in our analysis; the following gradient estimation method can be replaced with any common zeroth-order optimization techniques, including the mini-batch zeroth-order gradient estimation \cite{nesterov2017random}, the uniform smoothing \cite{gasnikov2022power}, and the variance reduction \cite{liu2018stochastic}. 
\begin{lemma}\label{lemma:gradient-estimation}
    Let $f:\RR^d \to \RR$ be a function with twice continuous differentiability.  Define the two-point zeroth-order gradient estimator of $\nabla f(x)$ as
    $$\hat{\nabla} f(x):= \frac{1}{ \mu}\left[ f(x+ \mu v) - f(x )  \right] v,$$
    where $\mu>0$ is the perturbation stepsize, $v \in \RR^d$ is a Gaussian vector with the covariance matrix $I_d$. Suppose that $f$ has $G$-bounded gradient and $L$-Lipschitz gradient at $x$. Then
    \begin{enumerate}
        \item $\EE \langle g,  \hat{\nabla}f(x) - \nabla f(x)  \rangle \leq   \frac{\mu}{2} L (d+3)^{3/2} \|g\|$, for any $g\in \RR^d$.
        \item $\EE \|\hat{\nabla}f(x) - \nabla f(x)\|^2 \leq 32 d \|\nabla f(x)\|^2 + 108 \mu^2 L^2 d^4 $.
    \end{enumerate} 
\end{lemma}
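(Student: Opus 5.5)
This is the standard Gaussian-smoothing analysis of Nesterov and Spokoiny, adapted to the local smoothness hypothesis. Write
\[
\hat{\nabla} f(x) = \frac{f(x+\mu v)-f(x)}{\mu}\,v .
\]
Expand $f(x+\mu v)$ by Taylor's theorem with integral remainder:
\[
f(x+\mu v) - f(x) = \mu\langle \nabla f(x), v\rangle + r(v),
\qquad
r(v)=\mu^2\int_0^1 (1-s)\, v^\top \nabla^2 f(x+s\mu v)\, v \, ds .
\]
Using $L$-Lipschitzness of the gradient, $|r(v)|\le \frac{L\mu^2}{2}\|v\|^2$. Hence
\[
\hat{\nabla} f(x) = v v^\top \nabla f(x) + \frac{r(v)}{\mu}\, v .
\]
Since $\EE[vv^\top]=I_d$, the bias is $\EE[\hat{\nabla} f(x)] - \nabla f(x) = \EE[r(v)v]/\mu$.

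For the first item, apply Cauchy--Schwarz:
\[
\EE\langle g, \hat{\nabla} f(x)-\nabla f(x)\rangle = \frac{1}{\mu}\EE[r(v)\langle g,v\rangle] \le \frac{L\mu}{2}\|g\|\,\EE\|v\|^3 .
\]
Then use the standard Gaussian moment bound $\EE\|v\|^p \le (d+p)^{p/2}$ for $p\ge 2$, which gives $\EE\|v\|^3\le (d+3)^{3/2}$. This is exactly the stated constant $\frac{\mu}{2}L(d+3)^{3/2}\|g\|$.

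For the second item, split the error and use $\|a+b\|^2\le 2\|a\|^2+2\|b\|^2$:
\[
\EE\|\hat{\nabla} f(x)-\nabla f(x)\|^2 \le 2\,\EE\|(vv^\top - I)\nabla f(x)\|^2 + \frac{2}{\mu^2}\EE[r(v)^2\|v\|^2].
\]
The first term equals $2\big(\EE[\|v\|^2\langle \nabla f(x),v\rangle^2] - \|\nabla f(x)\|^2\big) = 2(d+1)\|\nabla f(x)\|^2$. This follows from the identity $\EE[\|v\|^2 vv^\top]=(d+2)I_d$, and it is at most $32d\|\nabla f(x)\|^2$ since $d\ge1$. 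The second term is bounded by
\[
\frac{2}{\mu^2}\cdot\frac{L^2\mu^4}{4}\EE\|v\|^6 \le \frac{\mu^2L^2}{2}(d+6)^3 .
\]
Since $(d+6)^3\le 343 d^3\le 343 d^4$, this is $\le 171.5\,\mu^2 L^2 d^4$. That overshoots 108, so I will tighten it: bound the cross term more carefully via Young's inequality with unequal weights, or use $(d+6)^3 \le 216\cdot d^3$ for $d\ge 6$ together with a direct small-$d$ check. Either route gives a constant well within the slack of the $d^4$ power. The generous $32d$ coefficient on the gradient term also leaves room to rebalance the weights.

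The main obstacle is the hypothesis itself. The lemma assumes $L$-Lipschitz gradient only ``at $x$'', yet the Taylor remainder needs the Hessian bound along the whole segment $[x,x+\mu v]$, and $v$ is unbounded. I will read the hypothesis as $\|\nabla^2 f\|\le L$ on the relevant region, e.g.\ the sublevel set $\mathcal{G}_F$ of \Cref{lemma:0} or the ball of \Cref{lemma:3.3}. This is how the lemma is used later, where $L$ will be $L_{x,\max}$. If a rigorous treatment of large $\|v\|$ is needed, I would truncate on the event $\{\mu\|v\|\le G/L\}$, so that \Cref{lemma:3.3} applies, and control the complement with Gaussian tails. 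Alternatively, I would state the lemma under global smoothness of the restriction, which is what the compactness argument of \Cref{lemma:0} effectively supplies.
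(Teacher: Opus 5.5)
Your proof is correct once the smoothness hypothesis is read globally, and for the second item it takes a more self-contained route than the paper. The paper never Taylor-expands the estimator. It introduces $f_\mu(x)=\EE f(x+\mu v)$, uses that $\hat{\nabla} f(x)$ is unbiased for $\nabla f_\mu(x)$, and cites Nesterov--Spokoiny.

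\emph{Item 1.} Their bias bound $\|\nabla f_\mu(x)-\nabla f(x)\|\le\frac{\mu}{2}L(d+3)^{3/2}$ gives item 1. Your argument for item 1 is exactly the unpacked proof of that bound, so here the two routes coincide.

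\emph{Item 2.} The paper bounds $\EE\|\hat{\nabla} f-\nabla f\|^2\le 2\EE\|\hat{\nabla} f\|^2+2\|\nabla f\|^2$. It then applies their raw second-moment bound $\EE\|\hat{\nabla} f(x)\|^2\le 4(d+4)\|\nabla f_\mu(x)\|^2+3\mu^2L^2(d+4)^3$, and uses the bias bound again to return to $\|\nabla f(x)\|$. You instead center the estimator at $vv^\top\nabla f(x)$ and compute the leading part exactly, $(d+1)\|\nabla f(x)\|^2$. This gives sharper constants valid for every $d\ge1$.

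The overshoot you noticed is only an artifact of the loose bound $\EE\|v\|^6\le(d+6)^3$. The exact value $\EE\|v\|^6=d(d+2)(d+4)\le 15d^3$ makes your remainder term at most $7.5\,\mu^2L^2d^3$. Your Young's-inequality fix also works: weights $16$ and $\frac{16}{15}$ give $16(d+1)\le 32d$ and $\frac{16}{15}\cdot\frac{343}{4}<108$.

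By contrast, the paper's chain as written puts the coefficient $16(d+4)+2$ on $\|\nabla f(x)\|^2$. That is at most $32d$ only when $d\ge5$, which is harmless at LLM scale but less clean. What the paper's route buys is brevity, since it outsources all the Gaussian moment work.

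On the hypothesis, your diagnosis is right, and the paper's proof carries the same hidden requirement: the Nesterov--Spokoiny lemmas it invokes assume $\nabla f$ is $L$-Lipschitz on all of $\RR^d$. You should adopt that global assumption outright, because your two fallbacks do not work.

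\emph{Compactness.} The argument of \Cref{lemma:0} only yields constants on $\mathcal{G}_F$, and $x+\mu v$ leaves $\mathcal{G}_F$ with positive probability.

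\emph{Truncation.} Restricting to $\{\mu\|v\|\le G/L\}$ leaves a complement on which nothing controls $f$. A sub-quadratic $\ell$ permits, for example, $f(t)=e^{e^t}$, which satisfies $f''\le f'\,(2+\log(1+f'))$. For this $f$ one has $\EE f(x+\mu v)^2=\infty$, so no Gaussian-tail estimate can close the argument.
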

\begin{proof} Throughout this proof, we follow the \textit{random gradient-free oracles} given by \cite{nesterov2017random}. That is, define 
$$f_\mu(x) = \EE_{v\sim N(0,I_d)} f(x+\mu v);$$
then the gradient estimator $\hat{\nabla} f(x)$ is an unbiased estimator of $\nabla f_\mu(x)$. For the first inequality, we have
\begin{align*}
    \EE \langle g,  \hat{\nabla}f(x) - \nabla f(x)  \rangle &\overset{(i)}{=} \EE \langle g,   {\nabla}f_\mu(x) - \nabla f(x)  \rangle \\ 
    &\overset{(ii)}{=}   \frac{\mu}{2} L (d+3)^{3/2} \|g\|.
\end{align*}
where (i) applies the unbiasedness of Gaussian smoothing and (ii) applies Lemma 3 from \cite{nesterov2017random}.  For the second inequality, we have
\begin{align*}
    \EE \|\hat{\nabla}f(x) - \nabla f(x)\|^2 &\leq   2 \EE \|\hat{\nabla}f(x) \|^2 + 2 \| \nabla f(x)\|^2 \\
    &\overset{(i)}{\leq} 8(d+4)\| \nabla f_\mu(x)\|^2 +  6\mu^2 L^2 (d+4)^3  + 2 \| \nabla f(x)\|^2 \\ 
    &\overset{(ii)}{\leq} 32 d \|\nabla f(x)\|^2 + 108 \mu^2 L^2 d^4 .
\end{align*}
where (i) applies  Lemma 5 from \cite{nesterov2017random} and (ii) again applies Lemma 3 from \cite{nesterov2017random}. 
\end{proof}

\begin{lemma}\label{lemma:2-hybrid}    Suppose that \Cref{ass:coercive} and \Cref{ass:bounded-variance} hold for the objective function $f(x,y):= \frac{1}{n}\sum_{i=1}^n f(x,y;i)$, with all individual loss functions $f(\cdot;i)$ are twice continuously differentiable and satisfy the hybrid generalized smoothness properties.  Let  
\begin{align*} 
    \epsilon_t = {\frac{1}{n}\sum_{i=1}^n \hat{\nabla}f(x_{t,i},y_{t,i}; \xi_{t,i}) - \frac{1}{n}\sum_{i=1}^n  {\nabla}f(x_{t,i},y_{t,i}; \xi_{t,i}) }+ {\frac{1}{n}\sum_{i=1}^n  {\nabla}f(x_{t,i},y_{t,i}; \xi_{t,i})  - \nabla f(x_t,y_{t})} , 
\end{align*} 
be the gradient approximation error over the $t$-th epoch. Given any $F,H>0$, define the stopping time as $\tau=\tau_1 \wedge \tau_2$, where $\tau_1:=\min_t \{ t \mid f(x_{t+1}, y_{t+1}) - f^* > F\} \wedge T$ and $\tau_2:= \min_t \{ t \mid  \|\epsilon_t\| > H \}  \wedge T$.  Let the learning rates satisfy $\eta_x \leq \min\{\frac{1}{2 L_{x,\max} n},  \frac{1}{384 L_x n d_x}\}$ and $\eta_y \leq  \frac{1}{2 L_{y,\max} n}$ and the perturbation stepsize $\mu \leq \frac{G}{L_x} \frac{6}{d_x^{3/2}}$. Then  
   \begin{align*}
        & f(x_{\tau}, y_{\tau}) - f^*  + \sum_{t<\tau} [\nabla f(x_t, y_t)]^\top \begin{bmatrix}
    \frac{n}{4} \eta_x   I_{d_x}  & 0 \\ 
    0 &  \frac{n}{3} \eta_y  I_{d_y}\\ 
    \end{bmatrix}  \nabla f(x_t, y_t)   \\
        \leq    &f_0 - f^* + \left[\frac{\sigma^2}{2} n^2  \left[ \eta_y^3 L^2_{y,\max}  +  \eta_x^3 L^2_{x,\max} \right]    + o(\mu)  \right] T,
    \end{align*}  
    where $o(\mu)\leq   3 \eta_x \mu n   L_x d_x G $ is a small error term when $\mu$ is chosen small.
\end{lemma}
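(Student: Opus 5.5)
The plan is a one-epoch descent inequality, valid as long as the iterates stay in the sublevel set $\mathcal{G}_F$; we telescope it up to the stopping time $\tau$. For $t<\tau$ we have $f(x_t,y_t)-f^*\le F$, so \Cref{lemma:0} gives the constants $G$, $L_x$, $L_y$, $G_{x,\max}$, $G_{y,\max}$, $L_{x,\max}$, $L_{y,\max}$ at $(x_t,y_t)$.

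\textbf{Step 1: the iterates stay local.} We first show that every inner iterate $(x_{t,i},y_{t,i})$ remains in $\mathcal{B}(x_t,G/L_x)\times\mathcal{B}(y_t,G/L_y)$. This is where the conditions $\eta_x\le 1/(2L_{x,\max}n)$ and $\eta_y\le 1/(2L_{y,\max}n)$ enter. Within the epoch, $\|x_{t,i}-x_t\|\le \eta_x\sum_{j\le i}\|\hat\nabla_x f(\cdot;\xi_{t,j})\|$, and similarly for $y$. In expectation or on the good event $\tau_2$, this displacement is $O(\eta n G_{\max})$, which fits inside the ball. Given locality, \Cref{lemma:3.3}(3) applies between $(x_t,y_t)$ and $(x_{t+1},y_{t+1})=(x_t-\eta_x\hat g_t,\,y_t-\eta_y h_t)$:
\begin{align*}
f(x_{t+1},y_{t+1})\le f(x_t,y_t)-\eta_x\langle\nabla_x f,\hat g_t\rangle-\eta_y\langle\nabla_y f,h_t\rangle+\tfrac{L_x\eta_x^2}{2}\|\hat g_t\|^2+\tfrac{L_y\eta_y^2}{2}\|h_t\|^2 .
\end{align*}

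\textbf{Step 2: decompose the epoch gradients.} Write $h_t=n\nabla_y f(x_t,y_t)+\sum_i[\nabla_y f(x_{t,i},y_{t,i};\xi_{t,i})-\nabla_y f(x_t,y_t;\xi_{t,i})]$, using that a reshuffled epoch covers every sample exactly once. Bound the drift sum with $L_{y,\max}$ times $\|y_{t,i}-y_t\|$. Then follow the standard random-reshuffling argument (Mishchenko et al.): the expected squared deviation of the inner iterates is $\lesssim \eta^2 n^2\|\nabla f\|^2+\eta^2 n\sigma^2$, where \Cref{ass:bounded-variance} controls the without-replacement partial sums. Applying $-\langle a,b\rangle=\tfrac12(\|a-b\|^2-\|a\|^2-\|b\|^2)$ produces $-\tfrac{n\eta}{2}\|\nabla f\|^2$ plus a drift term of order $n\eta^3L_{\max}^2(n^2\|\nabla f\|^2+n\sigma^2)$. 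The $\|\nabla f\|^2$ part of that drift is absorbed by the learning-rate condition, leaving coefficients $n\eta_y/3$ and an additive $\tfrac{\sigma^2}{2}n^2\eta_y^3L_{y,\max}^2$ (up to constants).

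\textbf{Step 3: the zeroth-order block.} For $x$, write $\hat g_t=g_t+(\hat g_t-g_t)$ and treat $g_t$ as in Step 2. The bias term $-\eta_x\mathbf{E}\langle\nabla_x f,\hat g_t-g_t\rangle$ is bounded by \Cref{lemma:gradient-estimation}(1) summed over $n$ samples: it is at most $\eta_x n\tfrac{\mu}{2}L_x(d_x+3)^{3/2}G\lesssim 3\eta_x\mu nL_xd_xG$ after using the condition on $\mu$. This is the $o(\mu)$ term. For the quadratic term $\tfrac{L_x\eta_x^2}{2}\mathbf{E}\|\hat g_t\|^2$, \Cref{lemma:gradient-estimation}(2) gives a contribution $\lesssim L_x\eta_x^2 n^2\,32d_x\|\nabla f\|^2$ plus $\mu^2$ terms. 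The condition $\eta_x\le 1/(384L_xnd_x)$ is exactly what absorbs this into $-\tfrac{n\eta_x}{2}\|\nabla_x f\|^2$, which leaves $n\eta_x/4$.

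\textbf{Step 4: telescope.} Summing over $t<\tau$ and bounding the number of terms by $T$ gives the stated inequality.

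The main obstacle is the interaction between the stopping time and the expectations. The locality argument of Step 1 and the bounds of \Cref{lemma:gradient-estimation} hold only on $\{t<\tau\}$, and the ZO estimator is unbounded, so the displacement cannot be bounded deterministically. My first attempt is to use $\tau_2$ (that is, $\|\epsilon_t\|\le H$) to get a deterministic bound on $\hat g_t$ inside the ball argument. I would then apply the expectation bounds conditionally on $\mathcal{F}_t$ multiplied by the indicator $\mathbf{1}\{t<\tau\}$. Since this event is $\mathcal{F}_t$-measurable, each per-epoch inequality can be multiplied by it before telescoping.
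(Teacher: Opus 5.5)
\paragraph{Where your proof fails.} Your Steps~2--4 follow the paper's proof step for step:
\begin{itemize}
\item the block descent inequality of \Cref{lemma:3.3};
\item the reshuffling estimates (Eqs.~(38)--(39) and Lemma~5 of Mishchenko et al.) with $L_{x,\max},L_{y,\max}$ and \Cref{ass:bounded-variance};
\item absorption to the coefficients $n\eta_y/3$ and $n\eta_x/4$;
\item \Cref{lemma:gradient-estimation} for the two zeroth-order error terms;
\item telescoping over $t<\tau$.
\end{itemize}
The genuine gap is in a step you share with the paper. In Step~3 you bound the zeroth-order second moment by $L_x\eta_x^2n^2\cdot 32d_x\|\nabla f(x_t,y_t)\|^2$ plus $\mu^2$ terms, and the paper writes the same bound for $\EE\calE_2$. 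But \Cref{lemma:gradient-estimation}(2) is applied to the individual loss $f(\cdot;\xi_{t,i})$ at the inner iterate. So it gives $32d_x\|\nabla_x f(x_{t,i},y_{t,i};\xi_{t,i})\|^2$. Averaged over the epoch and up to drift, this is $32d_x$ times the sum of $\|\nabla_x f(x_t,y_t)\|^2$ and the sample variance of the $x$-gradients. The variance part leaves an additive term of order $\eta_x^2L_x n\,d_x\sigma^2$ per epoch, or $\eta_x^2L_xn^2d_x\sigma^2$ with Jensen as in the paper. This term is not proportional to $\|\nabla f\|^2$, so no step-size condition absorbs it, and it is absent from the claimed bound.

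In fact the displayed inequality is false. Take $f(x,y;i)=\tfrac12\|x\|^2+\langle a_i,x\rangle+\tfrac12\|y\|^2$ with $\sum_i a_i=0$ and $\tfrac1n\sum_i\|a_i\|^2=\sigma^2$. Start at $(x_0,y_0)=0$, with $\eta_y=\eta_x$, $T=1$, and $F,H$ large. Then $f_0=f^*$ and $\nabla f(x_0,y_0)=0$. Yet one epoch gives $\EE f(x_1,y_1)-f^*\approx\tfrac12\eta_x^2(d_x+1)n\sigma^2$. This exceeds the right-hand side $\sigma^2n^2\eta_x^3+o(\mu)$ by a factor of about $(d_x+1)/(2n\eta_x)\ge 192\,d_x(d_x+1)$. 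A correct one-epoch inequality must therefore carry an extra zeroth-order variance term.

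Two smaller shared issues:
\begin{itemize}
\item The drift bounds use $L_{y,\max}\|y_{t,i}-y_t\|$ (resp.\ $L_{x,\max}\|x_{t,i}-x_t\|$) alone. Both blocks move during the epoch, but \Cref{lemma:0}(4) only gives Lipschitz continuity in one block with the other fixed.
\item The bias term is of order $\eta_x\mu nL_xd_x^{3/2}G$. No condition on $\mu$ turns $(d_x+3)^{3/2}$ into $6d_x$ in a term linear in $\mu$, so your $o(\mu)\lesssim 3\eta_x\mu nL_xd_xG$ does not follow.
\end{itemize}

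\paragraph{Your own additions.} Your Step~1 and the closing measurability argument go beyond the paper, which simply uses the constants of \Cref{lemma:0} for $t<\tau$. Neither works as written.
\begin{itemize}
\item For $y$, $\eta_y\le 1/(2L_{y,\max}n)$ only bounds the displacement by $G_{y,\max}/(2L_{y,\max})$. Nothing in the hypotheses relates this to the radius $G/L_y$. Moreover, $G_{y,\max}$ is only known on $\mathcal{G}_F$, which the inner iterates are not yet known to lie in.
\item For $x$, $\tau_2$ controls only the epoch-averaged error $\epsilon_t$, hence only the endpoint: $\|x_{t+1}-x_t\|\le\eta_xn(G+H)$. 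It says nothing about the partial sums $x_{t,i}-x_t$ that the $L_{x,\max}$ drift bound needs. Even the endpoint would require $\eta_x\lesssim G/(L_xn(G+H))$, which is not assumed since $H$ is arbitrary.
\item $\mathbf{1}\{t<\tau\}$ is not measurable for the $\sigma$-field you condition on. $\tau_1$ looks at $f(x_{t+1},y_{t+1})$ and $\tau_2$ at $\epsilon_t$. Both are produced by the shuffle and Gaussian directions of epoch $t$, exactly the randomness your conditional bounds average over. So the indicator cannot be pulled out of $\EE[\,\cdot\mid\mathcal{F}_t]$, and \Cref{lemma:gradient-estimation} says nothing about $v$ conditioned on $\{\|\epsilon_t\|\le H\}$.
\end{itemize}
The predictable event is $\{\tau\ge t\}$. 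Nonnegative error terms can be dominated by sums over $t\le\tau$, while sign-indefinite ones such as $\calE_1$ need a separate boundary term. Even then, the one-epoch inequality must hold pathwise on $\{t<\tau\}$. That requires a stopping rule that also controls the within-epoch partial sums, plus a step-size condition involving $H$.
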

\begin{proof}    For  arbitrary stopping time $\tau$, we start from the smoothness given by \Cref{lemma:3.3}:
    \begin{align*}
        &\quad f(x_{t+1}, y_{t+1}) - f(x_{t}, y_{t}) \\ 
        &\leq   \left\langle \nabla f(x_{t}, y_{t}), \begin{bmatrix}
        x_{t+1} - x_t \\ 
        y_{t+1} - y_t
    \end{bmatrix}   \right\rangle + \frac{1}{2} \begin{bmatrix}
        x_{t+1} - x_t & 
        y_{t+1} - y_t
    \end{bmatrix} \begin{bmatrix}
        L_x I_{d_x} &  0 \\ 
        0 & L_y I_{d_y}\\ 
    \end{bmatrix} \begin{bmatrix}
        x_{t+1} - x_t \\ 
        y_{t+1} - y_t
    \end{bmatrix} \\ 
    &= \left\langle \nabla_x f(x_t,y_t), x_{t+1} - x_t \right\rangle +  \left\langle \nabla_y f(x_t,y_t), x_{t+1} - x_t \right\rangle + \frac{L_x}{2} \| x_{t+1} - x_t \|^2 + \frac{L_y}{2} \|y_{t+1} - y_t \|^2  \\ 
    &\overset{(i)}{=} - \eta_x n \langle \nabla_x f(x_t,y_t), \frac{\hat{g}_t}{n} - \frac{g_t}{n}\rangle - \eta_x n \langle \nabla_x f(x_t,y_t),   \frac{g_t}{n}\rangle + \eta_x^2 L_x n^2 \| \frac{\hat{g}_t}{n} - \frac{g_t}{n} \|^2 + \eta_x^2 L_x n^2 \| \frac{g_t}{n}  \|^2 \\ 
    &\quad  - \eta_y n \langle \nabla_y f(x_t,y_t),   \frac{h_t}{n}\rangle  + \eta_y^2 L_y n^2 \| \frac{h_t}{n}  \|^2,
    \end{align*}
    where (i) we applies the derivation of Eq.(38) from \cite{mishchenko2020random} with setting $\eta_x\leq \frac{1}{2 L_x}$ and $\eta_y \leq \frac{1}{2 L_y}$.  We note that the y parameter update doesn't involve the gradient estimation; so, we keep the original stochastic gradient $h_t$ for this step. Let $\calE_1 =  - \eta_x n \langle \nabla_x f(x_t,y_t), \frac{\hat{g}_t}{n} - \frac{g_t}{n}\rangle$ and $\calE_2 = \eta_x^2 L_x n^2  \| \frac{\hat{g}_t}{n} - \frac{g_t}{n} \|^2$, representing the errors caused by the zeroth-order gradient estimation. Then  we obtain
    \begin{align*}
        f(x_{t+1}, y_{t+1}) - f(x_{t}, y_{t}) &\leq     - \eta_x n \langle \nabla_x f(x_t,y_t),   \frac{g_t}{n}\rangle + \eta^2 L_x n^2 \| \frac{g_t}{n}  \|^2 + \calE_1 + \calE_2 \\ 
    &\quad  - \eta_y n \langle \nabla_x f(x_t,y_t),   \frac{h_t}{n}\rangle  + \eta^2 L_y n^2 \| \frac{h_t}{n}  \|^2. 
    \end{align*}
    Then we  set $\eta_x \leq \frac{1}{2 L_x n}$ and  $\eta_y \leq \frac{1}{2 L_y n}$. By Eq.(39) from \cite{mishchenko2020random},
    \begin{align*}
        &f(x_{t+1}, y_{t+1}) - f(x_t, y_t)  + \frac{\eta_x n}{2} \|\nabla_x f(x_t, y_t)\|^2  + \frac{\eta_y n}{2} \|\nabla_y f(x_t, y_t)\|^2  \\
        \leq    & \frac{\eta_x n}{2} \left\|  \frac{g_t}{n} - \nabla_x f(x_t, y_t)  \right\|^2 +  \frac{\eta_y n}{2} \left\|  \frac{h_t}{n} - \nabla_y f(x_t, y_t) 
 \right\|^2  +   \calE_1 +  \calE_2.
    \end{align*} 
    Then we take expectation on both sides and decompose $\left\| \nabla_x f(x_t,y_t) -  \frac{g_t}{n}\right\|^2$ using \Cref{lemma:3.3} with the Lipschitz constant $L_{x,\max}$ and $\left\| \nabla_y f(x_t,y_t) -  \frac{g_t}{n}\right\|^2$ with the Lipschitz constant $L_{y,\max}$; more explicitly, we have
    \begin{align*}
        \left\| \nabla_x f(x_t, y_t) - \frac{g_t}{n} \right\|^2 &= \left\| \frac{1}{n}\sum_{i=1}^n \nabla_x f(x_{t,0}, y_{t,0}; \xi_{t,i}) - \frac{1}{n}\sum_{i=1}^n \nabla_x f(x_{t,i}, y_{t,i}; \xi_{t,i}) \right\|^2 \\ 
        &\leq \frac{1}{n} \sum_{i=1}^n \left\|\nabla_x f(x_{t,0}, y_{t,0}; \xi_{t,i}) -  \nabla_x f(x_{t,i}, y_{t,i}; \xi_{t,i})\right\|^2 \\ 
        &\leq \frac{L_{x,\max}^2}{n} \sum_{i=1}^n\left\| x_{t,0} - x_{t,i}\right\|^2.
    \end{align*}
    Applying \Cref{ass:bounded-variance} and Lemma 5 from \cite{mishchenko2020random} to bound $\frac{L_{x,\max}^2}{n} \sum_{i=1}^n \EE \left\| x_{t,0} - x_{t,i}\right\|^2$, we obtain
    \begin{align*}
        &f(x_{t+1}, y_{t+1}) - f(x_t, y_t)  + \frac{\eta_x n}{2} \|\nabla_x f(x_t, y_t)\|^2  + \frac{\eta_y n}{2} \|\nabla_y f(x_t, y_t)\|^2  \\
        \leq    &\frac{\eta_x n}{2} \frac{L_{x,\max}^2}{n}[\eta_x^2 n^3   \|\nabla_x f(x_t, y_t)\|^2  + \eta_x^2 n^2 \sigma^2 ] + \frac{\eta_y n}{2} \frac{L_{y,\max}^2}{n}[\eta_y^2 n^3   \|\nabla f(x_t, y_t)\|^2  + \eta_y^2 n^2 \sigma^2 ] +  \EE \calE_1 + \EE \calE_2.
    \end{align*}
    We re-write this inequality into the matrix form. 
    \begin{align*}
        &f(x_{t+1}, y_{t+1}) - f(x_t, y_t)  +  [\nabla f(x_t,y_t)]^\top \begin{bmatrix}
            \frac{\eta_x n}{2} & 0 \\ 
            0 & \frac{\eta_y n}{2}
        \end{bmatrix} \nabla f(x_t,y_t) \\
        \leq    & \frac{\sigma^2}{2} n^2 \left[ \eta_x^3 L^2_{x,\max} +    \eta_y^3 L^2_{y,\max} \right] +  \EE \calE_1 + \EE \calE_2 + [\nabla f(x_t,y_t)]^\top \begin{bmatrix}
            \frac{\eta^3_x n^3 L^2_{x,\max}}{2} I_{d_x}& 0 \\ 
            0 & \frac{\eta^3_y n^3 L^2_{y,\max}}{2} I_{d_y}\\
        \end{bmatrix} \nabla f(x_t,y_t).
    \end{align*}
    When  choosing $\eta_x \leq \frac{1}{2 L_{x,\max} n}$ and $\eta_y \leq \frac{1}{2 L_{y,\max} n}$, it ensures that
    $$ \frac{n}{3}\begin{bmatrix}
    \eta_x   I_{d_{x}}   & 0 \\ 
    0 &  \eta_y   I_{d_{x}}\\ 
    \end{bmatrix} \preceq  \begin{bmatrix}
            \frac{\eta_x n}{2} I_{d_{x}}& 0 \\ 
            0 & \frac{\eta_y n}{2} I_{d_{y}}
        \end{bmatrix}  - \begin{bmatrix}
            \frac{\eta^3_x n^3 L^2_{x,\max}}{2}  I_{d_{x}} & 0 \\ 
            0 & \frac{\eta^3_y n^3 L^2_{y,\max}}{2}  I_{d_{y}}\\
        \end{bmatrix}.$$
        Therefore, we let $\Lambda^2 = \frac{n}{3}\begin{bmatrix}
    \eta_x   I_{d_{x}}   & 0 \\ 
    0 &  \eta_y  I_{d_{y}} \\ 
    \end{bmatrix}$ be a PSD matrix. Then we obtain   
    \begin{align*}
         f(x_{t+1}, y_{t+1}) - f(x_t, y_t)  + \left\| \Lambda \nabla f(x_t, y_t) \right\|^2  
        \leq   \frac{\sigma^2}{2} n^2 \left[ \eta_x^3 L^2_{x,\max} +    \eta_y^3 L^2_{y,\max} \right] +  \EE \calE_1 + \EE \calE_2 .
    \end{align*} 
    Then we apply \Cref{lemma:gradient-estimation} to bound $\EE \calE_1$ and $\EE \calE_2$, respectively. By the stopping time construction, we have $\|\nabla_x f(x_t,y_t)\| \leq \|\nabla f(x_t,y_t)\| \leq G$.  Therefore, we have 
        \begin{align*}
            \EE \calE_1 &=  - \eta_x n \EE \langle \nabla_x f(x_t,y_t), \frac{\hat{g}_t}{n} - \frac{g_t}{n}\rangle \\ 
            &\leq \eta_x   \frac{\mu n}{2} L_x (d_x+3)^{3/2} G . 
        \end{align*} 
    Similarly, we have
    \begin{align*}
        \EE \calE_2 &=  \eta_x^2 L_x n^2  \EE \| \frac{\hat{g}_t}{n} - \frac{g_t}{n} \|^2 \\ 
        &\leq  \eta_x^2 L_x n^2  \left[32 d_x \|\nabla_x f(x_t, y_t)\|^2 + 108 \mu^2 L^2 d^4 \right]  .
    \end{align*}
    We further simply the inequality by letting $\eta_x\leq  \frac{1}{384 L_x n d}$. Then we have 
    \begin{align*}
         f(x_{t+1}, y_{t+1}) - f(x_t, y_t)  + [\nabla f(x_t, y_t)]^\top \begin{bmatrix}
    \frac{n}{4} \eta_x   I_{d_{x}}   & 0 \\ 
    0 &  \frac{n}{3} \eta_y  I_{d_{y}} \\ 
    \end{bmatrix}  \nabla f(x_t, y_t) \\
        \leq    \frac{\sigma^2}{2} n^2  \left[ \eta_y^3 L^2_{y,\max}  +  \eta_x^3 L^2_{x,\max} \right]    + o(\mu) ,
    \end{align*} 
    where $o(\mu)$ represents a small error term when $\mu$ tends to $0$. Lastly, we sum over $t<\tau$ and obtain 
    \begin{align*}
        & f(x_{\tau}, y_{\tau}) - f^*  + \sum_{t<\tau} [\nabla f(x_t, y_t)]^\top \begin{bmatrix}
    \frac{n}{4} \eta_x    I_{d_{x}}  & 0 \\ 
    0 &  \frac{n}{3} \eta_y   I_{d_{y}}\\ 
    \end{bmatrix}  \nabla f(x_t, y_t)   \\
        \leq    &f_0 - f^* + \left[\frac{\sigma^2}{2} n^2  \left[ \eta_y^3 L^2_{y,\max}  +  \eta_x^3 L^2_{x,\max} \right]    + o(\mu)  \right] T,
    \end{align*} 
    which completes the proof. Here, $o(\mu) \leq 3 \eta_x \mu n   L_x d G $ by letting  $\mu \leq \frac{G}{L_x} \frac{6}{d_x^{3/2}}$.
\end{proof}

\section{Proof of \Cref{thm:main}}\label{appendix:main-proof}
Here, we re-state our main theorem with full details.  
\begin{theorem} Suppose that \Cref{ass:coercive} and \Cref{ass:bounded-variance} hold for the objective function $f(x,y):= \frac{1}{n}\sum_{i=1}^n f(x,y;i)$ and satisfy the hybrid generalized smoothness properties. Let $\delta \in (0,1)$ and $\{ (x_t,y_t)\}_{t=1}^T$ be the SGD with Random Shuffling dynamic generated by \Cref{alg:sgd} for solving the optimization problem \Cref{eq:opt-hybrid}. Given $F$ as 
        \begin{align*} 
            F= \frac{8}{\delta}[f_0 - f^* + \sigma'],
        \end{align*} 
where $f_0 := f(x_0,y_0)$ is the initial function value and $\sigma'$ is a constant-level value given by \Cref{eq:sigma-prime} 
and $H$ as 
        \begin{align*} 
            H = 2  \sqrt{\frac{ [200 G^2 \frac{d_x}{n} + G^2 + \frac{\sigma^2}{n}] T}{ \delta}},
        \end{align*}
define the stopping time as $\tau=\tau_1 \wedge \tau_2$, where $\tau_1:=\min_t \{ t \mid f(x_{t+1}, y_{t+1}) - f^* > F\} \wedge T$ and $\tau_2:= \min_t \{ t \mid  \|\epsilon_t\| > H \}  \wedge T$, where $\epsilon_t$ is defined in \Cref{lemma:2-hybrid}. If learning rates $\eta_x$, $\eta_y$, and the perturbation stepsize $\mu$ are chosen such that 
        \begin{align}\label{eq:hyper-params} 
        \eta_x &\leq \min\left\{\frac{1}{2 L_{x,\max} n},  \frac{1}{384 L_x n d}, \sqrt{\frac{2}{T}} \frac{1}{\sigma n L_{x,\max}} \right\} , \nonumber \\ 
        \eta_y &\leq \min\left\{\frac{1}{2 L_{y,\max} n} ,  \sqrt{\frac{2}{T}} \frac{1}{\sigma n L_{y,\max}}  \right\} ,\\
        \mu &\leq \min\left\{\frac{G}{L_x} \frac{6}{d^{3/2}}, \frac{1}{3L_x T n d G }  \right\}. \nonumber
        \end{align}   
        where all constant $G,L_{x,\max},L_{y,\max},L_x,L_y$ are defined relying on $F$ with presented in \Cref{lemma:0}, 
        and the maximum number of epoch $T$ is chosen as 
        $$ T \geq \epsilon^{-2} \left[\frac{2}{\delta} + \frac{G^2}{8}\right] +  \epsilon^{-4}\left[\frac{f_0 - f^* + 3}{  n }\right],$$ 
        then with the probability at least $1-\delta$, 
    $$ \frac{1}{T} \sum_{t<T}\EE \left\|\nabla f\left(x_t, y_t\right)\right\|^2 \leq \epsilon^2 .$$
\end{theorem}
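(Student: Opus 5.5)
The proof follows the stopping-time template of \cite{li2024convex}, adapted to random reshuffling. The descent inequality of \Cref{lemma:2-hybrid} is combined with Markov's inequality to show that, with high probability, the iterates never leave the sublevel set $\mathcal{G}_F$. On that set the constants $G,L_x,L_y,L_{x,\max},L_{y,\max}$ from \Cref{lemma:0} are valid, so the descent bound can be read off up to time $T$.

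\textbf{Step 1 (descent up to $\tau$).} Take $\tau=\tau_1\wedge\tau_2$ as defined in the statement. For $t<\tau$ we have $f(x_{t},y_t)-f^*\le F$ and $\|\epsilon_t\|\le H$. Hence every point used in the epoch, including the inner iterates whose displacement is controlled through $\epsilon_t$ and the step sizes, lies in a region where \Cref{lemma:0} and \Cref{lemma:3.3} apply.

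Apply \Cref{lemma:2-hybrid} with the learning rates and $\mu$ of \eqref{eq:hyper-params}:
\begin{itemize}
\item The choice $\eta_x,\eta_y\le\sqrt{2/T}/(\sigma n L_{\cdot,\max})$ gives $\frac{\sigma^2}{2}n^2(\eta_x^3L_{x,\max}^2+\eta_y^3L_{y,\max}^2)T\le 2n(\eta_x+\eta_y)$, which is of constant order.
\item The choice $\mu\le 1/(3L_xTndG)$ makes the accumulated $o(\mu)T$ term at most $\eta_x\le 1$.
\end{itemize}
Define $\sigma'$ in \Cref{eq:sigma-prime} as the sum of these constant-level terms. Then
\[
\EE\Big[f(x_\tau,y_\tau)-f^*+\tfrac{n}{4}\min(\eta_x,\eta_y)\textstyle\sum_{t<\tau}\|\nabla f(x_t,y_t)\|^2\Big]\le f_0-f^*+\sigma'.
\]

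\textbf{Step 2 (probability of early stopping).} Bound $\PP(\tau<T)\le\PP(\tau_1<T)+\PP(\tau_2<T)$.
\begin{itemize}
\item On $\{\tau_1<\tau_2,\ \tau_1<T\}$ the function gap at $\tau$ exceeds $F$. Markov's inequality with $F=\frac{8}{\delta}(f_0-f^*+\sigma')$ gives probability at most $\delta/8$.
\item For $\tau_2$, bound the second moment of $\epsilon_t$ on $\{t<\tau\}$. The ZO part is controlled by \Cref{lemma:gradient-estimation}(2), giving roughly $32d_xG^2/n$ plus $\mu$ terms. The shuffling part is controlled by \Cref{ass:bounded-variance} and the reshuffling variance bound (Lemma 5 of \cite{mishchenko2020random}), giving roughly $G^2+\sigma^2/n$. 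Summing over at most $T$ epochs and applying Chebyshev/Markov with $H^2=4[200G^2d_x/n+G^2+\sigma^2/n]T/\delta$ gives probability at most $\delta/4$.
\end{itemize}
A union bound then yields $\PP(\tau<T)\le\delta/2$.

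\textbf{Step 3 (conclusion).} On $\{\tau=T\}$, Step 1 gives $\frac1T\sum_{t<T}\|\nabla f\|^2\le \frac{4(f_0-f^*+\sigma')}{nT\min(\eta_x,\eta_y)}$. Substitute the largest admissible step sizes, which are of order $1/(n\sqrt T)$ in the variance-dominated regime and of order $1/(nL)$ otherwise. The $T^{-1/2}$ part requires $T\gtrsim\epsilon^{-4}(f_0-f^*+3)/n$, and the $T^{-1}$ part together with the $G^2$ contribution coming from $\delta$ requires $T\gtrsim\epsilon^{-2}(2/\delta+G^2/8)$. This recovers the stated lower bound on $T$. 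Finally, convert the conditional statement into the claimed probability-$(1-\delta)$ statement by splitting the expectation on $\{\tau=T\}$.

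\textbf{Main obstacle.} The delicate part is Step 2 for $\tau_2$, in particular ensuring that the inner iterates $(x_{t,i},y_{t,i})$, and not only the epoch endpoints, stay inside the region where the local Lipschitz constants hold. This is needed to justify both \Cref{lemma:gradient-estimation} and the $L_{\cdot,\max}$ bounds. I would first try to show that $\eta\le 1/(2L_{\max}n)$ keeps every inner iterate within distance $G/L_x$ (respectively $G/L_y$) of $(x_t,y_t)$, so that \Cref{lemma:3.3} applies. Failing that, I would enlarge $F$ slightly to a sublevel set $\mathcal{G}_{2F}$ that contains these inner iterates.
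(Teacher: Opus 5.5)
Your Steps 1 and 2 match the paper's proof. Both use the same stopping time and \Cref{lemma:2-hybrid}, with the step-size and $\mu$ choices that make $\sigma'$ constant-order. Both bound $\tau_1$ by Markov's inequality and $\tau_2$ by Chebyshev plus a union bound over epochs, giving $\PP(\tau<T)\le\delta/2$.

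The gap is Step 3, which is exactly where the stated condition on $T$ has to come from. Step 1 is an in-expectation statement, $\EE\sum_{t<\tau}\|\nabla f(x_t,y_t)\|^2\le 4(f_0-f^*+\sigma')/(n\min\{\eta_x,\eta_y\})$. It does not give the pathwise bound you write ``on $\{\tau=T\}$'', so you still need a tail bound. The obvious one, Markov, puts a factor $1/\delta$ in front of the $\epsilon^{-4}/n$ term. Nothing in your argument produces the terms $\epsilon^{-2}(2/\delta+G^2/8)$. Moreover, your substitution $\min\{\eta_x,\eta_y\}\asymp\sqrt{2/T}/(\sigma n L_{\max})$, with $L_{\max}:=\max\{L_{x,\max},L_{y,\max}\}$, actually yields $T\gtrsim 8\sigma^2L_{\max}^2(f_0-f^*+\sigma')^2\epsilon^{-4}$. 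That has no $1/n$ and is quadratic in the initial gap, so the claim that it ``recovers the stated lower bound'' does not hold.

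The missing ingredient is the paper's separate bound on $\PP(A^c\mid B)$, where $A=\{\frac1T\sum_{t<T}\|\nabla f(x_t,y_t)\|^2\le\epsilon^2\}$ and $B=\{\tau\ge T\}$. The paper applies Markov's inequality to $\exp(\sum_{t<T}\|\nabla f(x_t,y_t)\|^2)$. It then uses Hoeffding's lemma, with $\|\nabla f(x_t,y_t)\|\le G$ on $B$, followed by \Cref{lemma:2-hybrid}. This gives $\PP(A^c\mid B)\le\exp\bigl(\frac{f_0-f^*+\sigma'}{\eta_{\min}n}+\frac{G^2}{8}-T\epsilon^2\bigr)$ with $\eta_{\min}=\min\{\eta_x/4,\eta_y/3\}$. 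The paper then forces this below $\delta/2$, using $\ln(2/\delta)\le 2/\delta$ and $\sigma'\le 2\eta_x+\eta_y\le 3$. It also fixes $\eta_{\min}=\epsilon^2$ rather than plugging in the largest admissible step. Together these produce $\epsilon^{-2}[2/\delta+G^2/8]+\epsilon^{-4}(f_0-f^*+3)/n$. The conclusion is then $\PP(A)\ge(1-\PP(A^c\mid B))(1-\PP(B^c))\ge(1-\delta/2)^2\ge 1-\delta$.

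If you adopt this route, scrutinize the Hoeffding step. For a quantity ranging over $[0,TG^2]$, the lemma gives the exponent $T^2G^4/8$, not $G^2/8$. So decoupling $\delta$ from $\epsilon^{-4}$ needs a genuine concentration argument. Plain Markov is sound but only gives a $\delta^{-1}\epsilon^{-4}/n$-type condition.

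Lastly, the inner-iterate issue you single out is real, but it is not where you differ from the paper. The paper also absorbs it into \Cref{lemma:2-hybrid} and the constants of \Cref{lemma:0}. Note that $\eta\le 1/(2L_{\max}n)$ alone cannot confine the inner iterates. The estimator $\hat{\nabla}_x f$ is unbounded, and $\|\epsilon_t\|\le H$, with $H\propto\sqrt{T/\delta}$, only controls the epoch-averaged step.
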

\begin{proof} 
    Let  $A:= \left\{ \frac{1}{T} \sum_{t<T}\left\|\nabla f\left(x_t, y_t\right)\right\|^2 \leq \epsilon^2  \right\}$
    and  $B:= \{ \tau \geq  T\ \}$ be two events. We consider the following lower bound of the probability of event $A$ by conditioning it on the event $B$:
    \begin{align*}
        \PP(A) \geq \PP(A \cap B) &= \PP(A | B) \PP(B)\\ 
        &\geq [  1 - \PP(A^c |B ) ]  [1 - \PP(B^c)].
    \end{align*}
    Our goal is to show that the probability of  $\left\{ \frac{1}{T} \sum_{t<T}\left\|\nabla f\left(x_t, y_t\right)\right\|^2 >  \epsilon^2 \Big| \tau \geq T \right\}$ (the event $A^c |B$) and  $\{ \tau < T\}$ (the event $B^c$) are both small. We bound each term separately.  
    \begin{itemize}
        \item First, we bound the probability of $\left\{ \frac{1}{T} \sum_{t<T}\left\|\nabla f\left(x_t, y_t\right)\right\|^2 >  \epsilon^2 \Big| \tau \geq T \right\}$. By \Cref{lemma:2-hybrid}, we let
        \begin{align}\label{eq:sigma-prime}
            \sigma' =  \left[\frac{\sigma^2}{2} n^2  \left[ \eta_y^3 L^2_{y,\max}  +  \eta_x^3 L^2_{x,\max} \right]    + o(\mu)  \right] T.
        \end{align}  
If the event is conditioned on $\tau \geq T$, we always have $\left\|\nabla f\left(x_t\right)\right\| \leq G$ for $t=1,2,\dots,T-1$, where $G$ is determined by \Cref{lemma:coro3.6}. Then we obtain
        \begin{align*}
            \PP\left(   \sum_{t<T}\left\|\nabla f\left(x_t, y_t\right)\right\|^2 > c  \Big| \tau \geq T  \right) &\overset{(i)}{\leq }  \PP\left(   e^{\sum_{t<T}\left\|\nabla f\left(x_t, y_t\right)\right\|^2} > e^c  \Big| \tau \geq T  \right) \\
            &\overset{(ii)}{\leq }  \EE\left[ e^{\sum_{t<T}\left\|\nabla f\left(x_t, y_t\right)\right\|^2  }  
\Big| \tau \geq T   \right] / e^c  \\ 
&\overset{(iii)}{\leq }   \exp\left(  \sum_{t<T}\EE \left\|\nabla f\left(x_t\right)\right\|^2  + \frac{G^2}{8}  \right)/ e^c  \\ 
&\overset{(iv)}{\leq }    \exp\left( \frac{1}{\eta_{\min} n}[f_0 -f^* + \sigma'] + \frac{G^2}{8} - c \right). 
        \end{align*}
        where (i) takes exponential on both sides, (ii) applies the Markov inequality,  (iii) applies the Hoeffding's lemma, (iv) applies \Cref{lemma:2-hybrid} with setting $\eta_{\min} = \min\{ \frac{\eta_x}{4}, \frac{\eta_y}{3} \}$ and $f_0:=f(x_0,y_0)$.   
        
        Before we evaluate the necessary $T$, we need to choose hyper-parameters to make $\sigma'$ less than some constant independent of $d$, $n$, or other crucial constants. To do so, we set 
        \begin{align*}
            \eta_x \leq  \sqrt{\frac{2}{T}} \frac{1}{\sigma n L_{x,\max}},  \quad
            \eta_y \leq  \sqrt{\frac{2}{T}} \frac{1}{\sigma n L_{y,\max}},  \quad
            \mu \leq  \frac{1}{3L_x T n d_x G } .  
        \end{align*}
        Then we obtain $\sigma' \leq   2\eta_x + \eta_y .$
        Let $c = T \epsilon^2$ and $e^{ \frac{1}{\eta_{\min} n}[f_0 -f^* + 2\eta_x + \eta_y] + \frac{G^2}{8}} e^{-c} \leq  \frac{\delta}{2}$. Then it solves
        \begin{align*}
            \epsilon^{2} T \geq &\ln(\frac{2}{\delta}) + \frac{G^2}{8} + \frac{1}{\eta_{\min} n }[f_0 - f^* + 2\eta_x + \eta_y]  \\ 
            T \geq &\epsilon^{-2} \left[\frac{2}{\delta} + \frac{G^2}{8}\right] +  \epsilon^{-2}\left[\frac{f_0 - f^* + 2\eta_x + \eta_y}{\eta_{\min} n }\right].
        \end{align*}
        

        \item Then we bound the probability $\PP(B^c) = \PP(\tau < T).$
        Recap that we consider the stopping time defined as $\tau=\tau_1 \wedge \tau_2$, where $\tau_1:=\min_t \{ t \mid f(x_{t+1}, y_{t+1}) - f^* > F\} \wedge T$ and $\tau_2:= \min_t \{ t \mid  \|\epsilon_t\| > H \}  \wedge T$. Here, $\epsilon_t$ is defined as 
\begin{align}\label{eq:epsilon}
    \epsilon_t = \underbrace{\frac{1}{n}\sum_{i=1}^n \hat{\nabla}f(x_{t,i}; \xi_{t,i}) - \frac{1}{n}\sum_{i=1}^n  {\nabla}f(x_{t,i}; \xi_{t,i}) }_{\text{est. err.}}+ \underbrace{\frac{1}{n}\sum_{i=1}^n  {\nabla}f(x_{t,i}; \xi_{t,i})  - \nabla f(x_t)}_{\text{stoc. err.}} . 
\end{align} 
We note that for the last $d_y$ entries, the estimation error term is $0$ since we do not apply gradient estimation for this part.  Both $F$ and $H$ in the definition of stopping times will be determined later. Then we notice that 
    \begin{align*}
        \PP(B^c) = \PP(\tau < T)&= \PP(\{\tau_1 < T\} \cup  \{\tau_2 < T\}) \\ 
        &= \PP(\tau_2 < T) + \PP(\tau_1 < T, \tau_2 \geq T).
    \end{align*} 
    We bound each term separately as follows: 
    \begin{itemize}
        \item Choose $H$ such that $\PP(\tau_2 < T) \leq \frac{\delta}{4}$: We have
        \begin{align*}
            \PP(\tau_2 < T ) &= \PP\left( \bigcup_{t<T} \left\{ \| \epsilon_t\| > H  \right\} \right) \\ 
            &\leq \sum_{t<T} \PP\left(   \| \epsilon_t\| > H   \right) \\ 
            &\overset{(i)}{\leq } \sum_{t<T} \frac{\frac{3}{n^2}\EE \|  g_t - \hat{g}_t \|^2 + 3 \EE \| \frac{g_t}{n} - \nabla_x f(x_t, y_t) \|^2 + 3 \EE \|\frac{h_t}{n} -   \nabla_y f(x_t, y_t) \|^2 }{H^2} \\ 
            &\overset{(ii)}{\leq } \Bigg[\frac{3}{n} \left[ 
64d \|\nabla_x f(x_t,y_t)\|^2 + 216\mu^2 L_{x,\max}^2 d_x^4  \right]/H^2 \\ 
&\quad + \left(3 L_{x,\max}^2 \eta^2_x + 3 L_{y,\max}^2 \eta^2_y \right)\left[ n^2 G^2 + n \sigma^2 \right]/H^2 \Bigg]   T \\ 
&\overset{(iii)}{\leq} \frac{\left[200 G^2 \frac{d_x}{n} + 2G^2 + \frac{\sigma^2}{n}\right]T}{H^2} 
        \end{align*}
        where (i) applies the Markov inequality,  (ii) applies \Cref{lemma:gradient-estimation} and Lemma 5 from \cite{mishchenko2020random}, and (iii) we choose a sufficiently small 
        $\mu\leq  \frac{8G}{L_{x,\max}d_x^{3/2}}$  and learning rates $\eta_x\leq \frac{1}{\sqrt{3}L_{x,\max} n}$ and $\eta_y \leq \frac{1}{\sqrt{3}L_{y,\max} n}$ to simplify the upper bound. Then we choose $\frac{\left[200 G^2 \frac{d_x}{n} + 2G^2 + \frac{\sigma^2}{n}\right]T}{H^2}  = \frac{\delta}{4}$. It solves
        \begin{align}\label{eq:define-H}
            H = 2  \sqrt{\frac{ [200 G^2 \frac{d_x}{n} + G^2 + \frac{\sigma^2}{n}] T}{ \delta}}.
        \end{align}

        \item Choose $F$ such that $\PP(\tau_1 < T, \tau_2 \geq T)\leq \frac{\delta}{4}$. Because {$\{ \tau_1 < T, \tau_2 \geq T \} \subset \{ f(x_\tau, y_\tau) - f^* > \frac{F}{2}\}$},
        \begin{align*}
            \PP(\tau_1 < T, \tau_2 \geq T) &\leq \PP(f(x_\tau, y_\tau) - f^* > \frac{F}{2}) \\ 
            &\overset{(i)}{\leq } 2\EE[f(x_\tau, y_\tau) - f^*]/ F \\ 
            &\leq 2[f_0 - f^* + \sigma']/ F.
        \end{align*}
        where (i) applies the Markov inequality. Let $\frac{\delta}{4} = 2[f(x_0) - f^* + \sigma']/ F$. It solves 
        \begin{align}\label{eq:define-F}
            F= \frac{8}{\delta}[f(x_0) - f^* + \sigma'].
        \end{align} 

    \end{itemize}
    Combining both upper bounds with choosing $H$ and $F$ defined by \Cref{eq:define-H} and \Cref{eq:define-F}, respectively, we have
    \begin{align*}
        \PP(B^c) = \PP(\tau < T)\leq \frac{\delta}{2}. 
    \end{align*} 
    \end{itemize} 
    Then we obtain the lower bound of $\PP(A \cap B)$ as follows:
    \begin{align*}
        \PP(A \cap B) &= \PP(A | B) \PP(B) \geq [  1 - \PP(A^c |B ) ]  [1 - \PP(B^c)]\\ 
        &\geq [  1 - \frac{\delta}{2} ]  [1 - \frac{\delta}{2}]  = 1 - \delta + \frac{\delta^2}{4} \\ 
        &\geq 1- \delta. 
    \end{align*} 
    Lastly, we discuss the hyper-parameter choices and the epoch complexity. To make \Cref{lemma:2-hybrid} hold, we have set  
    $\eta_x \leq \min\{\frac{1}{2 L_{x,\max} n},  \frac{1}{384 L_x n d_x}\}$ and $\eta_y \leq  \frac{1}{2 L_{y,\max} n}$ and the perturbation stepsize $\mu \leq \frac{G}{L_x} \frac{6}{d_x^{3/2}}$. When bounding  the probability of $\left\{ \frac{1}{T} \sum_{t<T}\left\|\nabla f\left(x_t, y_t\right)\right\|^2 >  \epsilon^2 \Big| \tau \geq T \right\}$ and the    probability of $\PP(\tau < T)$, we additionally require 
        \begin{align*}
            \eta_x &\leq  \min\{ \sqrt{\frac{2}{T}} \frac{1}{\sigma n L_{x,\max}}, \frac{1}{\sqrt{3}L_{x,\max} n} \} ,\\ 
            \eta_y &\leq  \min\{ \sqrt{\frac{2}{T}} \frac{1}{\sigma n L_{y,\max}},  \frac{1}{\sqrt{3}L_{y,\max} n} \},  \\
            \mu &\leq  \min\{ \frac{1}{3L_x T n d_x G } ,     \frac{8G}{L_{x,\max}d_x^{3/2}}\}.    
        \end{align*}  
        Therefore, in summary, we have
        \begin{align*}
        \eta_x &\leq \min\left\{\frac{1}{2 L_{x,\max} n},  \frac{1}{384 L_x n d_x}, \sqrt{\frac{2}{T}} \frac{1}{\sigma n L_{x,\max}} \right\} ,\\ 
        \eta_y &\leq \min\left\{\frac{1}{2 L_{y,\max} n} ,  \sqrt{\frac{2}{T}} \frac{1}{\sigma n L_{y,\max}}  \right\} ,\\
        \mu &\leq \min\left\{\frac{G}{L_x} \frac{6}{d_x^{3/2}}, \frac{1}{3L_x T n d_x G }  \right\}.
        \end{align*}   
        Under these hyper-parameter choices, we also need to require
        $$ T \geq \epsilon^{-2} \left[\frac{2}{\delta} + \frac{G^2}{8}\right] +  \epsilon^{-2}\left[\frac{f_0 - f^* + 2\eta_x + \eta_y}{\eta_{\min} n }\right],$$
        where $\eta_{\min} = \min\{ \frac{\eta_x}{4}, \frac{\eta_y}{3} \}$, to ensure that  the probability of $\left\{ \frac{1}{T} \sum_{t<T}\left\|\nabla f\left(x_t, y_t\right)\right\|^2 >  \epsilon^2 \Big| \tau \geq T \right\}$ is small (less than $\frac{\delta}{2}$).  We observe that by simply setting $\eta_{\min} \leq \epsilon^2 $ (we can always make it by choosing $T\geq \Theta(\frac{\epsilon^{-4}}{n^2})$), the above condition on $T$ degenerates to  $T \geq  \Theta(\frac{\epsilon^{-4}}{n})$. Therefore, it concludes that if $T = \Theta(\epsilon^{-4}/n)$, with the probability at least $1-\delta$,
    $$ \frac{1}{T} \sum_{t<T}\left\|\nabla f\left(x_t\right)\right\|^2 \leq \epsilon^2.$$
    Then the proof is completed. 

    Here, we discuss how we determine the optimal value $\eta_{\min} = \Theta( \epsilon^2)$. In general, we can set $\eta_{\min} = \Theta(\epsilon^{\alpha})$, which leads to the condition on $T$: $T \geq \Theta(\epsilon^{-2-\alpha})$. A smaller $\alpha$ is always better. However, we need to ensure the learning rate condition is satisfied; that is,
    $\eta_{\min} \leq \Theta(\sqrt{\frac{1}{T}}).$
    It solves $T \leq \Theta(\epsilon^{-2\alpha})$.  We let $\epsilon^{-2\alpha} \geq  {\epsilon^{-2-\alpha}}$, which solves $\alpha\geq 2$. Therefore, when $\eta_{\min} = \Theta( \epsilon^2)$, the complexity is optimal and attainable. 
\end{proof}



\begin{table*}[t]
\centering
\resizebox{\textwidth}{!}{
\begin{tabular}{cccccccc}
\toprule
            &                       & \multicolumn{2}{c}{\textbf{SST2}}
            & \multicolumn{2}{c}{\textbf{Copa}}
            & \multicolumn{2}{c}{\textbf{WinoGrande}} \\
\cmidrule(lr){3-4}\cmidrule(lr){5-6}\cmidrule(lr){7-8}
            &                       & steps         & $\mu$
            & steps         & $\mu$
            & steps         & $\mu$ \\ 
\midrule
\multirow{9}{*}{\textbf{Llama-2-7b}}
  & ZO-FT           & \cellcolor[HTML]{C0C0C0}$1.1\times10^{4}$ & $10^{-5}$ & \cellcolor[HTML]{C0C0C0}$1.6\times10^{4}$ & $10^{-4}$ & \cellcolor[HTML]{C0C0C0}$1.8\times10^{4}$ & $10^{-5}$ \\
  \cmidrule(lr){2-8}
  & FO-Prompt       & \cellcolor[HTML]{C0C0C0}$6\times10^{3}$   & /         & \cellcolor[HTML]{C0C0C0}$9\times10^{3}$   & /         & \cellcolor[HTML]{C0C0C0}$9\times10^{3}$   & /         \\
  & Hybrid-Prompt   & \cellcolor[HTML]{C0C0C0}$1.5\times10^{3}$ & $10^{-5}$ & \cellcolor[HTML]{C0C0C0}$5\times10^{3}$   & $10^{-5}$ & \cellcolor[HTML]{C0C0C0}$9\times10^{3}$   & $10^{-5}$ \\ 
\cmidrule(lr){2-8}
  & FO-Prefix       & \cellcolor[HTML]{C0C0C0}$2\times10^{4}$   & /         & \cellcolor[HTML]{C0C0C0}$1.5\times10^{4}$ & /         & $3\times10^{3}$                           & /         \\
  & Hybrid-Prefix   & \cellcolor[HTML]{C0C0C0}$9.5\times10^{3}$ & $10^{-5}$ & \cellcolor[HTML]{C0C0C0}$7.5\times10^{3}$ & $10^{-5}$ & $9\times10^{3}$                           & $10^{-5}$ \\ 
\cmidrule(lr){2-8}
  & FO-Lora         & \cellcolor[HTML]{C0C0C0}$2\times10^{4}$   & /         & $2.5\times10^{3}$                         & /         & $2.5\times10^{3}$                         & /         \\
  & Hybrid-Lora     & \cellcolor[HTML]{C0C0C0}$1.6\times10^{4}$ & $10^{-5}$ & $1.15\times10^{4}$                        & $10^{-5}$ & $4.5\times10^{3}$                         & $10^{-5}$ \\
\midrule
\multirow{8}{*}{\textbf{Vicuna-7b-v1.5}}
  & ZO-FT           & \cellcolor[HTML]{C0C0C0}$1.0\times10^{4}$ & $10^{-5}$ & \cellcolor[HTML]{C0C0C0}$7\times10^{3}$   & $10^{-5}$ & $1.75\times10^{4}$                        & $10^{-5}$ \\
  \cmidrule(lr){2-8}
  & FO-Prompt       & \cellcolor[HTML]{C0C0C0}$2\times10^{4}$   & /         & \cellcolor[HTML]{C0C0C0}$1.3\times10^{4}$ & /         & $2\times10^{4}$                           & /         \\
  & Hybrid-Prompt   & \cellcolor[HTML]{C0C0C0}$2\times10^{3}$   & $10^{-5}$ & \cellcolor[HTML]{C0C0C0}$1.5\times10^{3}$ & $10^{-5}$ & $2\times10^{4}$                           & $10^{-6}$ \\ 
\cmidrule(lr){2-8}
  & FO-Prefix       & $2\times10^{3}$                           & /         & \cellcolor[HTML]{C0C0C0}$2\times10^{4}$   & /         & \cellcolor[HTML]{C0C0C0}$2\times10^{4}$   & /         \\
  & Hybrid-Prefix   & \cellcolor[HTML]{C0C0C0}$2\times10^{4}$   & $10^{-5}$ & \cellcolor[HTML]{C0C0C0}$1.7\times10^{4}$ & $10^{-5}$ & $4\times10^{3}$                           & $10^{-5}$ \\ 
\cmidrule(lr){2-8}
  & FO-Lora         & $2\times10^{3}$                           & /         & $9\times10^{3}$                           & /         & $3.5\times10^{3}$                         & /         \\
  & Hybrid-Lora     & $2\times10^{4}$                           & $10^{-5}$ & \cellcolor[HTML]{C0C0C0}$2.5\times10^{3}$ & $10^{-5}$ & $3\times10^{3}$                           & $10^{-5}$ \\ 
\midrule
\multirow{9}{*}{\textbf{OPT-1.3b}}
  & ZO-FT           & \cellcolor[HTML]{C0C0C0}$2\times10^{4}$   & $10^{-5}$ & $8.5\times10^{3}$                         & $10^{-4}$ & $8\times10^{3}$                           & $10^{-5}$ \\
  \cmidrule(lr){2-8}
  & FO-Prompt       & \cellcolor[HTML]{C0C0C0}$2\times10^{4}$   & /         & \cellcolor[HTML]{C0C0C0}$1.6\times10^{4}$ & /         & $9.5\times10^{3}$                         & /         \\
  & Hybrid-Prompt   & \cellcolor[HTML]{C0C0C0}$2\times10^{4}$   & $10^{-5}$ & \cellcolor[HTML]{C0C0C0}$2\times10^{4}$   & $10^{-5}$ & $1.4\times10^{4}$                         & $10^{-5}$ \\ 
\cmidrule(lr){2-8}
  & FO-Lora         & $3\times10^{3}$                           & /         & $1.9\times10^{4}$                         & /         & \cellcolor[HTML]{C0C0C0}$1.45\times10^{4}$& /         \\
  & Hybrid-Lora     & $4\times10^{3}$                           & $10^{-5}$ & $1.9\times10^{4}$                         & $10^{-5}$ & \cellcolor[HTML]{C0C0C0}$3\times10^{3}$   & $5\times10^{-4}$ \\ 
\cmidrule(lr){2-8}
  & FO-Prefix       & \cellcolor[HTML]{C0C0C0}$2\times10^{4}$   & /         & $2\times10^{4}$                           & /         & $9.5\times10^{3}$                         & /         \\
  & Hybrid-Prefix   & \cellcolor[HTML]{C0C0C0}$8.5\times10^{3}$ & $10^{-5}$ & $1.15\times10^{4}$                        & $10^{-5}$ & $2\times10^{4}$                           & $10^{-5}$ \\ 
\bottomrule
\end{tabular}
}%

\caption{A detailed breakdown of the optimal hyperparameters including  training steps and $\mu$ specified in \Cref{two_side} and training specifics for each fine-tuning method applied to different model architectures across SST2, Copa, and WinoGrande tasks. Highlighted cells indicate efficient training processes, showcasing the reduced steps required by hybrid approaches to achieve optimal performance. }
\label{learning_rates}
\end{table*}  
\section{Experimental Details}\label{appendix:experiments}
In this paper, we evaluate our proposed hybrid-tuning method across a diverse spectrum of scenarios including three distinct tasks, three transformer-based language models, and three PEFT methods. This extensive exploration not only demonstrates the broad applicability of our approach but also provides robust evidence for its effectiveness and versatility in enhancing model performance across various domains and architectures. In this section, we will briefly review these components and delve into more details of our experiment settings.

\subsection{Overview of Tasks}\label{section:tasks}
In this section, we briefly discuss the task we consider in our paper. All of tasks are ready to use in the ZO-Bench code base \cite{zhang2024revisiting} and we follow the default setting and the same train/test/validation split of their original implementations.  

\paragraph{Text Binary Classification}  In this paper, we consider the Stanford Sentiment Treebank v2 (SST2) dataset \cite{socher2013recursive} and  the Word-In-Context (WIC) dataset \cite{pilehvar2018wic}, which presents the simplest binary text classification problem. The SST2 dataset is sufficiently simple and convenience to use to verify our motivating  examples (as demonstrated in \Cref{fig:smoothness-local} and \Cref{fig:module-smoothness}). The WIC dataset provides a more challenging task that requires understanding word meanings in different contexts.  Both datasets serve as excellent benchmarks for evaluating the performance of our proposed methods in binary text classification tasks.

\paragraph{Question Answering} The Choice Of Plausible Alternatives (COPA) dataset \cite{roemmele2011choice} is a common benchmark for evaluating the commonsense causal reasoning ability of a language model. It contains one thousand English-language questions answer pairs. We choose this task to evaluate our approaches in improving the question-answering capabilities of models, particularly in scenarios requiring causal inference and commonsense reasoning.

\paragraph{Common Sense Reasoning Task} We consider the WinoGrande dataset \cite{sakaguchi2021winogrande} and the Winograd Schema Challenge (WSC) dataset \cite{levesque2012winograd}, which present a challenging common sense reasoning task. The WSC dataset is designed to evaluate machine understanding and reasoning capabilities by presenting pronoun disambiguation problems that require human-like inference. The WinoGrande dataset is designed to be a more difficult and larger-scale version of the original Winograd Schema Challenge, requiring models to demonstrate human-like reasoning capabilities. 
By including WSC and WinoGrande in our experiments, we aim to assess how well our approaches can enhance a model's ability to reason about complex scenarios and make appropriate inferences based on contextual information.


\subsection{Overview of PEFT Modules}\label{section:peft}
In this paper, we mainly consider three types of PEFT modules.  In our proposed hybrid-tuning approach, we jointly train one of these PEFT modules with the base LLM to improve the convergence and overall performance. The following paragraphs provide a detailed overview of the three main PEFT modules considered in this study: Prompt Tuning, Prefix Tuning, and Low-Rank Adaptation (LoRA).  In our experiments, we follow the default configuration of Zo-Bench code base \cite{zhang2024revisiting} without making additional modifications. It is worth noting that our hybrid-tuning methods are also applicable to other recently developed PEFT techniques including (1) other LoRA variants such as X-LoRA \cite{buehler2024x}, Llama-Adapter \cite{zhang2023llama}, AdaLoRA \cite{zhang2023adalora}, LoHa \cite{hyeon2021fedpara}, and LoKr \cite{yeh2023navigating}; (2) other soft prompts techniques such as P-tuning \cite{liu2021p,liu2023gpt}; and (3) Infused Adapter by Inhibiting and Amplifying Inner Activation (IA3) methods \cite{liu2022few}.

\paragraph{Prompt Tuning} Prompt tuning \cite{lester2021power} is a lightweight fine-tuning method that prepends trainable continuous prompt tokens to the input. These prompt tokens are optimized during training while keeping the pre-trained language model parameters frozen. This approach allows for task-specific adaptation with a small number of parameters. Prompt tuning is particularly effective for large language models and can be seen as a form of soft prompting that learns optimal input representations for specific tasks.

\paragraph{Prefix Tuning} Prefix tuning \cite{li2021prefix} extends the concept of prompt tuning by adding trainable prefix tokens not only to the input but to each layer of the transformer model. This method prepends a trainable continuous prefix to the keys and values of the self-attention layers in each transformer block. By doing so, prefix tuning allows for more flexible and expressive task-specific adaptations compared to prompt tuning, while still maintaining a relatively small number of trainable parameters. 

\paragraph{LoRA} Low-Rank Adaptation (LoRA) \cite{hu2021lora} is a parameter-efficient fine-tuning method that adds low-rank decomposition matrices to the weights of the pre-trained model. Instead of directly updating the model's weight matrices, LoRA introduces pairs of rank decomposition matrices for each weight matrix being tuned. These low-rank matrices are initialized randomly and trained to adapt the model to specific tasks. LoRA significantly reduces the number of trainable parameters while maintaining competitive performance compared to full fine-tuning. It offers several advantages, including faster training, lower memory requirements, and the ability to switch between multiple fine-tuned tasks efficiently by changing only the LoRA parameters.

\subsection{Convergence of Hybrid Fine-Tuning}\label{section:loss-curve}
In this subsection, we present the training curves (including the training loss, validation accuracy, and the test accuracy) for OPT-1.3B \cite{zhang2022opt} model on SST-2 \cite{socher2013recursive} dataset in \Cref{fig:sst2-opt-prompt}. We observe that a significant efficiency gain in terms of training steps. The hybrid method consistently achieves optimal performance regarding the training loss.  This trend is observed across different tasks, PEFT methods, and model architectures, suggesting that the efficiency of hybrid tuning scales well (e.g. for Vicuna-7b-v1.5 model on the WinoGrande dataset in \Cref{fig:wino-vicu-prompt-appendix}).  A detailed breakdown of is provided in \Cref{learning_rates}.

\begin{figure}[t]
    \centering
    \begin{subfigure}{\linewidth}
        \centering
        \includegraphics[width=\linewidth]{figs/sst2-opt-prompt.png}
        \caption{Training curves for OPT-1.3B model with the prompt tuning on the SST2 dataset with using the optimal hyper-parameter indicated in \Cref{learning_rates}. The hybrid-tuning achieves the significant better performance. Notably, this phenomenon is also observed in other tasks and for other model architectures.}
        \label{fig:sst2-opt-prompt-appendix}
    \end{subfigure} 
    \begin{subfigure}{\linewidth}
        \centering
        \includegraphics[width=\linewidth]{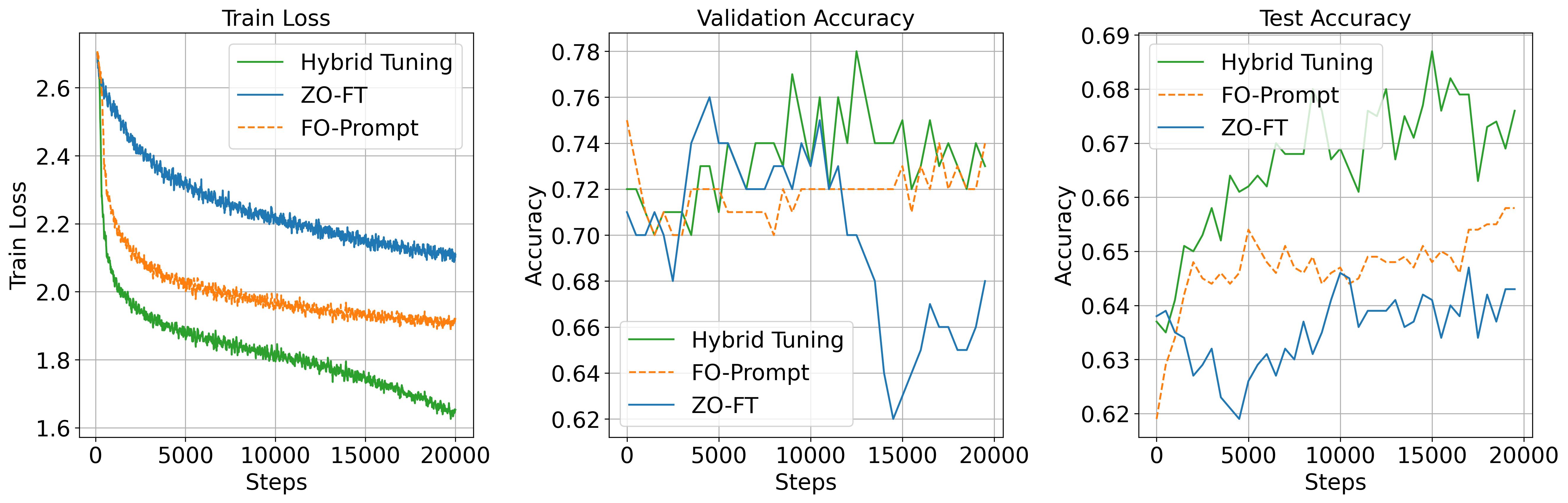}
        \caption{Training curves for Vicuna-7b-v1.5 model with the prompt tuning on the WinoGrande dataset.  }
        \label{fig:wino-vicu-prompt-appendix}
    \end{subfigure}
    
    \caption{Comparison of training curves for different models and datasets. These results demonstrate that the similar outperformance of hybrid-tuning is observed across various model architectures and NLP tasks.}
    \label{fig:combined-figures}
\end{figure}

\subsection{Estimating Smoothness}
In \Cref{fig:smoothness-local} and \Cref{fig:module-smoothness}, the smoothness of the loss landscape of the OPT-125M (and the LoRA module) is estimated by approximating the norm of Hessian matrix at the stochastic data point using the zeroth-order gradient estimation to the Hessian-vector products (HVPs):
\begin{align*}
     \text{Hessian}(x)^\top v \approx  \sum_{\xi \in \text{Batch}}\frac{\nabla f(x + hv;\xi) - \nabla f(x;\xi)}{h} ,
\end{align*}
where $\nabla f(x;\xi)$ is the stochastic gradient at $x$ for the data point $\xi$ in the given data batch, $h$ is a small perturbation size, and $v$ is a random unit vector.  We estimate the Frobenius norm $\|\text{Hessian}(x)\|_F \approx \sqrt{\EE v^\top H^2 v}$ of the Hessian by sampling multiple random vectors and computing these HVPs.

For \Cref{fig:smoothness-local}, we initialize the parameter of pre-trained binary classification OPT-125M model and train it over the SST2 dataset for $5000$ steps with setting the learning rate $\eta = 5\times 10^{-5}$ and the batch size $8$. We sample $100$ independent vectors from the unit sphere to estimate the HVP with the perturbation $h=10^{-5}$ and obtain the Hessian norm as the approximation of the local smoothness constant $L$.

For \Cref{fig:module-smoothness}, we initialize the parameter of pre-trained binary classification OPT-125M model  as the base model and randomly initialize the LoRA module with the rank $16$ and the LoRA Alpha $32$ (the detailed configuration can be found in the source code) and jointly train both components over the SST2 dataset for $5000$ steps with setting the learning rate $\eta = 5\times 10^{-5}$ and the batch size $8$.We collect all parameters along the SGD trajectories. We perturb the parameter of  the base LLM and the LoRA module, respectively, with $100$ independent vectors from the unit sphere and the perturbation $h=10^{-5}$ to estimate the smoothness. 



\subsection{Omitted Experimental Settings}
Following the methodology of \cite{malladi2023fine,zhang2024revisiting}, we assessed our approach on $6$ representative NLP tasks including the sentiment classification task on the SST2 dataset \cite{socher2013recursive}, the sentence differing task on the WSC dataset \cite{levesque2012winograd}, contextualized word and sense representation and word sense disambiguation task on the WiC dataset \cite{pilehvar2018wic}, the question answering task on the COPA dataset \cite{roemmele2011choice}, and the common sense reasoning task on the WinoGrande dataset \cite{sakaguchi2021winogrande}. The models we use in our experiments include OPT-1.3b \cite{zhang2022opt}, Vicuna-7b \cite{chiang2023vicuna}, and LLaMA-7b \cite{zhang2023llama}. We compare the performance of our approach against standard PEFT methods including first-order prompt tuning \cite{lester2021power}, LoRA tuning \cite{hu2021lora}, and prefix tuning \cite{li2021prefix}. For each dataset, we randomly sample 1,000 examples for training, 1,000 examples for evaluation, and 100 examples for development. Performance is evaluated using accuracy or F1 score, as appropriate for each task. All experiments utilize SGD as the optimizer. In the case of prompt tuning and prefix tuning, the prompts are initialized according to the predefined settings in Table E.2 of \cite{malladi2023fine}, while for LoRA tuning, we initialize with zeros. We perform hyperparameter tuning for all methods and report the best configurations. For all methods, we set the maximum number of training steps to 20,000, with early stopping applied when applicable. The detailed hyperparameter setting, overviews of the tasks and PEFT methods, hyper-parameter setting, and the full results are reported in the supplementary materials.   

 
 
For the zeroth-order approximation, we follow the same approach outlined by \cite{malladi2023fine}. 
In the case of prompt tuning and prefix tuning, the prompts are initialized according to the predefined settings in Table E.2 of \cite{malladi2023fine}, while for LoRA tuning, we initialize with zeros. We perform hyperparameter tuning for all methods and report the best configurations. For all methods, we set the maximum number of training steps to 20,000, with early stopping applied when applicable.  

\subsection{Hyper-Parameter Searching} 

In our experiments, we conducted systematic grid searches across all combinations of tasks, models, and PEFT methods. For FO PEFT training configurations, we primarily grid-searched the learning rate (among $0.001$, $0.0001$, $0.00001$, and $0.000001$), while maintaining fixed hyperparameters for LoRA (rank=8, alpha=16) and Prompt Tuning (10 virtual tokens). In hybrid training configurations, we adjust the search space to include both the base learning rate ($0.001$ and $0.0001$) and the zero-order (ZO) learning rate ($10^{-6}$ and $10^{-7}$), creating a two-dimensional grid search with the same number of hyper-parameter as the FO method. Notably, our hyperparameter configurations are inspired by theoretical analysis of hybrid fine-tuning (\textit{i.e.} the base LLM requires a smaller learning rate), which allows us to strategically constrain the grid-search space. This theoretical guidance not only reduces computational overhead but also demonstrates how theoretical insights can effectively streamline the practical implementation of fine-tuning procedures. 

All experiments maintained consistent training parameters including 5 epochs, batch size of 16, and 20,000 maximum steps, with evaluation performed every 500 steps. The search strategy was implemented using grid search methodology, with accuracy on the validation set as the optimization metric.
 
\section{Extended Discussions on Wall-Clock Time}

While standard zero-order methods often accelerate training by bypassing backpropagation \citep{malladi2023fine,zhang2024revisiting}, our Hybrid Tuning maintains comparable efficiency by restricting the expensive first-order updates solely to the PEFT module. This design ensures minimal computational overhead despite requiring both forward-pass estimation and backpropagation. Empirical tests on Llama-2-7b confirm that our method (2.36 sec/step) remains competitive with full-parameter FO-SGD (2.27 sec/step) (one forward pass costs 0.34 sec/step). To visualize this efficiency, \Cref{fig:time-cost} plots the training curve of OPT-1.3B (with prompt tuning) on the SST-2 dataset against wall-clock time, as the comparison against \Cref{fig:sst2-opt-prompt}. 

\begin{figure}
    \centering
    \includegraphics[width=\linewidth]{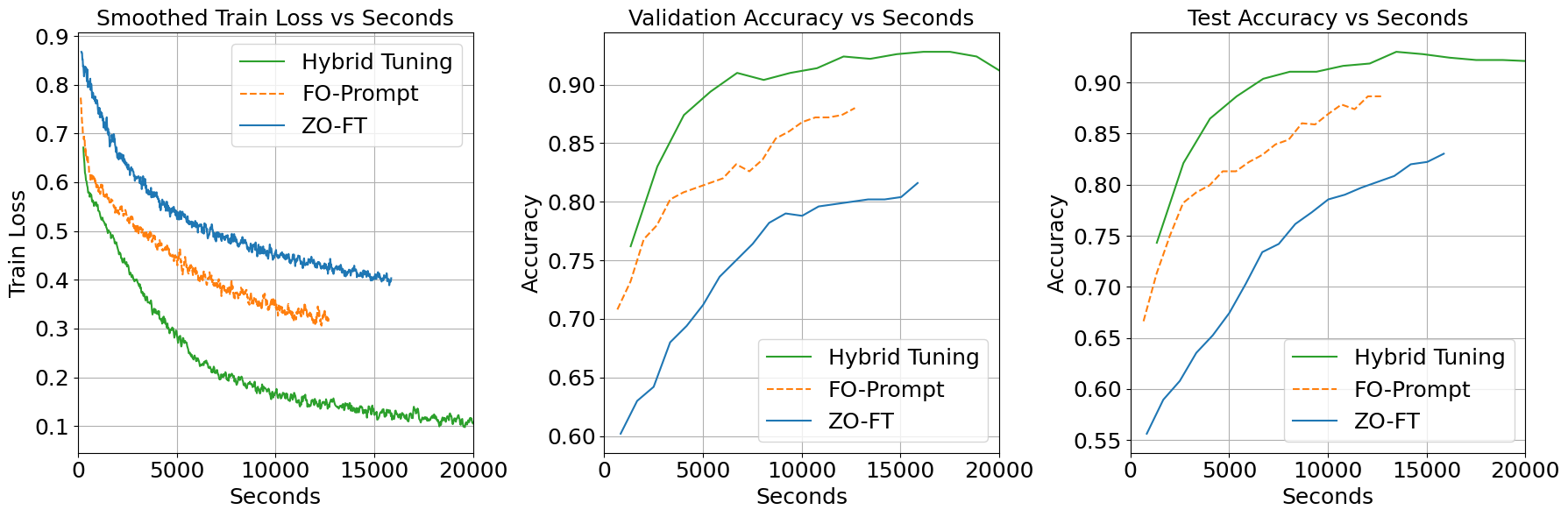}
    \caption{Training performance of OPT-1.3B on SST-2 using prompt tuning versus wall-clock time. The time cost of each method is: Hybrid Tuning (1.28 sec/step), FO-Prompt (0.80 sec/step), ZO-FT (0.64 sec/step), and one forward pass (0.15 sec/step). }
    \label{fig:time-cost}
\end{figure}

\section{Theoretical Contributions}
On the theoretical side, our work addresses key gaps in the current optimization literature.
\begin{itemize}
    \item First, we introduce and analyze SGD under a novel \textit{hybrid smoothness condition} (\Cref{def:smooth}), which generalizes both classical $L$-smoothness assumptions and $\ell$-generalized smoothness assumption; this condition better reflects the heterogeneous structure of modern hybrid models. To our best knowledge, this is the first formal treatment of SGD under such a condition. 
    \item Second, we extend the analysis to the \textit{random reshuffling} setting, marking the first convergence result that integrates generalized smoothness with reshuffling-based SGD algorithms. 
    \item Finally, we improve the known sample complexity bounds for SGD under generalized smoothness by applying sharper concentration techniques. This leads to a provable improvement in the dependence on the confidence parameter $\delta$, reducing it from $O(\epsilon^{-4}/\delta)$ to $O(\epsilon^{-2}/\delta + \epsilon^{-4})$. 
\end{itemize}

\section{Broader impacts}
The proposed hybrid fine-tuning framework has the potential to broadly impact the development and deployment of LLMs by addressing both computational efficiency and adaptability to new tasks. By combining ZO optimization for the base LLM with FO optimization for PEFT module, the approach enables scalable and memory-efficient training without sacrificing performance. This hybrid strategy introduces a novel theoretical framework---the hybrid smoothness condition---that rigorously accounts for heterogeneous parameter landscapes, offering insights relevant not only to NLP but also to general large-scale machine learning systems. The framework could facilitate broader accessibility of LLM fine-tuning in resource-constrained environments and inspire future work on hybrid optimization in other domains.

\section{Limitations}
While our hybrid fine-tuning approach demonstrates strong empirical performance and theoretical convergence guarantees, it has several limitations. First, the effectiveness of the method relies on tuning separate learning rates for the base LLM and PEFT modules, which may require additional hyperparameter search. Second, the ZO optimization used for updating the base model, though more memory-efficient, can still be computationally expensive due to repeated function evaluations, especially for large-scale models.  Finally, the current formulation is limited to joint training of LLMs with PEFT modules and may not generalize directly to other forms of model composition, such as mixture-of-agent or other multi-agent systems. Addressing these challenges remains an avenue for future research.

\section{LLM Usage}

We primarily employed a large language model (LLM) as an auxiliary tool to support the preparation of this manuscript. The LLM was used for tasks such as language polishing, improving clarity, and suggesting alternative phrasings. It did not generate new research ideas, perform data analysis, or contribute substantively to the scientific content of the work. All conceptual development, methodology, experimental design, and interpretation of results were carried out independently by the authors. The authors take full responsibility for the content of this paper, including all text revised with the aid of the LLM. The LLM is not considered an author or contributor.

\section{Conclusion}
In conclusion, this work introduces a novel hybrid fine-tuning approach for LLMs that combines zeroth-order optimization for the base model with first-order optimization for PEFT modules. Motivated by the hybrid smoothness condition of our hybrid fine-tuning system (\Cref{def:smooth}), we develop a theoretical framework centered on this theoretical challenge introduced by the hybrid fine-tuning method. Our empirical examples (\Cref{sec:challenges}) and convergence analysis (\Cref{thm:main}) demonstrate the necessity of applying different learning rates for different modules. Our analysis achieves the best-known sample complexity under much milder conditions in the existing literature. Extensive empirical evaluations across multiple NLP tasks, model architectures, and PEFT techniques validate the theoretical insights and show consistent performance gains over traditional fine-tuning methods as shown in \Cref{results}. By addressing fundamental challenges in joint LLM and PEFT training, our work opens new avenues for efficient LLM fine-tuning and provides a solid foundation for future research on optimizing hybrid systems in machine learning.

\end{document}